\documentclass[twoside,11pt]{article}

\usepackage[abbrvbib, preprint]{jmlr2e}

\usepackage{lastpage}
\usepackage{hyperref}
\usepackage{amsmath}
\usepackage{amssymb}
\usepackage{mathtools}
\usepackage{graphicx}%
\usepackage{subcaption}  
\usepackage{multirow}%
\usepackage{amsfonts}%
\usepackage{bm}%
\usepackage{mathrsfs}%
\usepackage[title]{appendix}%
\usepackage{xcolor}%
\usepackage{textcomp}%
\usepackage{manyfoot}%
\usepackage{booktabs}%
\usepackage{multirow}
\usepackage{array}  
\usepackage{makecell}
\usepackage{comment}
\usepackage{enumitem}
\usepackage[capitalize,noabbrev]{cleveref}

\newtheorem{assumption}{Assumption}   

\makeatletter
\def\@begintheorem#1#2#3{\trivlist
      \item[\hskip\labelsep{\bfseries #1\ #2.\ #3}]\itshape}
\makeatother

\newcommand{\ba}{\mathbf{a}}
\newcommand{\bb}{\mathbf{b}}
\newcommand{\bc}{\mathbf{c}}

\newcommand{\bbR}{\mathbb{R}}

\newcommand{\bbE}{\mathbb{E}}
\newcommand{\bpsi}{\bm{\psi}}

\newcommand{\cD}{\mathcal{D}}
\newcommand{\cA}{\mathcal{A}}
\newcommand{\cF}{\mathcal{F}}
\newcommand{\cP}{\mathcal{P}}
\newcommand{\cR}{\mathcal{R}}

\newcommand{\cG}{\mathcal{G}}

\newcommand{\cN}{\mathcal{N}}
\newcommand{\cS}{\mathcal{S}}
\newcommand{\cX}{\mathcal{X}}

\newcommand{\btheta}{\boldsymbol{\theta}}

\firstpageno{1}

\begin{document}

\title{Energy Score-Guided Neural Gaussian Mixture Model for \\
  	Predictive Uncertainty Quantification}

\author{\name Yang Yang \email yangyang.stat@gmail.com \\
       \addr School of Statistics and Data Science\\
       Nankai University\\
       Tianjin, China 
       \AND
       \name Chunlin Ji \email chunlin.ji@kuang-chi.com \\
       \addr Kuang-Chi Institute of Advanced Technology\\
       Shenzhen, China
       \AND
       \name Haoyang Li \email haoyang-comp.li@polyu.edu.hk \\
        \addr Department of Computing\\
        The Hong Kong Polytechnic University  \\
        Hong Kong SAR, China 
        \AND
       \name Ke Deng  \email kdeng@tsinghua.edu.cn \\
       \addr Department of Statistics and Data Science\\
       Tsinghua University\\
       Beijing, China}

\editor{}

\maketitle

\begin{abstract}
 Quantifying predictive uncertainty is essential for real-world machine learning applications, especially in scenarios requiring reliable and interpretable predictions. 
 Many common parametric approaches rely on neural networks to estimate distribution parameters by optimizing the negative log-likelihood. 
 However, these methods often encounter challenges like training instability and mode collapse, leading to poor estimates of the mean and variance of the target output distribution. 
 In this work, we propose the Neural Energy Gaussian Mixture Model (NE-GMM), a novel framework that integrates Gaussian Mixture Model (GMM) with Energy Score (ES) to enhance predictive uncertainty quantification. 
 NE-GMM leverages the flexibility of GMM to capture complex multimodal distributions and leverages the robustness of ES to ensure well-calibrated predictions in diverse scenarios. 
 We theoretically prove that the hybrid loss function satisfies the properties of a strictly proper scoring rule, ensuring alignment with the true data distribution, and establish generalization error bounds, demonstrating that the model's empirical performance closely aligns with its expected performance on unseen data.
 Extensive experiments on both synthetic and real-world datasets demonstrate the superiority of NE-GMM in terms of both predictive accuracy and uncertainty quantification.
\end{abstract}

\begin{keywords}
  Uncertainty quantification, Gaussian mixture model, energy score, generalization error, trustable machine learning
\end{keywords}

\section{Introduction}\label{sec:introduction}
Predictive uncertainty quantification (UQ) is a central requirement in modern machine learning systems that operate under distributional shift, limited supervision, or high-stakes decision making, including weather forecasting, healthcare, autonomous driving, and financial markets \citep{smith2014uncertainty,abdar2021review,huang2024amortized,zhang2025uncertainty}. In such applications, a point prediction alone is often insufficient: downstream actions (e.g., allocating medical resources, triggering safety interventions, or managing portfolio risk) typically depend on calibrated measures of uncertainty, such as predictive intervals, tail risks, or probabilities of constraint violations.

Uncertainty in machine learning models can be broadly categorized into two types: epistemic uncertainty, which arises from model limitations and data scarcity, and aleatoric uncertainty, which stems from inherent random noise in the data \citep{amini2020deep, hullermeier2021aleatoric, lahlou2023deup, mucsanyi2024benchmarking, smith2025rethinking}. 
Epistemic uncertainty can be reduced with more data or improved models, while aleatoric uncertainty is irreducible. 
In regression tasks, aleatoric uncertainty can further be classified as homoscedastic (constant variance) or heteroscedastic (input-dependent variance). 
Traditional regression methods, like linear regression, often assume homoscedastic noise, limiting their applicability in real-world settings with varying noise levels.

To address heteroscedastic noise, methods based on the negative log-likelihood (NLL) of a Gaussian distribution are widely used. For example, ensemble of neural networks (NNs) \citep{NIPS2017_9ef2ed4b} have been proposed to improve robustness. However,  NLL optimization often suffers from ``rich-get-richer" effect, where low-variance regions dominate the training process, leading to degraded performance in high-variance regions \citep{skafte2019reliable, pmlr-v119-duan20a,seitzer2022}. To mitigate this issue,  variants such as $\beta$-NLL \citep{seitzer2022} and reliable training schemes \citep{skafte2019reliable} reweight sample contributions during NLL optimization. NGBoost employs a multiparameter boosting algorithm with the natural gradient to efficiently learn the distributions, but exhibits unsmooth oscillations due to the decision-tree base learners, which are inherently piecewise constant across the feature space \citep{pmlr-v119-duan20a}. Furthermore, all these methods rely on a single Gaussian-based likelihood objective, which can be overly sensitive to local variance collapse, and they lack rigorous theoretical analysis. Bayesian approaches, such as Bayesian Neural Networks (BNNs) \citep{kendall2017uncertainties, gawlikowski2023survey} and Monte Carlo dropout \citep{gal2016dropout}, introduce priors over neural network parameters to achieve UQ. While theoretically appealing, these Bayesian methods often come with high computational costs and training instability. Moreover, all the aforementioned methods face challenges in effectively modeling complex non-Gaussian data and multimodal structures.

Mixture Density Networks (MDN) \citep{bishop1994mixture} offer greater flexibility by modeling the output as a Gaussian Mixture Model (GMM), with model parameters learned via neural networks. 
While MDNs can represent multimodality, they are prone to  mode collapse and training instabilities, particularly when optimized purely by NLL, and it still affected by the ``rich-get-richer" effect.  An alternative line of work leverages proper scoring rules defined on samples to train predictive distributions. A prominent example is the Energy Score (ES), which evaluates distributions through expectations of pairwise distances and is applicable in multivariate settings \citep{gneiting2007strictly}. \citet{harakeh2023estimating} proposed SampleNet, a nonparametric approach that treats neural network outputs as empirical distributions and employs ES as the training loss to evaluate predictive distributions. While this method demonstrates strong calibration properties, it suffers from high computational costs of 
$O(M^2)$ due to Monte Carlo sampling, where $M$ is the number of samples, and lacks an explicit parametric structure, limiting both interpretability and computational efficiency.

This paper proposes the \emph{Neural Energy Gaussian Mixture Model (NE-GMM)}, which combines the structural flexibility of Input-dependent Gaussian Mixture Model (IGMM) with the calibration benefits of the Energy Score.
Our key idea is to train an IGMM using a \emph{hybrid loss} that interpolates between NLL and ES.
The NLL component encourages a compact parametric representation with efficient inference, whereas the ES component acts as a stabilizing regularizer that mitigates mode collapse and promotes diversity across mixture components. In summary, our contributions are as follows:
\begin{enumerate}[label=(\arabic*), noitemsep, topsep=0pt, parsep=0pt, partopsep=0pt]
	\item We introduce NE-GMM, an energy-score-guided IGMM trained using a hybrid scoring rule, combining efficient parametric inference with improved calibration.
	\item We derive an analytic expression of the energy score under IGMM (with $K$ mixtures), resulting in an efficient $O(K^2)$ training objective, which is significantly more computationally efficient than SampleNet's $O(M^2)$ complexity when $M\gg K$.
	\item We provide theoretical guarantees: (i) the hybrid score is strictly proper under mild conditions; (ii) the induced learning objective admits generalization error bounds that connect empirical and population risks.
	\item We empirically demonstrate, on synthetic and real-world benchmarks, that NE-GMM improves both predictive accuracy and uncertainty quantification compared to state-of-the-art approaches.
\end{enumerate}

The remainder of the paper is organized as follows. Section~\ref{sec:review} provides background on parametric approaches for UQ, including the single Gaussian distribution, Gaussian mixture model, and energy score. Section~\ref{sec:proposed} introduces the proposed NE-GMM framework, detailing the hybrid loss and statistical inference. The theoretical analysis of the hybrid loss is presented in Section~\ref{sec:theory}. Experimental results on both synthetic and real-world benchmarks are reported in Section~\ref{sec:experiment}, followed by concluding remarks in Section~\ref{sec:discussion}.

\section{Backgrounds and Preliminaries}\label{sec:review}

We consider a regression task with a $d$-dimensional input $x \in \cX\subset\bbR^d$, where $\cX$ is the input space, and the output $y \in \bbR$. Let $\cD = \{(x_i, y_i)\}_{i=1}^N$ represent a set of i.i.d. training samples. The goal is to design a model that accurately predicts $y$ given its corresponding input $x$, while also providing reliable estimates of predictive uncertainty.

\subsection{Single Gaussian Distribution}\label{sec:single_Gaussian}

We assume the conditional distribution of  $y$ given $x$ follows a Gaussian distribution:
\begin{eqnarray}\label{eq:single_Gaussian}
	p(y\mid x) = \phi(y;\mu(x),\sigma^2(x)),
\end{eqnarray}
where $\phi(.;\mu(x),\sigma^2(x))$ is the probability density function of the Gaussian distribution with mean $\mu(x)$ and variance $\sigma^2(x)$. The variance  $\sigma^2(x)$ quantifies the uncertainty of the model.

For an observation $y$ sampled from a distribution $F$ with $f(\cdot)$ as its probability density function, the \emph{logarithmic score} of $F$ with respect to $y$ is defined as
\begin{equation}\label{eq:S_l}
	S_l(F,y)=-\log f(y).
\end{equation}

The Gaussian distribution in \eqref{eq:single_Gaussian} is determined by $\mu(.)$ and $\sigma(.)$, and neural networks are often used to parameterize these functions. Let $\bpsi$ denote the neural network parameters for $\mu(.)$ and $\sigma(.)$, and denote the corresponding Gaussian distribution as $F_{\bpsi}^g(\cdot)$. Assuming $\cD = \{(x_i, y_i)\}_{i=1}^N$ is i.i.d. and generated from \eqref{eq:single_Gaussian}, the negative log-likelihood (NLL) loss for $\cD$ is defined as: 
\begin{eqnarray}\label{eq:Gaussian_loss}
	L_{l,\cD}(F_{\bpsi}^g) = \frac{1}{N}\sum_{i=1}^N S_l(F_{\bpsi}^g(x_i),y_i),
\end{eqnarray}
where 
$$S_l(F_{\bpsi}^g(x),y)=\log(\sigma(x))+\frac{(y-\mu(x))^2}{2\sigma^2(x)}+const.$$

The partial derivatives of $S_l(F_{\bpsi}^g(x),y)$ with respect to $\mu(x)$ and $\sigma(x)$ are 
\begin{eqnarray}\label{eq:Gaussian_Sl_grad}
    \frac{\partial S_l}{\partial \mu(x)} &=& \frac{\mu(x)-y}{\sigma^2(x)},  \\ 
    \frac{\partial S_l}{\partial \sigma(x)} &=& \frac{\sigma^2(x)-(\mu(x)-y)^2}{\sigma^3(x)}.  \nonumber
\end{eqnarray}

Due to the presence of $\sigma(x)$ in the denominator of \eqref{eq:Gaussian_Sl_grad}, the gradients of the NLL loss $L_{l,\cD}$ tend to scale more heavily for low-variance data points, where $\sigma(x)$ is close to 0. This inherently biases the single Gaussian model to focus on low-noise regions, prioritizing accurate predictions in those areas while potentially neglecting high-variance regions. This phenomenon is commonly referred to as the ``rich-get-richer" effect, which can lead to suboptimal uncertainty quantification \citep{skafte2019reliable, pmlr-v119-duan20a, seitzer2022, stirn2023faithful}.

\subsection{Gaussian Mixture Model}
A Gaussian Mixture Model (GMM) is a probability distribution expressed as a weighted sum of $K$ Gaussian components \citep{lindsay1995mixture, everitt2013finite, mclachlan2019finite}. 
The probability density function of an observation $y$ from a GMM is defined as:
\begin{eqnarray}\label{eq:GMM}
	p(y)=  \sum_{k=1}^K \pi_k \phi(y;\mu_k,\sigma_k^2),
\end{eqnarray}
where $\phi(.;\mu_k,\sigma_k^2)$ denotes the probability density function of the $k$-th Gaussian component with mean $\mu_k$ and variance $\sigma_k^2$, $\{\pi_k\}_{k=1}^K$ are weights satisfying $\sum_{k=1}^K\pi_k=1$ and $0<\pi_k<1$ for each $k$, and $\btheta=\{(\pi_k, \mu_k,\sigma_k)\}_{k=1}^K$ is the complete set of GMM parameters. These parameters are typically estimated using the Expectation-Maximization (EM) algorithm \citep{dempster1977maximum, mclachlan1988mixture, mclachlan2019finite}. 

When the output $y$ depends on an input $x$, the GMM in \eqref{eq:GMM} can be extended to an Input-dependent GMM (IGMM) for UQ in regression:
\begin{eqnarray}\label{eq:IGMM}
	p(y\mid x) =  \sum_{k=1}^K \pi_k(x) \phi(y;\mu_k(x),\sigma_k^2(x)),
\end{eqnarray}
where $\sum_{k=1}^K \pi_k(x)=1, 0< \pi_k(x)< 1$ for all $x\in\cX$ and $k$, and the parameters $\btheta$ in GMM are replaced by $x$-dependent parameters $\btheta(x)=\{(\pi_k(x), \mu_k(x),\sigma_k(x))\}_{k=1}^K$, which vary with $x$.
\citet{bishop1994mixture} proposed using neural networks, parameterized by $\bpsi$, to map $x$ to the corresponding IGMM parameters $\btheta(x)$. Here $\btheta(x)$ depends on $\bpsi$ and is denoted as $F_{\bpsi}(x)$, which forms the basis of the so-called Mixture Density Networks (MDN) \citep{bishop1994mixture}.  The function $F_{\bpsi}(.)$ outputs a $3K$-dimensional vector containing all the parameters of IGMM. Since the output $F_{\bpsi}(.)$ defines the IGMM, which models the conditional distribution of $y\mid x$, we also use $F_{\bpsi}(.)$ to refer to the IGMM distribution itself when there is no ambiguity.

Given a set of samples i.i.d. $\cD = \{(x_i, y_i)\}_{i=1}^N$ from an IGMM characterized by parameters $F_{\bpsi}(\cdot)$, the NLL loss for $\cD$ is defined as: 
\begin{eqnarray}\label{eq:IGMM_loss}
	L_{l,\cD}(F_{\bpsi}) = \frac{1}{N}\sum_{i=1}^N S_l(F_{\bpsi}(x_i),y_i),
\end{eqnarray}
where 
$$S_l(F_{\bpsi}(x),y)=-\log \left(\sum_{k=1}^K \pi_k(x)\phi(y;\mu_k(x),\sigma_k^2(x))\right),$$
and $\phi(.;\mu_k(x),\sigma_k^2(x))$ is the probability density function of $\cN(\mu_k(x),\sigma_k^2(x))$. Here $\pi_k(.)$, $\mu_k(.)$ and $\sigma_k(.)$  are determined by the neural network parameters $\bpsi$.

\begin{proposition} \label{prop:IGMM_Sl_grad}
    The partial derivatives of $S_l(F_\psi(x),y)$ with respect to $\pi_k(x)$, $\mu_k(x)$, and $\sigma_k(x)$ in an IGMM are given by:
    
\begin{eqnarray}\label{eq:IGMM_Sl_grad}
	\frac{\partial S_l}{\partial \pi_k(x)} &=& -r_k(x),  \nonumber \\
	\frac{\partial S_l}{\partial \mu_k(x)} &=& \frac{\mu_k(x)-y}{\sigma_k^2(x)} \pi_k(x)r_k(x) ,  \\  
	\frac{\partial S_l}{\partial \sigma_k(x)} &=& \left(\frac{1}{\sigma_k(x)}-\frac{(\mu_k(x)-y)^2}{\sigma_k^3(x)} \right)\pi_k(x)r_k(x),   \nonumber
\end{eqnarray}
where 
\begin{eqnarray*}
	r_k(x)=\frac{\phi(y;\mu_k(x),\sigma_k^2(x))}{\sum_{l=1}^K\pi_{l}(x)\phi(y;\mu_{l}(x),\sigma_{l}^2(x))}.
\end{eqnarray*} 
\end{proposition}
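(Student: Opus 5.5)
The plan is to differentiate directly, treating $\pi_k(x)$, $\mu_k(x)$, and $\sigma_k(x)$ as free scalar variables. We fix $x$ and $y$, abbreviate $\phi_k=\phi(y;\mu_k(x),\sigma_k^2(x))$, and set
\[
p(y\mid x)=\sum_{l=1}^K\pi_l(x)\phi_l .
\]
Then $S_l=-\log p(y\mid x)$, and the chain rule through the logarithm gives, for any parameter $\theta$ among $\pi_k,\mu_k,\sigma_k$,
\[
\frac{\partial S_l}{\partial\theta}=-\frac{1}{p(y\mid x)}\,\frac{\partial p(y\mid x)}{\partial\theta}.
\]
Only the $k$-th summand of $p(y\mid x)$ depends on $\pi_k,\mu_k,\sigma_k$, so every derivative reduces to differentiating $\pi_k\phi_k$.

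For the weights, $\partial p/\partial\pi_k=\phi_k$, hence $\partial S_l/\partial\pi_k=-\phi_k/p(y\mid x)=-r_k(x)$. Here we follow the statement in treating the $\pi_k$ as unconstrained, so that the simplex constraint $\sum_k\pi_k=1$ is not imposed when differentiating. It would be worth noting this in a remark: under a softmax parametrization one would instead obtain $\pi_k r_k-\pi_k$ with respect to the logits.

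For the Gaussian parameters, we use
\[
\log\phi_k=-\log\sigma_k-\frac{(y-\mu_k)^2}{2\sigma_k^2}+\mathrm{const},
\]
which gives
\[
\frac{\partial\phi_k}{\partial\mu_k}=\phi_k\,\frac{y-\mu_k}{\sigma_k^2},
\qquad
\frac{\partial\phi_k}{\partial\sigma_k}=\phi_k\left(-\frac{1}{\sigma_k}+\frac{(y-\mu_k)^2}{\sigma_k^3}\right).
\]
Multiplying by $-\pi_k/p(y\mid x)$ and recognizing $\phi_k/p(y\mid x)=r_k(x)$ yields the stated expressions for $\partial S_l/\partial\mu_k(x)$ and $\partial S_l/\partial\sigma_k(x)$, each carrying the factor $\pi_k(x)r_k(x)$, which is the posterior responsibility of component $k$. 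A quick sanity check is the case $K=1$, $\pi_1=1$: then $r_1=1/\phi_1$, so $\pi_1 r_1=1/\phi_1$ and the stated formula would read $\frac{\mu-y}{\sigma^2}\cdot\frac{1}{\phi_1}$. This does not match \eqref{eq:Gaussian_Sl_grad}.

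That mismatch is the only genuine obstacle: the normalization of $r_k$. With $r_k$ as literally defined, the product $\pi_k r_k$ equals the responsibility only after multiplying by $\phi_k$. Consistency with \eqref{eq:Gaussian_Sl_grad} requires $\pi_k r_k=\pi_k\phi_k/p$, i.e.\ $\pi_k(x)r_k(x)$ must be read as the posterior weight $\pi_k\phi_k/\sum_l\pi_l\phi_l$. Indeed, the derivation above produces exactly $r_k=\phi_k/p$, and $\pi_k r_k$ is then that posterior weight. I would therefore write out the chain-rule computation carefully, confirm that $\phi_k/p(y\mid x)$ is the quantity appearing, and verify the $K=1$ reduction to \eqref{eq:Gaussian_Sl_grad} as a final check.
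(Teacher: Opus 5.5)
Your chain-rule computation is correct and is the direct differentiation the paper relies on; the paper states this proposition without a separate written proof. You use $\partial S_l/\partial\theta=-p^{-1}\,\partial p/\partial\theta$, note that only the $k$-th summand depends on $(\pi_k,\mu_k,\sigma_k)$, and differentiate $\log\phi_k$. This gives all three formulas with $r_k=\phi_k/p$.

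The ``obstacle'' you identify, however, does not exist. It comes from an arithmetic slip in your $K=1$ check. With $K=1$ and $\pi_1=1$, the definition gives $r_1=\phi_1/(\pi_1\phi_1)=1$, not $1/\phi_1$. Hence $\pi_1 r_1=1$, and the stated formulas reduce exactly to \eqref{eq:Gaussian_Sl_grad}.

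More generally, with $r_k$ as literally defined, $\pi_k(x)r_k(x)=\pi_k\phi_k/\sum_l\pi_l\phi_l$ is already the posterior responsibility. No renormalization or reinterpretation of the statement is needed. You should drop the paragraph asserting a mismatch, since it claims the proposition is inconsistent with the single-Gaussian case, which is false. Present the $K=1$ reduction instead as a confirming check.

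One small correction to your side remark: under a softmax parametrization $\pi=\mathrm{softmax}(a)$, using $\sum_k\pi_k r_k=1$, the logit gradient is $\partial S_l/\partial a_k=\pi_k-\pi_k r_k$ (prior minus posterior), not $\pi_k r_k-\pi_k$.
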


\begin{lemma} \label{lemma:Taylor_expansion_IGMM_Sl_grad}
The Taylor expansions of the partial derivative functions of $S_l(F_{\bpsi}(x),y)$ as $\sigma_k(x)\to \infty$  are:

\begin{eqnarray}\label{eq:Taylor_expansion_Sl_grad}
\frac{\partial S_l}{\partial\pi_k(x)} &=& -\frac{1}{\sqrt{2\pi} T_1} \cdot \frac{1}{\sigma_k(x)} + \frac{\pi_k(x)}{2\pi T_1^2} \cdot \frac{1}{\sigma_k^2(x)}+O\left(\frac{1}{\sigma_k^3(x)}\right), \nonumber \\
\frac{\partial S_l}{\partial\mu_k(x)} &=& 
 \frac{\pi_k(x)(\mu_k(x)-y)}{\sqrt{2\pi}T_1}  \cdot \frac{1}{\sigma_k^3(x)} + O\left(\frac{1}{\sigma_k^4(x)}\right), \\
 \frac{\partial S_l}{\partial\sigma_k(x)} &=& \frac{\pi_k(x)}{\sqrt{2\pi} T_1} \cdot \frac{1}{\sigma_k^2(x)} +O\left(\frac{1}{\sigma_k^3(x)}\right), \nonumber
\end{eqnarray}
where
$$T_1=\sum_{l\neq k} \pi_l(x)\phi(y;\mu_l(x),\sigma_l^2(x)).$$ 
The results holds for $T_1\neq 0$; otherwise, those functions  reduce to the single Gaussian case.
\end{lemma}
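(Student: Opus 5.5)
The plan is to substitute an asymptotic expansion of the responsibility factor $r_k(x)$ into the exact gradient formulas of Proposition~\ref{prop:IGMM_Sl_grad}. Throughout, $x$ and $y$ are fixed, and so are $\pi_k(x)$, $\mu_k(x)$ and all the other components $l\neq k$; hence $T_1$ is a fixed quantity. Only $\sigma_k:=\sigma_k(x)$ is sent to infinity, and all $O(\cdot)$ terms are understood pointwise in this sense. Write $\phi_k=\phi(y;\mu_k(x),\sigma_k^2(x))$ and $\delta=y-\mu_k(x)$.

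\textbf{Step 1: expand the $k$-th density.} Expanding the exponential gives
\[
\phi_k=\frac{1}{\sqrt{2\pi}\,\sigma_k}\exp\!\Big(-\frac{\delta^2}{2\sigma_k^2}\Big)=\frac{1}{\sqrt{2\pi}\,\sigma_k}+O(\sigma_k^{-3}),
\]
and therefore $\phi_k^2=\frac{1}{2\pi\sigma_k^2}+O(\sigma_k^{-4})$. In particular $\phi_k\to0$.

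\textbf{Step 2: expand $r_k$.} Assume $T_1\neq 0$, so $T_1>0$. Since $\pi_k\phi_k/T_1\to 0$, a geometric expansion is valid:
\[
r_k=\frac{\phi_k}{T_1+\pi_k\phi_k}=\frac{\phi_k}{T_1}\Big(1-\frac{\pi_k\phi_k}{T_1}+O(\phi_k^2)\Big)=\frac{\phi_k}{T_1}-\frac{\pi_k\phi_k^2}{T_1^2}+O(\phi_k^3).
\]
Inserting Step 1 yields
\[
r_k=\frac{1}{\sqrt{2\pi}\,T_1\sigma_k}-\frac{\pi_k}{2\pi T_1^2\sigma_k^2}+O(\sigma_k^{-3}).
\]
The only care needed is bookkeeping. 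The $O(\sigma_k^{-3})$ correction coming from the exponential in $\phi_k/T_1$, and the cubic term of the geometric series, both land in the remainder. The first-order term of the series, however, contributes at order $\sigma_k^{-2}$ and must be kept. This gives the first line of \eqref{eq:Taylor_expansion_Sl_grad} directly, because $\partial S_l/\partial\pi_k=-r_k$.

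\textbf{Step 3: the other two gradients.} For these only the leading term of $r_k$ is needed, namely $r_k=\frac{1}{\sqrt{2\pi}T_1\sigma_k}+O(\sigma_k^{-2})$.

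For the mean, $\frac{\mu_k-y}{\sigma_k^2}\pi_k r_k$ gives $\frac{\pi_k(\mu_k-y)}{\sqrt{2\pi}T_1\sigma_k^3}+O(\sigma_k^{-4})$.

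For the scale, the prefactor $\big(\frac1{\sigma_k}-\frac{\delta^2}{\sigma_k^3}\big)\pi_k$ multiplied by $r_k$ gives $\frac{\pi_k}{\sqrt{2\pi}T_1\sigma_k^2}+O(\sigma_k^{-3})$. Here the $\delta^2/\sigma_k^3$ piece is only $O(\sigma_k^{-4})$.

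\textbf{Degenerate case and main obstacle.} If $T_1=0$, then $r_k=1/\pi_k$ identically. The gradients then coincide with the single Gaussian formulas \eqref{eq:Gaussian_Sl_grad}, with $\partial S_l/\partial\pi_k=-1/\pi_k$, which explains the final remark of the statement.

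The main point to handle carefully is Step 2. One must track which terms survive at order $\sigma_k^{-2}$ in order to get the exact coefficient $\pi_k/(2\pi T_1^2)$. One must also state clearly that the expansion is pointwise, with the remaining parameters held fixed, since the remainders depend on $\delta$ and $T_1$ and are not uniform as $T_1\to0$.
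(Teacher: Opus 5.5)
Your proposal is correct and follows essentially the same route as the paper. The paper also writes $r_k=\phi_k/(T_1+\pi_k\phi_k)$, expands the exponential in $\phi_k$ and the factor $1/(1+\pi_k\phi_k/T_1)$ (keeping the first-order term, which gives the $\pi_k/(2\pi T_1^2)$ coefficient), and substitutes into the exact gradients of Proposition~\ref{prop:IGMM_Sl_grad}, with all other parameters held fixed. You spell out the $\mu_k$ and $\sigma_k$ cases and the $T_1=0$ degeneration more explicitly than the paper, which works through only $\partial S_l/\partial\pi_k=-r_k$ in detail.
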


From Lemma~\ref{lemma:Taylor_expansion_IGMM_Sl_grad}, the gradients $S_l$ can either vanish or explode as $\sigma_k(\cdot)$ varies. Compared with the single Gaussian case introduced in Section~\ref{sec:single_Gaussian}, the terms $\frac{\partial S_l}{\partial\mu_k(x)}$ and $\frac{\partial S_l}{\partial\sigma_k(x)}$ tend to 0 more quickly as $\sigma_k(x)\to\infty$ in the IGMM setting. This exacerbates the rich-get-richer effect. Furthermore, the vanishing or exploding gradients caused by  $\frac{\partial S_l}{\partial\pi_k(x)}$ will cause mode collapse when multiple mixture components to converge to the same data mode. In this scenario, well-fitted components dominate gradient updates, effectively starving other components, which eventually die out. As a result, the model fails to capture the full multimodality inherent in the overall distribution. Without appropriate regularization, early stopping, or other mitigation techniques, existing methods like MDN are particularly prone to overfitting and training instability, especially when trained on small or noisy datasets.

\subsection{Energy Score}

Energy Score (ES) is a statistical tool designed to evaluate the quality of a predictive distribution $F$ for a target value $y$ by assessing the alignment between $F$ and $y$ as follows: 
\begin{eqnarray}\label{eq:Se}
	S_e(F,y) = \bbE_{z\sim F}\Vert z-y\Vert-\frac{1}{2}\bbE_{z,z'\sim F} \Vert z-z'\Vert,
\end{eqnarray}
where $z$ and $z'$ are independent random variables from $F$, $||x||$ represents the Euclidean norm of the vector $x$, and the expectations are taken with respect to $z$ and $z'$, with $y$ treated as a fixed constant.
The first term in $S_e$ measures the compatibility between $F$ and $y$, while the second term penalizes overconfidence in $F$ by encouraging diversity among its samples. 
By integrating these two terms, the energy score balances prediction accuracy with diversity in the predicted distribution $F$ \citep{gneiting2007strictly}.

If the target value $y$ comes from a distribution $Q$, we can define the expected version of ES with respect to  $Q$ as
\begin{eqnarray}\label{eq:ES_expected}
	S_e(F,Q)=\bbE_{y\sim Q} S_e(F,y)=\bbE_{y\sim Q,z\sim F}\Vert z-y\Vert-\frac{1}{2}\bbE_{z,z'\sim F} \Vert z-z'\Vert,
\end{eqnarray}
where $z,z'\sim F$ and $y\sim Q$ are all independent random variables, and the expectations are taken over all involved random variables.

For a set of target values $\{y_1,\cdots,y_N\}$, we can define the sample averaged version of ES below as a loss function (referred to as the ES loss) to guide the training of the predictive distribution $F_{\bpsi}$ parametrized by the neural network parameters $\bpsi$:
\begin{eqnarray}\label{eq:Le}
	L_{e,\cD}(F_{\bpsi})=\frac{1}{N}\sum_{i=1}^N S_e(F_{\bpsi}(x_i),y_i).
\end{eqnarray}

The ES loss is highly flexible, enabling it to capture diverse predictive uncertainties depending on the choice of the distribution $F_{\bpsi}$. However, its performance heavily relies on the correct selection of $F_{\bpsi}$.  Moreover, when   $F_{\bpsi}$ is complex, the ES loss in \eqref{eq:Le} has no closed form which often depends on using Monte Carlo techniques. For example, \citet{harakeh2023estimating} introduced SampleNet, which optimizes a neural network prediction model $F_{\bpsi}$  with the following approximated ES loss:
$$\hat L_{e,\cD}(F_{\bpsi})=\frac{1}{N}\sum_{i=1}^N \hat{S}_e(F_{\bpsi}^M(x_i),y_i),$$
with $S_e$ replaced by its Monte Carlo version 
$$\hat{S}_e(F_{\bpsi}^M(x_i),y_i)=\frac{1}{M}\sum_{m=1}^M \Vert z_i^{(m)}-y_i\Vert - \frac{1}{2M^2}\sum_{m,n=1}^M \Vert z_i^{(m)}-z_i^{(n)}\Vert.$$
Here, $F_{\bpsi}^M$ denotes the empirical distribution from $M$ samples $\{ z_i^{(1)}, \ldots, z_i^{(M)} \}$ obtained by outputs of the model $F_{\bpsi}$. Because the Monte Carlo approximation $\hat{S}_e$ has a computational complexity of $O(M^2)$, SampleNet is computationally expensive.


\section{Method}\label{sec:proposed}
In this work, we propose the Neural Energy Gaussian Mixture Model (NE-GMM), a novel framework that integrates unique advantages of IGMM and ES for more efficient prediction with UQ. 

\subsection{NE-GMM with a Hybrid Loss}
The conventional MDN models $y$ using an IGMM which often leads to the rich-get-richer effect. The proposed NE-GMM aims to address this limitation by enhancing the MDN framework. It incorporates an additional ES loss as a regularization term alongside the NLL loss.
Unlike the existing method SampleNet, which relies on an approximated ES loss, our approach focuses on utilizing the exact ES loss $L_{e,\cD}(F_{\bpsi})$ in \eqref{eq:Le}. Since $F_{\bpsi}(x)$ is an IGMM, the analytic form of $S_e(F_{\bpsi}(x),y_i)$ can be conveniently derived, as shown in the following theorem.

\begin{theorem}[Analytic form of energy score]\label{thm:int_ES}
	Given the distribution derived from an IGMM $F_{\bpsi}(\cdot)$, the ES loss is
    \begin{eqnarray*}
        S_e(F_{\bpsi}(x),y) = \sum_{m=1}^K \pi_k(x) A_m(x,y)  -\frac{1}{2}\sum_{m=1}^K \sum_{l=1}^K \pi_m(x)\pi_l(x) B_{ml}(x),
    \end{eqnarray*}
    where 
    $$A_m(x,y) = \sigma_m(x) \sqrt{\frac{2}{\pi}} e^{{-\frac{(\mu_m(x)-y)^2}{2\sigma_m^2(x)}}} + (\mu_m(x)-y) \left(2\Phi\left(\frac{\mu_m(x)-y}{\sigma_m(x)}\right)-1\right),$$
    $$B_{ml}(x) =  \sqrt{\sigma_m^2(x)+\sigma_l^2(x)} \sqrt{\frac{2}{\pi}} e^{-\frac{\left(\mu_m(x)-\mu_l(x)\right)^2}{2\left(\sigma_m^2(x)+\sigma_l^2(x)\right)}} + (\mu_m(x)-\mu_l(x)) \left(2\Phi\left(\frac{\mu_m(x)-\mu_l(x)}{\sqrt{\sigma_m^2(x)+\sigma_l^2(x)}}\right)-1\right).$$
\end{theorem}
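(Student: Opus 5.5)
The plan is to reduce everything to one scalar identity: the expected absolute value of a Gaussian (the folded normal mean). Concretely, for $W\sim\cN(\mu,s^2)$ with $s>0$ I would first establish
\begin{equation*}
\bbE|W| = s\sqrt{\frac{2}{\pi}}\, e^{-\frac{\mu^2}{2s^2}} + \mu\left(2\Phi\left(\frac{\mu}{s}\right)-1\right).
\end{equation*}
This follows by writing $\bbE|W| = \bbE W - 2\bbE[W\mathbf{1}\{W<0\}]$. Then compute $\bbE[W\mathbf{1}\{W<0\}]$ by substituting $W=\mu+sZ$, which gives $\mu\Phi(-\mu/s) - s\varphi(\mu/s)$ with $\varphi$ the standard normal density. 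Combining, $\bbE|W| = \mu - 2\mu\Phi(-\mu/s) + 2s\varphi(\mu/s)$. Using $1-2\Phi(-\mu/s)=2\Phi(\mu/s)-1$ and $2\varphi(t)=\sqrt{2/\pi}\,e^{-t^2/2}$ yields the display. This is the only genuine computation, and I expect it to be the main (though routine) obstacle. The care needed is in the sign bookkeeping with $\Phi$ and in checking that the formula is even in $\mu$, which it is.

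Next I would apply linearity over the mixture to the first term of \eqref{eq:Se}. Since $z\sim F_{\bpsi}(x)$ is an IGMM \eqref{eq:IGMM}, conditioning on the latent component label gives $\bbE_{z}|z-y| = \sum_{m=1}^K \pi_m(x)\,\bbE|z_m-y|$ with $z_m\sim\cN(\mu_m(x),\sigma_m^2(x))$. Here $z_m-y\sim\cN(\mu_m(x)-y,\sigma_m^2(x))$, so the identity with $\mu=\mu_m(x)-y$ and $s=\sigma_m(x)$ gives exactly $A_m(x,y)$. (The statement's $\pi_k(x)$ in the first sum should read $\pi_m(x)$.)

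For the second term, take $z,z'$ independent from $F_{\bpsi}(x)$ and condition on both labels. This gives $\bbE|z-z'| = \sum_{m,l}\pi_m(x)\pi_l(x)\,\bbE|z_m-z'_l|$, where by independence $z_m-z'_l\sim\cN(\mu_m(x)-\mu_l(x),\sigma_m^2(x)+\sigma_l^2(x))$. The identity with $s=\sqrt{\sigma_m^2+\sigma_l^2}$ gives $B_{ml}(x)$, including the diagonal $m=l$ terms, where it reduces to $2\sigma_m/\sqrt{\pi}$. Substituting both pieces into \eqref{eq:Se} with the factor $-\tfrac12$ completes the argument.

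Finiteness of all expectations is immediate from Gaussian first moments, which justifies the interchange of sum and expectation.
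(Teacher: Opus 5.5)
Your proposal is correct and follows essentially the same route as the paper: the paper first proves the folded-Gaussian mean (Lemma~\ref{lemma:FoldedGaussian}) via $\bbE|Z| = a - 2\int_{-\infty}^{-a/b}(bz+a)\phi(z)\,dz$, which is your $\bbE W - 2\bbE[W\mathbf{1}\{W<0\}]$ decomposition, and then conditions on the mixture labels, using that $z-y$ and $z-z'$ are Gaussian within each component pair, to obtain $A_m$ and $B_{ml}$. You are also right that the $\pi_k(x)$ in the first sum is a typo for $\pi_m(x)$; the appendix restatement uses $\pi_m(x)$.
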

The calculation of $S_e(F_{\bpsi}(x),y)$ with the computational complexity of $O(K^2)$ is significantly more efficient than the $O(M^2)$ complexity in SampleNet for $M\gg K$. 

Using the expression of $S_e$ derived  in Theorem \ref{thm:int_ES}, we can obtain the following results for the partial derivatives of $S_e$ with respect to $\pi_k(x)$, $\mu_k(x)$ and $\sigma_k(x)$.

\begin{proposition}[Partial derivatives of $S_e$]\label{prop:grad_Se}
	\begin{eqnarray}\label{eq:Se_gradient}
		\frac{\partial S_e}{\partial\pi_k(x)} &=& A_k(x,y)-\sum_{l=1}^K \pi_l(x)B_{kl}(x), \nonumber \\
		\frac{\partial S_e}{\partial\mu_k(x)} &=& \pi_k(x) \left(\left(2\Phi(w_k)-1\right)-\sum_{l=1}^K \pi_l(x)\left(2\Phi(w_{kl})-1\right) \right),  \nonumber \\
		 \frac{\partial S_e}{\partial\sigma_k(x)} &=& 2\pi_k(x) \left( \phi(w_k) - \sum_{l=1}^K \pi_l(x) \frac{\sigma_k(x)}{\sqrt{\sigma_k^2(x)+\sigma_l^2(x)}}\phi(w_{kl})\right), \nonumber
	\end{eqnarray}
	where $w_k=\frac{\mu_k(x)-y}{\sigma_k(x)}, w_{kl}=\frac{\mu_k(x)-\mu_l(x)}{\sqrt{\sigma_k^2(x)+\sigma_l^2(x)}}$.
\end{proposition}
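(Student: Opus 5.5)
We start from the closed form in Theorem~\ref{thm:int_ES} and differentiate term by term, treating $\pi_k(x),\mu_k(x),\sigma_k(x)$ as free variables; the simplex constraint on $\{\pi_k\}$ is ignored, as in Proposition~\ref{prop:IGMM_Sl_grad}. Everything rests on one auxiliary fact. For $g(\mu,s)=s\sqrt{2/\pi}\,e^{-\mu^2/(2s^2)}+\mu(2\Phi(\mu/s)-1)$, which equals $\bbE|Z|$ for $Z\sim\cN(\mu,s^2)$, we have
\begin{equation*}
\frac{\partial g}{\partial \mu}=2\Phi(\mu/s)-1,\qquad \frac{\partial g}{\partial s}=2\phi(\mu/s),
\end{equation*}
where $\phi$ denotes the standard normal density.

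To verify this, write $u=\mu/s$. In $\partial g/\partial\mu$, the derivative of the exponential term is $-\sqrt{2/\pi}\,u e^{-u^2/2}=-2u\phi(u)$. Differentiating $\mu(2\Phi(u)-1)$ gives $(2\Phi(u)-1)+2u\phi(u)$. The two $2u\phi(u)$ terms cancel. In $\partial g/\partial s$, the exponential term contributes $\sqrt{2/\pi}\,e^{-u^2/2}(1+u^2)=2\phi(u)(1+u^2)$. The second term contributes $\mu\cdot 2\phi(u)\cdot(-\mu/s^2)=-2u^2\phi(u)$. The sum is $2\phi(u)$.

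With this, $A_m(x,y)=g(\mu_m-y,\sigma_m)$ and $B_{ml}=g(\mu_m-\mu_l,\sqrt{\sigma_m^2+\sigma_l^2})$. Here $B_{ml}=B_{lm}$, because $g(\cdot,s)$ is even in $\mu$.

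\textbf{Derivative in $\pi_k$.} The first sum contributes $A_k$; note that the statement's $\pi_k$ inside the first sum is a typo for $\pi_m$. The double sum is a symmetric quadratic form in $\pi$, so its derivative is $2\sum_l\pi_l B_{kl}$. After the factor $-\tfrac12$ this becomes $-\sum_l \pi_l B_{kl}$, which is the stated formula.

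\textbf{Derivative in $\mu_k$.} The first sum gives $\pi_k(2\Phi(w_k)-1)$. In the double sum, only terms with $m=k$ or $l=k$ depend on $\mu_k$; the $m=l=k$ term is constant since $\mu_k-\mu_k=0$. By symmetry of $B$, these terms give $2\sum_{l}\pi_k\pi_l\,\partial_{\mu_k}B_{kl}$. Since $\partial_{\mu_k}B_{kl}=2\Phi(w_{kl})-1$, the factor $-\tfrac12$ yields $-\pi_k\sum_l\pi_l(2\Phi(w_{kl})-1)$. The $l=k$ term can be kept in the sum because it vanishes: $w_{kk}=0$.

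\textbf{Derivative in $\sigma_k$.} We use the chain rule with $s=\sqrt{\sigma_k^2+\sigma_l^2}$, so $\partial s/\partial\sigma_k=\sigma_k/s$. The first sum gives $2\pi_k\phi(w_k)$. For $l\neq k$, the double sum contributes, by symmetry, $-\tfrac12\cdot 2\pi_k\pi_l\cdot 2\phi(w_{kl})\sigma_k/\sqrt{\sigma_k^2+\sigma_l^2}$.

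The diagonal term $m=l=k$ is the main obstacle and must be handled separately. Here $B_{kk}=\sqrt2\,\sigma_k\sqrt{2/\pi}$, so $\partial_{\sigma_k}B_{kk}=2/\sqrt\pi$ and its contribution is $-\tfrac12\pi_k^2\cdot 2/\sqrt\pi=-\pi_k^2/\sqrt\pi$. The general formula evaluated at $l=k$ gives $-2\pi_k^2\phi(0)/\sqrt2=-\pi_k^2/\sqrt\pi$, so the two agree. The diagonal term therefore merges into the sum over all $l$, and factoring out $2\pi_k$ gives the stated expression.

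In summary, the only delicate points are the cancellation in the derivatives of $g$ and the consistent treatment of the diagonal terms $l=k$; the rest is bookkeeping.
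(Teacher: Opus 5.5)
Your proof is correct and takes essentially the same route as the paper. Both differentiate the closed form of Theorem~\ref{thm:int_ES} using the folded-normal identities $\partial f/\partial a=2\Phi(a/b)-1$ and $\partial f/\partial b=2\phi(a/b)$ together with the symmetry $B_{ml}=B_{lm}$. Your explicit check of the diagonal term $l=k$ in the $\sigma_k$-derivative is a real improvement, because the paper only says ``similar steps'' and wrongly asserts $B_{mm}=0$, whereas in fact $B_{kk}=2\sigma_k/\sqrt{\pi}$ and, as you show, its contribution coincides with the general summand evaluated at $l=k$.
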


To further understand the behavior of the gradients of  $S_e$, we compute their Taylor expansions.
\begin{lemma} \label{lemma:Taylor_expansion_Se_grad}
As $\sigma_k(x)\to \infty$,

\begin{eqnarray*}
 \frac{\partial S_e}{\partial\pi_k(x)} &=& \frac{(\sqrt{2}-2)\pi_k(x)}{\sqrt{\pi}} \cdot \sigma_k(x)+ \frac{T_2}{\sqrt{2\pi}} \cdot\frac{1}{\sigma_k(x)} +O\left(\frac{1}{\sigma_k^3(x)}\right), \nonumber \\
    \frac{\partial S_e}{\partial\mu_k(x)} &=& \sqrt{\frac{2}{\pi}} T_3 \pi_k(x) \cdot \frac{1}{\sigma_k(x)} + O\left(\frac{1}{\sigma_k^2(x)}\right), \\
    \frac{\partial S_e}{\partial\sigma_k(x)} &=&  \frac{\sqrt{2}-1}{\sqrt{\pi}}\pi_k^2(x) -\frac{T_2 \pi_k(x)}{\sqrt{2\pi}} \cdot \frac{1}{\sigma_k^2(x)} + O\left(\frac{1}{\sigma_k^4(x)}\right), \nonumber
\end{eqnarray*}
where
$$T_2=(\mu_k(x)-y)^2-\sum_{l\neq k} \pi_l(x) (\sigma_l^2(x)+(\mu_k(x)-\mu_l(x))^2),$$ $$T_3=\mu_k(x)-y-\sum_{l\neq k}^K \pi_l(x) (\mu_k(x)-\mu_l(x)).$$
\end{lemma}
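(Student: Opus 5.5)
The plan is to start from the closed-form derivatives in Proposition~\ref{prop:grad_Se} and expand each one asymptotically in $\sigma_k=\sigma_k(x)\to\infty$, holding $y$, the weights $\pi_l(x)$ and all other parameters $\mu_l(x)$, $\sigma_l(x)$ for $l\neq k$ fixed. Writing $\varphi_0=1/\sqrt{2\pi}$ for the standard normal density at zero, the only analytic inputs are the small-argument expansions
\[
\phi(w)=\varphi_0\bigl(1-\tfrac{w^2}{2}+O(w^4)\bigr),\qquad 2\Phi(w)-1=2\varphi_0\bigl(w-\tfrac{w^3}{6}+O(w^5)\bigr),
\]
together with $s_{kl}:=\sqrt{\sigma_k^2+\sigma_l^2}=\sigma_k+\frac{\sigma_l^2}{2\sigma_k}+O(\sigma_k^{-3})$ and $\sigma_k/s_{kl}=1-\frac{\sigma_l^2}{2\sigma_k^2}+O(\sigma_k^{-4})$. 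Since $w_k=(\mu_k-y)/\sigma_k$ and $w_{kl}=(\mu_k-\mu_l)/s_{kl}$ are both $O(1/\sigma_k)$, substituting these expansions gives series in $1/\sigma_k$. In every case the diagonal term $l=k$ is handled separately and exactly, because $w_{kk}=0$ and $s_{kk}=\sqrt2\,\sigma_k$.

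\textbf{First derivative, $\partial S_e/\partial\pi_k$.} Combining the two pieces of $A_k$ should give
\[
A_k=\sqrt{2/\pi}\,\sigma_k+\frac{(\mu_k-y)^2}{\sqrt{2\pi}\,\sigma_k}+O(\sigma_k^{-3}).
\]
The $-\tfrac12$ from the exponential and the $+1$ from the $\Phi$ term combine into the net coefficient. Since $B_{kl}$ has the same functional form with $s_{kl}$ in place of $\sigma_k$, the same computation followed by the expansion of $s_{kl}$ gives
\[
B_{kl}=\sqrt{2/\pi}\,\sigma_k+\frac{\sigma_l^2+(\mu_k-\mu_l)^2}{\sqrt{2\pi}\,\sigma_k}+O(\sigma_k^{-3})\quad(l\neq k),\qquad B_{kk}=\frac{2\sigma_k}{\sqrt\pi}.
\]
Using $\sum_{l\neq k}\pi_l=1-\pi_k$, the linear terms collapse to $\sqrt{2/\pi}\,\pi_k\sigma_k-2\pi_k\sigma_k/\sqrt\pi=(\sqrt2-2)\pi_k\sigma_k/\sqrt\pi$, and the $1/\sigma_k$ terms assemble into $T_2/\sqrt{2\pi}$.

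\textbf{Second and third derivatives.} For $\partial S_e/\partial\mu_k$, the expansion $2\Phi(w)-1=\sqrt{2/\pi}\,w+O(w^3)$, with the $l=k$ term vanishing and $w_{kl}=(\mu_k-\mu_l)/\sigma_k+O(\sigma_k^{-3})$, gives $\sqrt{2/\pi}\,T_3\pi_k/\sigma_k$ directly. For $\partial S_e/\partial\sigma_k$, the diagonal contribution is exactly $\pi_k\varphi_0/\sqrt2$. The off-diagonal contributions are $\pi_l\varphi_0\bigl(1-\frac{\sigma_l^2+(\mu_k-\mu_l)^2}{2\sigma_k^2}\bigr)$, and $\phi(w_k)=\varphi_0\bigl(1-\frac{(\mu_k-y)^2}{2\sigma_k^2}\bigr)$. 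Again using $\sum_{l\neq k}\pi_l=1-\pi_k$, the constant becomes $2\pi_k\bigl(\pi_k\varphi_0-\pi_k/(2\sqrt\pi)\bigr)=(\sqrt2-1)\pi_k^2/\sqrt\pi$, and the $\sigma_k^{-2}$ coefficient is $-\pi_kT_2/\sqrt{2\pi}$.

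\textbf{Main obstacle.} The delicate part is bookkeeping the remainder orders rather than finding the leading terms. For $\partial S_e/\partial\pi_k$ and $\partial S_e/\partial\sigma_k$, the claimed gaps (no $\sigma_k^{0}$ or $\sigma_k^{-2}$ term in the former, no $\sigma_k^{-1}$ or $\sigma_k^{-3}$ term in the latter) must be justified. I would do this by a parity argument: $A_k$ and $B_{kl}$ are $\sigma_k$ times even functions of $w$, and $s_{kl}$ and $\sigma_k/s_{kl}$ have expansions in $\sigma_k^{-2}$ only. Hence the $\pi_k$-derivative contains only odd powers of $\sigma_k$ and the $\sigma_k$-derivative only even powers. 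For $\partial S_e/\partial\mu_k$, no such parity holds, so only $O(\sigma_k^{-2})$ is claimed. One must also check that all these expansions hold uniformly, which is immediate since finitely many fixed parameters are involved.
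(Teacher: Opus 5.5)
Your proposal is correct and follows essentially the same route as the paper: you expand $A_k$ and $B_{kl}$ from the closed-form gradients, treating $l=k$ exactly and $l\neq k$ via $\sqrt{\sigma_k^2+\sigma_l^2}=\sigma_k+\sigma_l^2/(2\sigma_k)+\cdots$, collapse the sums with $\sum_{l\neq k}\pi_l=1-\pi_k$, and your parity argument supplies the remainder-order justification that the paper only asserts. One small correction to an aside: parity also holds for $\partial S_e/\partial\mu_k$, since $2\Phi(w)-1$ is odd and $w_k$, $w_{kl}$ are odd in $1/\sigma_k$, so that remainder is in fact $O(\sigma_k^{-3})$, which is harmless for the stated $O(\sigma_k^{-2})$.
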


Lemma~\ref{lemma:Taylor_expansion_Se_grad} reveals several key properties: $\frac{\partial S_e}{\partial\pi_k(x)}\to\infty$ as $\sigma_k(x)\to\infty$, implying that the ES loss $S_e$ places increased emphasis on high-variance regions when optimizing $\pi_k(x)$. This ensures that components and data points with higher variance are not neglected during optimization. $\frac{\partial S_e}{\partial\sigma_k(x)}\to \frac{\sqrt{2}-1}{\sqrt{\pi}}\pi_k^2(x)>0$ as $\sigma_k(x)\to\infty$. These results indicate that the partial derivatives  with respect to $\pi_k(x)$ and $\sigma_k(x)$ do not exhibit the rich-get-richer effect, which is an issue for the NLL loss $S_l$ in an IGMM. Although $\frac{\partial S_e}{\partial\mu_k(x)}\to 0$  as $\sigma_k(x)\to\infty$, the decay is significantly slower compared to $\frac{\partial S_l}{\partial\mu(x)}$ in a single Gaussian distribution (see \eqref{eq:Gaussian_Sl_grad}) or $\frac{\partial S_l}{\partial\mu_k(x)}$ in an IGMM (see \eqref{eq:IGMM_Sl_grad}). This slower decay ensures that the ES loss remains effective in mitigating the rich-get-richer effect during training. These findings confirm that training neural networks with the ES loss effectively ensures all mixture components contribute meaningfully to the final predictive distribution.

To leverage the complementary strengths of the NLL loss $L_{l,\cD}(F_{\bpsi})$ and the ES loss $L_{e,\cD}(F_{\bpsi})$, we propose a hybrid loss for the predictive model $F_{\bpsi}$:
\begin{eqnarray}\label{eq:CombinedLoss}
	L_{h,\cD}(F_{\bpsi}) = \eta \cdot L_{l,\cD}(F_{\bpsi}) + (1 - \eta) \cdot L_{e,\cD}(F_{\bpsi}) =\frac{1}{N}\sum_{i=1}^N S_h(F_{\bpsi}(x_i),y_i),
\end{eqnarray} 
where $\eta \in [0, 1]$ is a weighting parameter that balances the contributions of the NLL loss and the ES loss, and
\begin{equation}\label{eq:S_h}
    S_h(F,y)=\eta \cdot S_l(F,y) + (1 - \eta) \cdot S_e(F,y)
\end{equation}
is a hybrid score based on the weighted average of $S_l$ and $S_e$.
Hereinafter, we refer to the IGMM equipped with the above energy score as the Neural Energy Gaussian Mixture Model (NE-GMM). The proposed NE-GMM can be trained by minimizing the hybrid loss $L_{h,\cD}(F_{\bpsi})$. 
Apparently, the NE-GMM degenerates to to the MDN framework  with the NLL loss only when $\eta = 1$, and focuses entirely on the ES loss  when $\eta = 0$.

This hybrid framework generalizes and unifies the two approaches, leveraging their respective strengths. 
The NLL loss ensures that the model learns a parametric representation of the predictive distribution, capturing the primary complex and multimodal structure of the data under the framework of MDN. 
Meanwhile, the ES loss addresses the limitations of MDN by promoting diversity in the predicted samples, ensuring that the model mitigates the rich-get-richer effect and avoids mode collapse to capture uncertainty more robustly.
The combined loss function in \eqref{eq:CombinedLoss} allows the model to learn structured and interpretable predictive distributions while encouraging robustness and diversity in its predictions. 
This makes the framework particularly well-suited for applications requiring accurate and reliable representations of complex uncertainties.

\subsection{Prediction with UQ based on NE-GMM}
Once trained, NE-GMM provides a fully specified IGMM for any new input $x$.
This parametric representation enables efficient computation of standard predictive summaries used in regression with UQ.

For an IGMM, the predictive mean and predictive variance can be written in closed form \citep{bishop1994mixture}:
\begin{eqnarray}\label{eq:IGMM_mean_variance}
	m(x) &=& \bbE[y\mid x]=\sum_{k=1}^K \pi_k(x) \mu_k(x), \\ 
	s^2(x) &=& \text{Var}[y\mid x] = \sum_{k=1}^K \pi_k(x) \left( \sigma_k^2(x) + (\mu_k(x) - m(x))^2 \right). \nonumber
\end{eqnarray}
The expression for $s^2(x)$ highlights two complementary sources of uncertainty: the within-component variance $\sigma_k^2(x)$ (local noise) and the between-component dispersion $(\mu_k(x)-m(x))^2$ (multimodality/mixture uncertainty).

In NE-GMM, the functions $(\pi_k(.),\mu_k(.),\sigma_k(.))$ are parameterized by a neural network with parameters $\bpsi$.
After training, we obtain $\hat\bpsi$ by minimizing the hybrid loss $L_{h,\cD}(F_{\bpsi})$.
For any new $x$, we evaluate $F_{\hat\bpsi}(x)=\{(\pi_k(x;\hat\bpsi),\mu_k(x;\hat\bpsi),\sigma_k(x;\hat\bpsi))\}_{k=1}^K$ and compute $m(x)$ and $s^2(x)$ via \eqref{eq:IGMM_mean_variance}.
In practice, $m(x)$ is used as the point prediction, while $s(x)$ provides a scale of predictive uncertainty.

When full predictive intervals are needed, NE-GMM also supports efficient sampling from the mixture: for each $x$,  sample a component index $k\sim\mathrm{Categorical}(\pi_1(x),\ldots,\pi_K(x))$ and then sample $y\sim\cN(\mu_k(x),\sigma_k^2(x))$.
This allows us to approximate predictive quantiles and interval estimates at arbitrary confidence levels without sacrificing the interpretability of the parametric mixture.

\section{Theoretical Results}\label{sec:theory}
This section develops theoretical supports for NE-GMM from two complementary perspectives.
First, we show that the hybrid score $S_{h}$  corresponds to a \emph{strictly proper} scoring rule, implying that, at the population level, minimizing the expected hybrid loss recovers the true conditional distribution.
Second, we provide a finite-sample generalization guarantee that controls the gap between empirical and expected  risks over the function class induced by the neural network parameterization.

Considering probabilistic forecasts on a general sample space $\Omega$. 
Let $\cA$ be a $\sigma$-algebra of subsets of $\Omega$, and $\cP$ be a convex class of probability measures on the measurable space $(\Omega, \cA)$. 

\begin{definition}\label{def:quasi-integrable}
	A real value function $S: \cP\times \Omega\to\bbR$ is ``$\cP$-quasi-integrable" for every $F\in\cP$ if the integral $S(F, Q) = \int S(P, y) dQ(y)$ exists for all $F,Q\in\cP$, though it may not necessarily be finite.
\end{definition}

\begin{definition}\label{def:score_rule}
	A real value function $S: \cP \times \Omega \to \mathbb{R}$ is called a ``scoring rule" if $S(F, \cdot)$ is $\cP$-quasi-integrable for all $F \in \cP$.
\end{definition}

\begin{definition}\label{def:proper_score_rule}
	A scoring rule $S$ is ``proper" relative to $\cP$ if 
	\begin{equation}\label{eq:proper_score_rule}
		S(Q, Q) \leq S(F, Q), \quad \forall F, Q \in \cP,
	\end{equation} 
	and is ``strictly proper" relative to $\cP$ if \eqref{eq:proper_score_rule} holds with equality if and only $F=Q$.
\end{definition}

\begin{theorem}\label{thm:Sh_score_rule}
	Suppose both $S_l$ and $S_e$ are $\cP$-quasi-integrable, the hybrid score $S_h$ in \eqref{eq:S_h} is a strictly proper scoring rule.
\end{theorem}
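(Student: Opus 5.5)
The plan is to use linearity of the expected score in the forecast-evaluation pairing, together with the known propriety of the two constituent scores. Since $S_l$ and $S_e$ are both $\cP$-quasi-integrable, I will first check that $S_h(F,\cdot)=\eta S_l(F,\cdot)+(1-\eta)S_e(F,\cdot)$ is quasi-integrable. This gives
\[
S_h(F,Q)=\eta\, S_l(F,Q)+(1-\eta)\, S_e(F,Q),
\]
so $S_h$ is a scoring rule in the sense of Definition~\ref{def:score_rule}. The only subtlety is to avoid an $\infty-\infty$ form. For that I would note that $S_e(F,Q)$ is finite whenever first moments exist. I would also note that the negative part of $S_l$ is controlled, so the sum of the two integrals is well defined in $(-\infty,\infty]$. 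I would restrict $\cP$ accordingly, namely to distributions with densities and finite first moments, which is the setting of IGMMs and of the targets considered.

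Next I would invoke two classical facts from \citet{gneiting2007strictly}. First, the logarithmic score is strictly proper: $S_l(F,Q)-S_l(Q,Q)=\mathrm{KL}(Q\Vert F)\ge 0$, with equality iff $F=Q$ as densities ($Q$-a.e., hence as measures). Second, the energy score is strictly proper on distributions with finite first moment. For $S_e$ this follows from
\[
S_e(F,Q)-S_e(Q,Q)=\bbE\|z-y\|-\tfrac12\bbE\|z-z'\|-\tfrac12\bbE\|y-y'\|\ge 0,
\]
which is half the energy distance. The energy distance equals a weighted $L^2$ distance between characteristic functions, and so it vanishes iff $F=Q$. I would cite rather than reprove this.

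Then, for $\eta\in[0,1]$, taking the convex combination of the two inequalities gives
\[
S_h(F,Q)-S_h(Q,Q)=\eta\,[S_l(F,Q)-S_l(Q,Q)]+(1-\eta)[S_e(F,Q)-S_e(Q,Q)]\ge 0,
\]
which is propriety. For strictness, both brackets are nonnegative, so equality forces each bracket weighted by a positive coefficient to vanish. Since at least one of $\eta$, $1-\eta$ is positive, one of the two strictly proper scores attains equality, and hence $F=Q$. Conversely, $F=Q$ trivially gives equality.

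The main obstacle is the integrability bookkeeping rather than the inequality itself. The differences $S(F,Q)-S(Q,Q)$ must make sense, which means $S_h(Q,Q)$ has to be finite. I would handle this by assuming, as is implicit in the hypothesis, that $S_l(Q,Q)$ and $S_e(Q,Q)$ are finite; in particular $Q$ has finite first moment and finite entropy. When $S_h(F,Q)=+\infty$, the inequality is trivial and strict. I would state this convention explicitly in the proof.
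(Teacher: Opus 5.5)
Your proposal is correct and follows essentially the same route as the paper: write $S_h(F,Q)-S_h(Q,Q)$ as the $\eta$-weighted combination of the log-score and energy-score differences and invoke strict propriety of each component. The paper proves the two components are strictly proper directly (via Jensen's inequality and the energy distance) instead of citing Gneiting and Raftery. Your integrability bookkeeping (densities, finite first moments, finite $S_h(Q,Q)$) and your point that strictness only needs the positively weighted component are refinements the paper leaves implicit.
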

Theorem \ref{thm:Sh_score_rule} formalizes the key design principle of NE-GMM: combining NLL (logarithmic score) with ES preserves the compatibility property of proper scoring rules, ensuring that the optimization of the hybrid loss effectively approximates the true distribution.

The remaining results focus on finite-sample guarantees for minimizing the empirical hybrid loss.
We state intermediate steps explicitly to clarify the role of each statement: Assumption~\ref{ass:bounded} prevents degenerate parameters and provides tail control for $y$; Lemma~\ref{lemma:y_concentrated} establishes high-probability concentration properties for $y$ that allow us to work on a bounded event; Lemmas~\ref{lemma:Sl_Lipschitz}--\ref{lemma:Se_Lipschitz} show that the per-sample scores are Lipschitz on that event, enabling a contraction argument; Theorem~\ref{thm:GeneralizationError} then bounds the population--empirical hybrid risk gap by the Rademacher complexity of the underlying neural network parameter class.

To further investigate the theoretical properties of NE-GMM in model complexity and training loss of NE-GMM, we introduce the following assumption and definitions.  
\begin{assumption}\label{ass:bounded}
	The mean functions and variance functions in the IGMM are bounded: 
	$|\mu_k(x)|\leq M_{\mu}$ and $\sigma^2_k(x)\in [\sigma^2_{\min},\sigma^2_{\max}]$ for all $k$ and $x\in \cX$, where $M_{\mu}>0, \sigma_{\max}\geq\sigma_{\min}>0$. 
\end{assumption}
The above assumptions for $\mu_k(.)$ and $\sigma_k^2(.)$ can be easily satisfied by adding truncation on the outputs of the neural networks.

\begin{lemma}\label{lemma:y_concentrated}
Under Assumption \ref{ass:bounded}, for any $\alpha\in(0,1)$, there exists $M_c=2M_{\mu}+\sigma_{\max}\sqrt{2\log(2/\alpha)}>0$, such that
$$P\left(| y-\mu_k(x)|\geq M_c \right) \leq \alpha,$$
for all $k$ and $x\in\cX$.
\end{lemma}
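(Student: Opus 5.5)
The natural reading of the lemma is that $y$ is drawn from the fitted IGMM $p(y\mid x)=\sum_{j=1}^K \pi_j(x)\phi(y;\mu_j(x),\sigma_j^2(x))$ under Assumption~\ref{ass:bounded}. I would bound the tail of $y$ component by component with a Gaussian tail inequality. Then I would shift from $y$ to $y-\mu_k(x)$ using the triangle inequality $|\mu_j(x)-\mu_k(x)|\le 2M_{\mu}$. The constant $2M_\mu$ in $M_c$ is exactly this shift, and the term $\sigma_{\max}\sqrt{2\log(2/\alpha)}$ is the Gaussian quantile bound for two-sided tails.

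\textbf{Step 1 (decomposition).} For fixed $x$ and $k$, conditioning on the latent component index $J$ with $P(J=j)=\pi_j(x)$ gives
\begin{equation*}
P\left(|y-\mu_k(x)|\ge M_c\right)=\sum_{j=1}^K \pi_j(x)\,P\left(|y-\mu_k(x)|\ge M_c \,\middle|\, J=j\right).
\end{equation*}
Since the weights sum to one, it suffices to bound each conditional probability by $\alpha$, uniformly in $j$, $k$ and $x$.

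\textbf{Step 2 (shift and Gaussian tail).} Given $J=j$, write $y=\mu_j(x)+\sigma_j(x)Z$ with $Z\sim\cN(0,1)$. Then
\begin{equation*}
|y-\mu_k(x)|\le \sigma_j(x)|Z|+|\mu_j(x)-\mu_k(x)|\le \sigma_{\max}|Z|+2M_{\mu}.
\end{equation*}
So the event $|y-\mu_k(x)|\ge M_c$ implies $|Z|\ge \sqrt{2\log(2/\alpha)}$. The standard Chernoff bound $P(|Z|\ge t)\le 2e^{-t^2/2}$ then gives probability at most $2\cdot(\alpha/2)=\alpha$. This bound is uniform in $j$, $k$ and $x$ because only $\sigma_{\max}$ and $M_\mu$ enter.

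\textbf{Step 3 (assemble).} Plugging the uniform bound from Step 2 into Step 1 gives $P(|y-\mu_k(x)|\ge M_c)\le\sum_j\pi_j(x)\alpha=\alpha$, for all $k$ and all $x\in\cX$.

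\textbf{Main obstacle.} The computation is routine; the delicate point is the modelling assumption. The argument needs the data-generating conditional law of $y$ given $x$ to be a Gaussian mixture whose means and variances satisfy Assumption~\ref{ass:bounded}. If instead $y$ follows an arbitrary true distribution, I would need a sub-Gaussian condition on $y-\mu_j(x)$ with variance proxy at most $\sigma_{\max}^2$, and the same two steps would go through verbatim. I would state this reading explicitly at the start of the proof. I would also note that the Chernoff tail is used in place of the sharper Mills-ratio bound because it yields exactly the stated constant.
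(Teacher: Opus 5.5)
Your proof is correct and follows essentially the same route as the paper: condition on the latent mixture component, absorb the mean shift $|\mu_j(x)-\mu_k(x)|\le 2M_\mu$, and apply a Gaussian Chernoff tail bound using $\sigma_j(x)\le\sigma_{\max}$. Your triangle-inequality reduction to a centered standard normal is actually slightly cleaner than the paper's step. The paper instead invokes a non-centered Chernoff bound with exponent $(a-t)^2$, which is only valid in the form $(t-|a|)^2$ for $t\ge|a|$, and your shift argument delivers exactly that form.
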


\begin{definition}
	Let $\cG$ be a class of real-valued functions mapping from $\cX$ to $\bbR$, and $A=\{a_1,\ldots,a_N\}$ are samples from $\cX$. The empirical Rademacher complexity of $\cG$ is defined as:
	$$\hat{\cR}_A(\cG)=\bbE_{\xi}\left[\sup_{g\in\cG} \left( \frac{1}{N} \sum_{i=1}^N \xi_i g(a_i)\right)\right],$$
	where $\xi_1,\ldots,\xi_N$ are Rademacher variables, which are independent  uniformly chosen from \{-1,1\}.
\end{definition}

\begin{definition}
	The Rademacher complexity of $\cG$ is defined as the expectation of the empirical Rademacher complexity over the samples $A$:
	$$\cR_N(\cG)=\bbE_A\left[\hat{\cR}_A(\cG)\right].$$
\end{definition}

To connect statistical complexity of the neural parameterization to the complexity of the induced loss, we require the scoring functions to be stable under small perturbations of the predicted mixture parameters.
Lemmas~\ref{lemma:Sl_Lipschitz} and \ref{lemma:Se_Lipschitz} provide this stability in the form of high-probability Lipschitz bounds.
These bounds are key for applying a contraction inequality in the Rademacher complexity analysis: they allow us to control the complexity of the composed class $(x,y)\mapsto S_h(F_{\bpsi}(x),y)$ by that of $x\mapsto F_{\bpsi}(x)$.

\begin{lemma}[Lipschitz continuity of $S_l$]\label{lemma:Sl_Lipschitz}
    Under Assumption~\ref{ass:bounded}, for any $\alpha \in (0,1)$, there exists a constant $C_l > 0$ such that the following holds with probability at least $1-\alpha$:
    \begin{eqnarray}
        \Vert S_l(F_{\bpsi}(x),y) - S_l(F_{\bpsi}(x'),y') \Vert_1 \leq C_l \Vert (F_{\bpsi}(x),y) - (F_{\bpsi}(x'),y') \Vert_1. 
    \end{eqnarray}
\end{lemma}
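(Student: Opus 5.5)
Our plan is to treat $S_l$ as a smooth function of the $(3K+1)$-dimensional vector $(\btheta,y)=(\{\pi_k,\mu_k,\sigma_k\}_{k=1}^K,y)$, restrict it to a compact convex region on which its gradient is bounded, and then apply the mean value theorem. Working on the event $E=\{|y-\mu_k(x)|< M_c \text{ for all } k\}$ from Lemma~\ref{lemma:y_concentrated}, Assumption~\ref{ass:bounded} gives $|y|\le M_\mu+M_c$. Hence all relevant arguments lie in the box
\[
D=[0,1]^K\times[-M_\mu,M_\mu]^K\times[\sigma_{\min},\sigma_{\max}]^K\times[-M_\mu-M_c,\,M_\mu+M_c],
\]
which is convex. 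For two points $(F_{\bpsi}(x),y)$ and $(F_{\bpsi}(x'),y')$ in $D$, the segment joining them stays in $D$. The mean value theorem then gives $|S_l(F_{\bpsi}(x),y)-S_l(F_{\bpsi}(x'),y')|\le \sup_D\|\nabla S_l\|_\infty\,\|(F_{\bpsi}(x),y)-(F_{\bpsi}(x'),y')\|_1$, so we may take $C_l=\sup_D\|\nabla S_l\|_\infty$.

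The main step is bounding the gradient on $D$. We use Proposition~\ref{prop:IGMM_Sl_grad} for the $\pi_k,\mu_k,\sigma_k$ derivatives, and compute the $y$-derivative directly as $\partial S_l/\partial y=\sum_k \pi_k r_k (y-\mu_k)/\sigma_k^2$. Every term involves the responsibility $r_k=\phi_k/\sum_l\pi_l\phi_l$. On $D$ the factors $(\mu_k-y)/\sigma_k^2$ and $1/\sigma_k-(\mu_k-y)^2/\sigma_k^3$ are bounded by constants depending on $M_c$, $M_\mu$ and $\sigma_{\min}$. Moreover $\pi_k r_k\le 1$, since the $\pi_kr_k$ are posterior weights summing to one. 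This handles the $\mu_k$, $\sigma_k$ and $y$ derivatives.

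The obstacle is $\partial S_l/\partial\pi_k=-r_k$, because $r_k$ is unbounded when the $\pi_l$ are near zero. We handle this with a uniform bound on the densities. On $D$, each $\phi_l(y)$ lies between $\phi_{\min}=(2\pi\sigma_{\max}^2)^{-1/2}e^{-M_c^2/(2\sigma_{\min}^2)}$ and $\phi_{\max}=(2\pi\sigma_{\min}^2)^{-1/2}$. Since $\sum_l\pi_l=1$, the denominator is at least $\phi_{\min}$, so $r_k\le\phi_{\max}/\phi_{\min}$, a finite constant depending only on $M_\mu,\sigma_{\min},\sigma_{\max},\alpha$. (Equivalently, the $\pi$-derivative should be taken along the simplex; either way this constant bound suffices.) Lower bounding the mixture density through the concentration event is exactly what makes the $\pi$-gradient finite, and it is the reason the Lipschitz constant depends on $\alpha$ through $M_c$.

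Finally, Lemma~\ref{lemma:y_concentrated} ensures the event holds with probability at least $1-\alpha$, after splitting $\alpha$ over the $K$ components and the two samples via a union bound and adjusting $M_c$. Putting the entrywise gradient bounds together yields an explicit $C_l$ and completes the proof.
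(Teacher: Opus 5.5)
Your proposal is correct and follows the paper's skeleton: the mean value theorem along the segment, followed by termwise gradient bounds that use $\sum_k\pi_kr_k=1$ and the concentration event. The genuine difference is in the one delicate term, $\partial S_l/\partial\pi_k=-r_k$.

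The paper sets $\pi_{\min}=\min_{k,x}\pi_k(x)$, asserts $\pi_{\min}>0$ from $0<\pi_k(x)<1$, and uses $\sum_k r_k\le 1/\pi_{\min}$. You instead lower-bound every component density on the concentration event. This gives $r_k\le\phi_{\max}/\phi_{\min}=(\sigma_{\max}/\sigma_{\min})e^{M^2/(2\sigma_{\min}^2)}$, where $M$ bounds $|y-\mu_l|$.

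The two constants trade off differently:
\begin{itemize}
\item The paper's constant is polynomial. However, it needs a uniform positive lower bound on the mixture weights, which is not in Assumption~\ref{ass:bounded} and does not follow from pointwise positivity of $\pi_k(x)$.
\item Your constant is exponentially large in $M^2/\sigma_{\min}^2$. However, it depends only on $M_\mu,\sigma_{\min},\sigma_{\max},\alpha$ (and $K$). It is therefore uniform over any network class obeying the assumption, which is what the contraction step in Theorem~\ref{thm:GeneralizationError} actually uses.
\end{itemize}
You are also more careful than the paper about the union bound over the $K$ components and the two samples. Your dual-norm constant $\sup\|\nabla S_l\|_\infty$ is sharper than the paper's $\|\nabla S_l\|_1$.

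Two small repairs are needed.
\begin{enumerate}
\item Your denominator bound uses $\sum_l\pi_l=1$. So the $\pi$-factor of $D$ should be the probability simplex rather than $[0,1]^K$; the segment stays in the simplex by convexity.
\item On the box $D$ as written you only have $|y-\mu_l|\le 2M_\mu+M_c$, not $M_c$. Either put $(2M_\mu+M_c)^2$ in the exponent of $\phi_{\min}$, or intersect $D$ with the convex set $\{|y-\mu_k|\le M_c\ \forall k\}$. The segment between two points of the event never leaves that set, since $|y_c-\mu_{k,c}|\le (1-t)|y-\mu_k(x)|+t|y'-\mu_k(x')|$. With that intersection, your stated $\phi_{\min}$ is valid.
\end{enumerate}
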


\begin{lemma}[Lipschitz continuity of $S_e$]\label{lemma:Se_Lipschitz}
    Under Assumption~\ref{ass:bounded}, for any $\alpha \in (0,1)$, there exists a constant $C_e > 0$ such that the following holds with probability at least $1-\alpha$:
    \begin{eqnarray}
        \Vert S_e(F_{\bpsi}(x),y) - S_e(F_{\bpsi}(x'),y') \Vert_1 \leq C_e \Vert (F_{\bpsi}(x),y) - (F_{\bpsi}(x'),y') \Vert_1.
    \end{eqnarray}
\end{lemma}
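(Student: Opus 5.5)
We will prove the Lipschitz bound for $S_e$ by viewing $S_e(F_{\bpsi}(x),y)$, via Theorem~\ref{thm:int_ES}, as a smooth function $G(\btheta,y)$ of the $3K$ mixture parameters $\btheta=\{(\pi_k,\mu_k,\sigma_k)\}_{k=1}^K$ and the target $y$. We then bound the gradient of $G$ uniformly on a compact set, and apply the mean value theorem along the segment joining $(F_{\bpsi}(x),y)$ and $(F_{\bpsi}(x'),y')$. The probabilistic qualifier comes only from restricting $y$ (and $y'$) to a bounded range, using Lemma~\ref{lemma:y_concentrated}.

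\textbf{Step 1 (compact domain).} By Assumption~\ref{ass:bounded}, the parameter vector $F_{\bpsi}(x)$ lies in
\[
\Theta=\Delta_K\times[-M_\mu,M_\mu]^K\times[\sigma_{\min},\sigma_{\max}]^K,
\]
which is convex and compact. By Lemma~\ref{lemma:y_concentrated} (taking $\alpha/2$ for each of $y,y'$ and a union bound), with probability at least $1-\alpha$ we have $|y-\mu_k(x)|\le M_c$. Hence $|y|\le M_\mu+M_c=:M_y$, and likewise for $y'$. On this event both points lie in the convex compact set $\Theta\times[-M_y,M_y]$, so the segment between them does as well.

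\textbf{Step 2 (gradient bounds).} Proposition~\ref{prop:grad_Se} gives the partial derivatives with respect to $\pi_k,\mu_k,\sigma_k$. We bound each term on the compact set.
\begin{itemize}[noitemsep]
\item $|2\Phi(\cdot)-1|\le 1$, so $|\partial S_e/\partial\mu_k|\le 2$.
\item $\phi\le 1/\sqrt{2\pi}$ and $\sigma_k/\sqrt{\sigma_k^2+\sigma_l^2}\le 1$, so $|\partial S_e/\partial\sigma_k|\le 4/\sqrt{2\pi}$.
\item $|A_k|\le \sigma_{\max}\sqrt{2/\pi}+M_\mu+M_y$ and $|B_{kl}|\le \sqrt{2}\sigma_{\max}\sqrt{2/\pi}+2M_\mu$, so $|\partial S_e/\partial\pi_k|$ is bounded by a constant.
\end{itemize}
For the $y$-direction we differentiate $A_m$ directly. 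Using $\phi'(w)=-w\phi(w)$, the Gaussian terms cancel and
\[
\partial A_m/\partial y=-(2\Phi(w_m)-1),
\]
so that $|\partial S_e/\partial y|\le\sum_m\pi_m=1$. Collecting these, the $\ell_\infty$ norm of $\nabla G$ is bounded by an explicit constant $C_e$ depending on $M_\mu,\sigma_{\max},M_c$ and thus on $\alpha$. By the mean value theorem and Hölder's inequality,
\[
|G(\btheta,y)-G(\btheta',y')|\le C_e\,\Vert(\btheta,y)-(\btheta',y')\Vert_1 .
\]

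\textbf{Main obstacle.} The main subtlety is the $\pi_k$ direction. Because the simplex constraint couples the weights, the partial derivatives from Proposition~\ref{prop:grad_Se} treat the $\pi_k$ as free coordinates. This is legitimate here, since $G$ is defined and smooth on an open neighbourhood of the box containing $\Delta_K$, and the mean value theorem only needs the segment, which stays in the convex set. A related point is that $\sigma_{\min}>0$ is not actually needed for $S_e$, unlike for $S_l$. We only need $\sigma_k>0$, and we should check that no term such as $w_k$ blows up in an unbounded way. Since $w_k$ enters only through the bounded functions $\Phi$ and $\phi$, this is immediate. Finally, the dependence of $C_e$ on $\alpha$ enters only through $M_y$ in the bound on $A_k$.
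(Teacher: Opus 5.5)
Your proposal is correct and follows essentially the same route as the paper. Both arguments apply the mean value theorem to the closed form of Theorem~\ref{thm:int_ES}, bound the partial derivatives from Proposition~\ref{prop:grad_Se} (together with $|\partial S_e/\partial y|\le 1$) using $|2\Phi-1|\le 1$, $\phi\le 1/\sqrt{2\pi}$ and the bounds on $A_k,B_{kl}$, and obtain the probability qualifier from Lemma~\ref{lemma:y_concentrated}. Your version is somewhat more careful than the paper's: you control $y$ and $y'$ jointly by a union bound, check that the segment stays in a convex admissible set, pair an $\ell_\infty$ gradient bound with H\"older's inequality, and correctly use $\sqrt{2}\sigma_{\max}$ in the bound on $B_{kl}$ with no $1/\sigma_{\min}$ factor in the $\sigma_k$-derivative.
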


Let $\cF$ be the class of functions that map $\cX$ to $\bbR^{3K}$. We can then apply symmetrization and concentration to obtain a uniform bound on the deviation between expected and empirical hybrid risks over the function class $\cF$. 
\begin{theorem}\label{thm:GeneralizationError}
	For any $\delta \in (0,1)$, with probability at least $1-\delta$, the following inequality holds for all $F_{\bpsi}\in \cF$:
	$$\bbE[L_{h,\cD}(F_{\bpsi})] - L_{h,\cD}(F_{\bpsi}) \leq 4C_h\cR_N(\cF) + M_h\sqrt{\frac{\log(1/\delta)}{2N}},$$
	for some $C_h>0$ and $M_h>0$.
\end{theorem}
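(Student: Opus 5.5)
The plan is the standard symmetrization--contraction--concentration route for uniform deviation bounds. Lemmas~\ref{lemma:Sl_Lipschitz} and \ref{lemma:Se_Lipschitz} supply the Lipschitz control of the per-sample loss, and Lemma~\ref{lemma:y_concentrated} supplies boundedness.

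\textbf{Step 1 (bounded event).} Fix $\alpha$ and work on the event where $|y-\mu_k(x)|\le M_c$ for all samples and components. On this event, together with Assumption~\ref{ass:bounded}, the score $S_l$ lies in a bounded interval. Indeed, $-\log$ of a mixture density is at least $\log(\sqrt{2\pi}\sigma_{\min})$ and at most $\log(\sqrt{2\pi}\sigma_{\max})+M_c^2/(2\sigma_{\min}^2)$. The score $S_e$ is bounded as well, since each $A_m$ and $B_{ml}$ in Theorem~\ref{thm:int_ES} is at most a constant times $\sigma_{\max}+M_c+2M_\mu$, and is nonnegative. Hence $S_h$ takes values in an interval of length $M_h$, which depends only on $M_\mu,\sigma_{\min},\sigma_{\max},M_c$ and $\eta$. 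The failure probability of this event is later absorbed into $\delta$, for instance by choosing $\alpha$ proportional to $\delta$.

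\textbf{Step 2 (concentration and symmetrization).} Define $\Phi(\cD)=\sup_{F_{\bpsi}\in\cF}\bigl(\bbE[L_{h,\cD}(F_{\bpsi})]-L_{h,\cD}(F_{\bpsi})\bigr)$. Replacing a single sample changes $\Phi$ by at most $M_h/N$. McDiarmid's inequality then gives $\Phi\le\bbE\Phi+M_h\sqrt{\log(1/\delta)/(2N)}$ with probability at least $1-\delta$. Symmetrization with a ghost sample yields $\bbE\Phi\le 2\cR_N(\mathcal{S}_h\circ\cF)$, where $\mathcal{S}_h\circ\cF=\{(x,y)\mapsto S_h(F_{\bpsi}(x),y)\}$.

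\textbf{Step 3 (contraction).} By Lemmas~\ref{lemma:Sl_Lipschitz}--\ref{lemma:Se_Lipschitz}, $S_h$ is $C_h$-Lipschitz in the $\ell_1$ norm of its $3K$ parameter arguments on the good event, with $C_h=\eta C_l+(1-\eta)C_e$. Because $y$ is held fixed inside each sample term, only the dependence on $F_{\bpsi}(x)$ matters. Apply a vector-valued contraction inequality, either Maurer's or Talagrand's lemma applied coordinatewise, to get $\cR_N(\mathcal{S}_h\circ\cF)\le 2C_h\cR_N(\cF)$. Here $\cR_N(\cF)$ is understood as the complexity of the vector class, that is, the sum over output coordinates. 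The extra factor $2$ from the contraction combines with the factor $2$ from symmetrization to give the constant $4C_h$.

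\textbf{Main obstacle.} The delicate point is that the Lipschitz and boundedness properties hold only with high probability, because $y$ is unbounded. The cleanest fix is to run the whole argument on the truncated loss $S_h\cdot\mathbf 1\{|y-\mu_k(x)|\le M_c\}$, or on a clipped version of $S_h$, which is globally bounded and Lipschitz. One then argues that on the good event the empirical loss is unchanged. The discrepancy in the expected loss, which is controlled by the Gaussian-type tails, is absorbed into the constants. A second, lesser issue is justifying contraction for a map from $\bbR^{3K}$. I would handle it by interpreting $\cR_N(\cF)$ as the summed coordinatewise complexity, which is consistent with the $\ell_1$ Lipschitz form used in the lemmas.
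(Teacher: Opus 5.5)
Your argument uses the same three ingredients as the paper's proof: McDiarmid with bounded differences of order $M/N$, ghost-sample symmetrization giving $2\cR_N(S\circ\cF)$, and contraction contributing a further factor $2C$. The structural difference is that you run the argument once on $S_h$. The paper instead proves separate bounds for $L_{l,\cD}$ and $L_{e,\cD}$, each holding with probability $1-\delta$ and carrying a $\log(2/\delta-1)$ term from splitting $\delta$ between the bounded event and McDiarmid, and then takes the $\eta$-weighted combination. The modular route gives standalone bounds for the NLL and ES losses. However, the paper's last step intersects two events of probability $1-\delta$ without a union bound, so as written it only holds with probability $1-2\delta$. Your joint route needs a single split of $\delta$ and gives $C_h=\eta C_l+(1-\eta)C_e$, with the factor $4=2\cdot 2$ exactly as in the statement. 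Your truncation device also repairs a point the paper glosses over. The paper feeds McDiarmid and the contraction lemma with bounded differences and Lipschitz constants that hold only ``with probability $1-\alpha$'' for a single sample, whereas both tools need them deterministically.

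Three details need tightening.

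First, truncate with an indicator that depends on the data only, such as $\mathbf{1}\{|y|\le M_c-M_\mu\}$. The indicator $\mathbf{1}\{|y-\mu_k(x)|\le M_c\}$ depends on the predicted parameters and is discontinuous in them, which would break contraction. Your clipping alternative is fine.

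Second, the good event must hold for all $N$ samples simultaneously, so $\alpha$ has to be of order $\delta/N$, not merely proportional to $\delta$. Then $M_c$, and hence $C_h$ and $M_h$, grows like $\sqrt{\log(N/\delta)}$. The tail discrepancy $\sup_{\cF}\bbE[|S_h|\mathbf{1}\{|y|>M_c-M_\mu\}]=O(\alpha\log(1/\alpha))$ is then absorbed into $M_h$. This is permitted by ``for some $C_h,M_h$'' and parallels the paper's own $\delta$-dependent constants.

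Third, the scalar Ledoux--Talagrand lemma cannot be applied coordinatewise to the non-separable map $u\mapsto S_h(u,y_i)$. Use Maurer's vector contraction instead, which requires $\ell_2$-Lipschitzness. An $\ell_1$-Lipschitz constant alone would cost an extra factor $\sqrt{3K}$. But the proofs of Lemmas~\ref{lemma:Sl_Lipschitz} and \ref{lemma:Se_Lipschitz} actually bound $\Vert\nabla S\Vert_1$, and $\Vert\nabla S\Vert_1\ge\Vert\nabla S\Vert_2$. So Maurer's lemma gives $\cR_N(\mathcal{S}_h\circ\cF)\le\sqrt{2}\,C_h\sum_k\cR_N(\cF_k)$, within your claimed $2C_h$.
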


Theorem \ref{thm:GeneralizationError} shows that the generalization error is bounded by the neural network's complexity (measured by $\cR_N(\cF)$) and the dataset size $N$. 
$\cR_N (\cF)$ depends on the input dimension and the neural network architecture, including its depth, width, weight norms, and activation function. These aspects have been extensively studied in prior work \cite{neyshabur2015norm,pmlr-v75-golowich18a} and \cite{nips2023_8493e190}, and are not the primary focus of our analysis here.

\section{Experiments}\label{sec:experiment}
To evaluate the performance of the proposed NE-GMM, we conduct experiments on two toy regression tasks with heteroscedastic or multimodal noises following similar settings as in \citet{harakeh2023estimating} and \citet{el2023deep}, and two widely used real-world prediction tasks for UCI regression \citep{hernandez-lobatoc15} and financial time series forecasting \citep{el2023deep}.  
The performance of NE-GMM is compared to five competing methods, including $\beta$-NLL \citep{seitzer2022}, ensemble neural networks (Ensemble-NN) \citep{NIPS2017_9ef2ed4b}, NGBoost \citep{pmlr-v119-duan20a}, MDN \citep{bishop1994mixture}, and SampleNet \citep{harakeh2023estimating}. Hyperparameters of the involved methods are provided in Table~\ref{tab:hyperparameters} of Appendix~\ref{sec:hyperparameter_setting}.

\subsection{Toy Regressions} \label{sec:synthetic}
We begin with two toy regression tasks to evaluate the capability of NE-GMM in modeling heteroscedastic and multimodal noise.
\begin{itemize}
	\item Example 1 - Heteroscedastic Noise: we assume that $y(x)=x\sin(x)+x\varepsilon_1+\varepsilon_2$, where $x\in [-1,11]$, and $\varepsilon_1,\ \varepsilon_2\sim\cN(0, 0.09)$. 
	The true mean and variance functions of this example are $m(x)=x\sin(x)$ and $s^2(x)=0.09(x^2+1)$, respectively.
	\item Example 2 - Bimodal Noise: we assume $y(x)=Ux^3+\varepsilon$, where $U\in\{-1,1\}$ is a random variable following a Bernoulli distribution with $P(U=-1)=0.3$, $x\in [-4,4]$, and $\varepsilon\sim \cN(0,9)$.
	The true conditional distribution of $y\mid x$ under this regression model is an IGMM with two Gaussian components of the following parameters: $\pi=(\pi_1,\pi_2)=(0.3,0.7),\ \mu_1(x)=-x^3,\ \mu_2(x)=x^3,\ \sigma_1(x)=\sigma_2(x)=3$. The overall mean and variance function are $m(x)=0.4x^3$ and $s^2(x)=9+0.138x^6$.
\end{itemize}

For both examples, we generate $N$ training samples ($N=600$ for Example 1 and $N=1000$ for Example 2), along with $0.2N$ validation samples and 300 test samples. Since the true mean function $m(\cdot)$ and standard deviation function $s(\cdot)$ are known, we  evaluate the root mean squared error (RMSE) for $m(\cdot)$ and $s(\cdot)$ on the test sets. The batch size is set to 32, and all methods use a single-layer neural network with 50 hidden units and Tanh activations as feature extractors. The learning rate, chosen from $\{0.001, 0.005, 0.01\}$, is tuned alongside other model parameters based on RMSEs of the combined $m(\cdot)$ and $s(\cdot)$ on the validation set. Each experiment is repeated 50 times, and average results are reported.

For Example 1, both SampleNet and NE-GMM substantially outperform purely likelihood-based approaches ($\beta$-NLL, Ensemble-NN, NGBoost, and MDN) in estimating $m(\cdot)$ and $s(\cdot)$. Importantly, NE-GMM achieves the best overall performance, improving the RMSE of the mean estimator by 5.7\% and the RMSE of the standard deviation estimator by 78.2\% compared to the second-best SampleNet (Table~\ref{tab:rmse_example1}). Figure~\ref{fig:example1} further illustrates that NE-GMM maintains high robustness even in regions with high variance.

In Example 2, the experimental setup is specifically designed to evaluate the ability of methods to represent and learn multimodality. Methods that predict a single parametric distribution or rely solely on sample-based forecasts without explicit mixture components ($\beta$-NLL, Ensemble-NN, NGBoost, and SampleNet) are unable to identify component-specific parameters. These methods fail to recover the bimodal conditional distribution, as they cannot estimate the individual component means $\mu_k(x)$ and standard deviations  $\sigma_k(x)$. Consequently, their performance is assessed only on the overall mean $m(x)$ and standard deviation $s(x)$ as shown in Table~\ref{tab:rmse_example2} and Figure~\ref{fig:example2}. By contrast, NE-GMM accurately estimates the component-specific parameters. It achieves significantly lower RMSE and successfully identifies the individual component means and standard deviations (constant functions uncorrelated with $x$). Furthermore, compared to a standard MDN, NE-GMM is much less prone to mode collapse in this symmetric bimodal setting.

Overall, these toy experiments validate the robustness of NE-GMM in handling both heteroscedastic and multimodal data. Its ability to effectively capture complex uncertainty structures, including mixture representations, demonstrates the practical advantages of incorporating the energy-score regularizer into the training process.

\begin{figure*}[!h]
	\centering
	\includegraphics[width=1\textwidth]{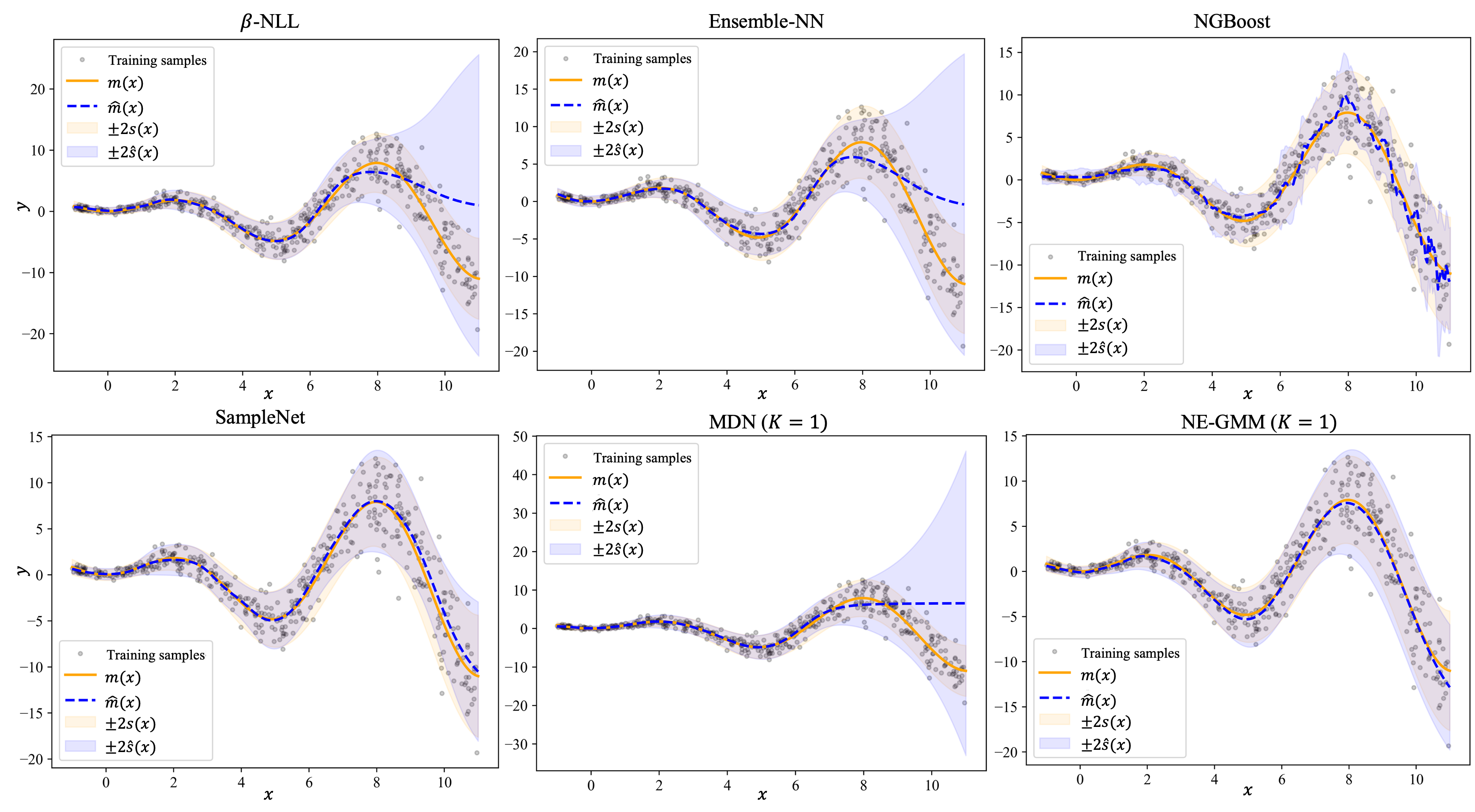}
	\caption{Predictive distribution plots for the toy regression example 1.}
	\label{fig:example1}
\end{figure*}

\begin{figure*}[!h]
	\centering
	\includegraphics[width=1\textwidth]{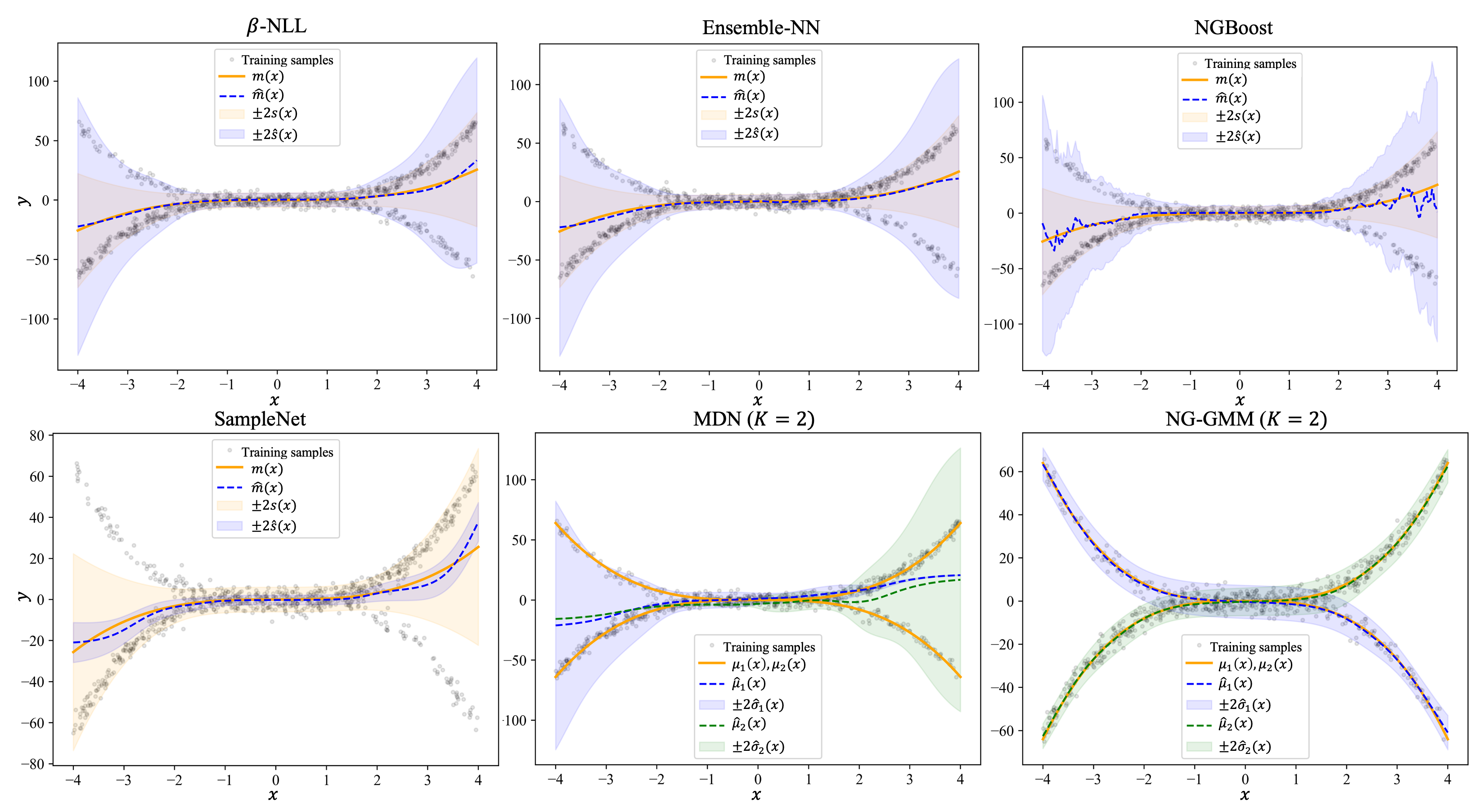}
	\caption{Predictive distribution plots for the toy regression example 2.}
	\label{fig:example2}
\end{figure*}

\begin{table*}[!h]
	\centering
    {\setlength{\tabcolsep}{3pt}
	\caption{The RMSE of $m(\cdot)$ and $s(\cdot)$ for the toy regression  example 1.}
    \label{tab:rmse_example1}
	\begin{tabular}{c|cccccc}
		\hline
		 & $\beta$-NLL & Ensemble-NN & NGBoost & SampleNet & MDN & NE-GMM \\ 
		\hline
		$m(\cdot)$ & 3.669$\pm$1.385 & 3.043$\pm$0.496 & 0.672$\pm$0.063 & 0.454$\pm$0.180 & 4.571$\pm$0.970 & \textbf{0.428$\pm$0.152} \\ 
		$s(\cdot)$ & 2.354$\pm$0.930 & 2.840$\pm$0.465 &	0.434$\pm$0.049 & 0.928$\pm$0.205 & 3.734$\pm$0.945 & \textbf{0.202$\pm$0.064} \\ 
		\hline
	\end{tabular}}
\end{table*}

\begin{table*}[!h]
    {\setlength{\tabcolsep}{3pt}
	\caption{The RMSE of $(m(\cdot), s(\cdot),\pi)$ for the toy regression example 2.}
    \label{tab:rmse_example2}
	\centering
	\begin{tabular}{c|cccccc}
		\hline
		   & $\beta$-NLL & Ensemble-NN & NGBoost & SampleNet & MDN & NE-GMM \\ \hline
		$m(\cdot)$ & 1.903$\pm$0.429 & 2.064$\pm$0.780 &	4.362$\pm$0.862 & 15.054$\pm$0.449 & 10.767$\pm$4.263 & \textbf{1.128$\pm$0.286} \\ 
		$s(\cdot)$ & 12.535$\pm$2.463 & 12.900$\pm$1.021 & 12.4$\pm$0.86 &  7.454$\pm$0.897 &   4.735$\pm$2.644 & \textbf{0.778$\pm$0.260} \\ 
		$\pi$ & None & None & None & None & $\ $0.382$\pm$0.044 & \textbf{0.095$\pm$0.024} \\ \hline
	\end{tabular}}
\end{table*}

\subsection{Regression on UCI Datasets}
To further evaluate the effectiveness of the proposed NE-GMM method, we conduct experiments on UCI benchmark datasets, which are widely used in the literature for real-world regression tasks. Details about the datasets can be found in Appendix \ref{sec:uci_description}. 
Each dataset is randomly split into 20 train-test splits with an 80\%-20\% ratio, except for the large Protein dataset, which is split into 5 train-test splits. 
Inputs and targets are standardized based on the training set, while evaluation metrics such as RMSE and NLL are reported in the data's original scale. 
Additionally, the Prediction Interval Coverage Probability (PICP) and Mean Prediction Interval Width (MPIW) at a 95\% confidence levels are evaluated to assess the reliability and precision of the model’s uncertainty estimates. 
The training set is further split into an 8:2 ratio for training and validation. All methods use the same network architecture: a single-layer neural network with 50 hidden units and ReLU activations. The model was trained for up to 2,000 epochs with a batch size of 32. The learning rate was selected from \{1e-4, 5e-4, 1e-3, 5e-3, 1e-2\} based on the best RMSE on the validation sets. All reported results represent the average across all test splits. It is important to note that these results are not directly comparable to those in other publications, as performance can vary significantly across different data splits. Furthermore, differences in the use of early stopping strategies may also affect comparability.

Tables~\ref{tab:rmse_uci} and \ref{tab:nll_uci} summarize predictive accuracy and distributional fit on the UCI benchmarks.
Overall, NE-GMM achieves the best (or tied-best) RMSE on the majority of datasets, indicating consistently strong point prediction accuracy.
In addition, NE-GMM attains best or near-best NLL on many datasets, suggesting that it not only predicts the mean well but also captures the conditional dispersion effectively.
Even in cases where another method attains the lowest NLL (e.g., Protein), NE-GMM remains competitive, demonstrating robustness across diverse data regimes.

Beyond RMSE and NLL, we assess calibration and sharpness through 95\% prediction intervals.
Appendix Tables~\ref{tab:picp_uci}--\ref{tab:mpiw_uci} show that NE-GMM typically produces PICP closer to the nominal 95\% while maintaining relatively narrow MPIW.
This ``well-calibrated yet sharp'' behavior is particularly desirable for uncertainty quantification and is consistent with the role of the energy-score term as a calibration-oriented regularizer.

Finally, Table~\ref{tab:cost_uci} highlights that NE-GMM is computationally efficient: it is substantially faster than Ensemble-NN and SampleNet, while being comparable to $\beta$-NLL and MDN.
Although NGBoost is often the fastest due to its tree-based structure, it generally lags behind in RMSE and NLL.
Taken together, the UCI results demonstrate that NE-GMM provides a favorable trade-off among accuracy, calibrated uncertainty, and training cost.

\begin{table*}[h]
    {\setlength{\tabcolsep}{1pt}
	\caption{Average RMSE on the test set for UCI regression datasets (``SC" stands for Superconductivity).}
    \label{tab:rmse_uci}
	\centering
	\begin{tabular}{l|cccccc}
		\hline
		Dataset & $\beta$-NLL & Ensemble-NN & NGBoost & SampleNet & MDN & NE-GMM \\
		\hline
		Boston & 2.836$\pm$0.277 &  2.676$\pm$0.178 & 3.130$\pm$0.195 &  2.856$\pm$0.226 & 2.766$\pm$0.195 & \textbf{2.556$\pm$0.205} \\ 
		Concrete & \textbf{5.381$\pm$0.528}  & 5.425$\pm$0.362 & 7.222$\pm$0.020 & 5.935$\pm$0.507 & 6.029$\pm$0.751 & 5.678$\pm$0.725 \\ 
		Energy & 1.944$\pm$0.589  & 2.187$\pm$0.266 & 1.952$\pm$0.216  & 1.866$\pm$0.831 & 1.985$\pm$0.297 & \textbf{1.806$\pm$0.092} \\
		Kin8nm & 0.08$\pm$0.003   &  0.083$\pm$0.003 & 0.197$\pm$0.001 & 0.141$\pm$0.070 & 0.082$\pm$0.001 & \textbf{0.078$\pm$0.001} \\
		Naval & 0.002$\pm$0.000   &  0.004$\pm$0.002 &  0.009$\pm$0.000  & 0.003$\pm$0.001 & \textbf{0.001$\pm$0.000} & \textbf{0.001$\pm$0.000} \\ 
		Protein & 4.072$\pm$0.048   & 4.656$\pm$0.386	& 5.280$\pm$0.033  & 4.370$\pm$0.098 & 4.070$\pm$0.051 & \textbf{4.014$\pm$0.052} \\ 
		SC & 12.588$\pm$0.440   & 13.010$\pm$0.226	& 17.377$\pm$0.288 & 11.943$\pm$0.091 & 11.515$\pm$0.141 & \textbf{11.034$\pm$0.078} \\ 
		WineRed & 0.646$\pm$0.037  &  0.637$\pm$0.027 &	0.644$\pm$0.007  & 0.688$\pm$0.023 & 0.619$\pm$0.014 & \textbf{0.613$\pm$0.017} \\ 
		WineWhite & 0.721$\pm$0.016   & 0.696$\pm$0.012 & 0.747$\pm$0.011  & 0.795$\pm$0.032 & 0.729$\pm$0.021 & \textbf{0.715$\pm$0.023} \\ 
		Yacht & 1.580$\pm$0.196  &  1.346$\pm$0.079 & \textbf{1.197$\pm$0.255} & 4.630$\pm$0.835 & 2.390$\pm$0.131 & 1.284$\pm$0.089 \\ \hline
	\end{tabular}}
\end{table*}

\begin{table*}[h]
	\centering
     {\setlength{\tabcolsep}{1pt}
	\caption{Average NLL on the test set for UCI regression datasets.}
    \label{tab:nll_uci}
	\begin{tabular}{l|cccccc}
		\hline
		Dataset & $\beta$-NLL & Ensemble-NN & NGBoost & SampleNet & MDN & NE-GMM \\
		\hline
		Boston & 4.538$\pm$0.957 &  5.408$\pm$0.590 & 4.259$\pm$0.845  & 4.014$\pm$0.84 & 3.113$\pm$0.397 & \textbf{2.589$\pm$0.125} \\ 
		Concrete & 6.125$\pm$1.087 & 5.437$\pm$1.315	& 6.228$\pm$0.458 & 5.486$\pm$0.958 & 4.501$\pm$0.850 & \textbf{3.649$\pm$0.479} \\
		Energy & 1.645$\pm$0.568 &  \textbf{1.602$\pm$0.415}	& 1.613$\pm$0.040   & 1.924$\pm$0.704 & 1.695$\pm$0.123 & 1.661$\pm$0.206 \\ 
		Kin8nm & \textbf{-1.127$\pm$0.036} &  -0.906$\pm$0.148 &	 -0.936$\pm$0.010  & -0.746$\pm$0.113 & -0.956$\pm$0.033 & -1.026$\pm$0.043 \\ 
		Naval & -6.445$\pm$0.104 &  -5.830$\pm$0.516	& -4.695$\pm$0.478  & -6.399$\pm$0.088 & -6.356$\pm$0.047 & \textbf{-6.730$\pm$0.177} \\ 
		Protein & 3.038$\pm$0.317 &  \textbf{2.528$\pm$0.357} & 2.860$\pm$0.005  & 2.972$\pm$0.124 & 2.776$\pm$0.049 & 2.825$\pm$0.114 \\ 
		SC & 4.185$\pm$0.28 &  4.098$\pm$0.445  & 4.019$\pm$0.023  & 4.548$\pm$0.535 & 4.519$\pm$0.367 & \textbf{3.950$\pm$0.104} \\ 
		WineRed & 3.536$\pm$0.361 &  1.983$\pm$0.433	& 1.520$\pm$0.171  & 3.571$\pm$0.412 & \textbf{1.099$\pm$0.111} & 1.125$\pm$0.151 \\ 
		WineWhite & 2.156$\pm$0.25 &  1.531$\pm$0.099 & 1.459$\pm$0.030  & 1.935$\pm$0.163 & 3.301$\pm$0.261 & \textbf{1.314$\pm$0.167} \\ 
		Yacht & 4.478$\pm$0.522 &  2.462$\pm$0.292  &	2.454$\pm$0.479  & 3.206$\pm$0.624 & 3.808$\pm$0.508 & \textbf{2.247$\pm$0.268} \\ \hline
	\end{tabular}}
\end{table*}

\subsection{Financial time series forecasting}
We conduct one-step-ahead forecasting for financial time series using historical daily price data obtained from Yahoo Finance. The forecasting problem is modeled with a two-layer long short-term memory (LSTM) network \citep{hochreiter1997long}, following the setup in \cite{el2023deep}. The input consists of the closing prices of a specific stock over the previous 30 trading days, while the target is the closing price of the next trading day.

The evaluation is conducted on three datasets, each representing distinct market regimes:
\begin{itemize}
	\item GOOG (stable market regime): Google stock prices (January 2019 – July 2022 for training; August 2022 – January 2023 for testing).
	\item RCL (market shock regime): Royal Caribbean stock prices (January 2019 – April 2020 for training; May 2020 – September 2020 for testing).
	\item GME (high volatility regime): Gamestop stock prices during the ``bubble" period (November 2020 – January 2022 for training; subsequent period for testing)
\end{itemize}

In addition to tuning method-specific parameters, we adjust the LSTM network's learning rate (\{1e-4, 5e-4, 1e-3, 5e-3, 1e-2\}), and dropout rate (\{0.05, 0.1, 0.15, 0.2\}). To perform hyperparameter tuning, we divided the full training period into a split, with 90\% of the data used for training and the remaining 10\% for validation. These were selected based on the configuration that minimized the RMSE on the validation set for each dataset. Each method is trained independently on each dataset for ten runs. We report the mean and standard error of the test RMSE and NLL, highlighting the best-performing method for each experiment in bold. Results are summarized in Tables \ref{tab:rmse_finance} and \ref{tab:nll_finance}.

Overall, NE-GMM consistently achieves superior performance in scenarios with high uncertainty or volatility, such as the RCL (market shock) and GME (high volatility) datasets. Its ability to model complex, multimodal distributions and capture predictive uncertainty enables it to generalize effectively in such challenging conditions. For the GOOG dataset, which reflects a stable market regime with relatively low uncertainty, simpler methods such as SampleNet perform slightly better in terms of RMSE and NLL. This is likely because the  extra mixture flexibility in NE-GMM,  while beneficial in handling uncertain regimes, provides limited advantages under low-uncertainty regimes. Nevertheless, NE-GMM remains competitive and delivers robust results, demonstrating its flexibility across different market conditions.

\begin{table*}[!h]
	\centering
     {\setlength{\tabcolsep}{1pt}
	\caption{Average RMSE on the test set for the financial datasets.}
    \label{tab:rmse_finance}
	\begin{tabular}{l|cccccc}
		\hline
		Dataset & $\beta$-NLL & Ensemble-NN & NGBoost & SampleNet & MDN & NE-GMM \\
		\hline
		GOOG & 6.448$\pm$0.117 &  7.610$\pm$0.126  &  7.790$\pm$0.113 &  \textbf{4.628$\pm$0.487} & 6.782$\pm$0.176 & 6.539$\pm$0.125 \\ 
		RCL & 16.867$\pm$1.684 & 20.648$\pm$0.355  &  19.433$\pm$0.835  & 20.422$\pm$0.729 & 19.160$\pm$1.033 & \textbf{14.578$\pm$0.843} \\ 
		GME & 8.994$\pm$0.403 &  8.799$\pm$0.631  & 7.802$\pm$0.164 & 9.068$\pm$1.349 & 7.190$\pm$1.521 & \textbf{5.737$\pm$0.333} \\ \hline
	\end{tabular}}
\end{table*}

\begin{table*}[!h]
	\centering
    {\setlength{\tabcolsep}{1pt}
	\caption{Average NLL on the test set for the financial datasets.}
    \label{tab:nll_finance}
	\begin{tabular}{l|cccccc}
		\hline
		Dataset & $\beta$-NLL  & Ensemble-NN & NGBoost & SampleNet & MDN & NE-GMM \\
		\hline
		GOOG & 6.784$\pm$0.256 & 7.771$\pm$0.402  &  7.160$\pm$0.618 & \textbf{3.549$\pm$0.901} & 7.244$\pm$1.164 & 4.963$\pm$0.710 \\ 
		RCL & 12.392$\pm$4.210 & 9.672$\pm$1.830  & 10.168$\pm$7.449 & 13.872$\pm$2.474 & 9.071$\pm$4.004 & \textbf{4.638$\pm$0.964} \\ 
		GME & 16.931$\pm$7.318 & 9.472$\pm$1.479  & 8.813$\pm$5.867 & 8.785$\pm$2.704 & 11.475$\pm$2.915 & \textbf{6.354$\pm$1.492} \\ \hline
	\end{tabular}}
\end{table*}

\section{Discussion}\label{sec:discussion}
In this work, we introduced the Neural Energy Gaussian Mixture Model (NE-GMM), a novel framework that enhances the modeling capabilities of IGMM by incorporating the distributional alignment properties of the ES.
By combining these complementary approaches, NE-GMM effectively addresses key challenges in predictive uncertainty estimation for regression tasks, including accurate mean predictions, robust uncertainty quantification, and the ability to handle multimodal and heteroscedastic noise. 
Our theoretical analysis demonstrates that the hybrid loss function employed in NE-GMM corresponds to a strictly proper scoring rule and establishes the generalization error bound.
Empirically, the framework consistently outperforms existing state-of-the-art methods across diverse datasets and scenarios. Overall, NE-GMM provides a simple and efficient way to incorporate a strictly proper, calibration-oriented regularizer into expressive mixture models, yielding improved uncertainty quantification across a broad range of regression settings.

Future works on this research line could focus on the following aspects.
First, adaptive or data-dependent schedules for $\eta$ could be developed to emphasize NLL early for fast convergence and ES later for calibration and robustness.
Second, richer component families (e.g., Student-$t$ components) or structured covariance models could further improve robustness to outliers and heavy tails.

\newpage

\appendix

\section{Analysis of $S_e$ and its Derivatives}\label{app:Se_and_gradient}

\begin{lemma}[Expectation of folded Gaussian distribution]\label{lemma:FoldedGaussian}
    Let $\phi(.)$ and $\Phi(.)$ be the probability density function and cumulative distribution function of $\cN(0,1)$, respectively.
	If $Z\sim \cN(a,b^2)$, we have 
	$$\bbE|Z|=b\sqrt{\frac{2}{\pi}} e^{-\frac{a^2}{2b^2}} + a\left(2\Phi\left(\frac{a}{b}\right)-1\right).$$
\end{lemma}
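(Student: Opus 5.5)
The plan is to compute $\bbE|Z|$ by standardizing and splitting the integral at the point where the absolute value changes sign. Write $Z = a + bU$ with $U\sim\cN(0,1)$ and $b>0$. Then $|Z| = Z$ on the event $\{U > -a/b\}$ and $|Z|=-Z$ on its complement. Put $c=a/b$; the event $\{Z>0\}$ is $\{U>-c\}$. Hence
\[
\bbE|Z| = \bbE\bigl[(a+bU)\mathbf{1}\{U>-c\}\bigr] - \bbE\bigl[(a+bU)\mathbf{1}\{U\le -c\}\bigr].
\]

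The two pieces are handled separately, the constant part first.

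\textbf{Constant part.} By symmetry of $\phi$, $P(U>-c)=\Phi(c)$ and $P(U\le -c)=1-\Phi(c)$. The terms involving $a$ therefore contribute
\[
a\bigl(\Phi(c)-(1-\Phi(c))\bigr)=a\bigl(2\Phi(c)-1\bigr).
\]

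\textbf{Linear part.} Here I use the identity $u\phi(u)=-\phi'(u)$. It gives
\[
\bbE\bigl[U\mathbf{1}\{U>-c\}\bigr]=\int_{-c}^{\infty}u\phi(u)\,du=\phi(-c)=\phi(c),
\]
and
\[
\bbE\bigl[U\mathbf{1}\{U\le -c\}\bigr]=-\phi(c).
\]
The terms involving $b$ thus contribute $2b\phi(c)$. Since $2\phi(c)=\sqrt{2/\pi}\,e^{-a^2/(2b^2)}$, this part equals $b\sqrt{2/\pi}\,e^{-a^2/(2b^2)}$. Adding it to the constant part yields the stated formula.

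None of these steps is a real obstacle. The only care needed is getting the sign convention right at the split point $-c$ versus $c$, and using evenness of $\phi$ together with $1-\Phi(-c)=\Phi(c)$ consistently. If $b=0$ were allowed the formula would need a limiting interpretation, so I assume $b>0$ throughout, as is implicit in the statement.
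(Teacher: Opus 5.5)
Your proof is correct and follows essentially the same route as the paper: standardize $Z=a+bU$, split at $-a/b$, and evaluate the pieces using $\Phi$ and the antiderivative $-\phi$ of $u\phi(u)$. The only cosmetic difference is that the paper uses $\int_{-\infty}^{\infty}(bz+a)\phi(z)\,dz=a$ to reduce everything to the single lower-tail integral, whereas you compute both halves directly.
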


\begin{proof} 
Based on the definition of $Z$, we have
$$\bbE|Z|=\int_{-\infty}^{\infty} |z| \frac{1}{\sqrt{2\pi}b} e^{-\frac{(z-a)^2}{2b^2}}dz
=\int_{-\infty}^{\infty} |bz+a| \frac{1}{\sqrt{2\pi}} e^{-\frac{z^2}{2}}dz
=\int_{-\frac{a}{b}}^{\infty} (bz+a) \phi(z)dz-\int_{-\infty}^{-\frac{a}{b}} (bz+a) \phi(z)dz.$$
$\bbE|Z|$ is actually the expectation of a folded Gaussian distribution (i.e., the distribution of the absolute value of a Gaussian random variable).
Because
$$\int_{-\frac{a}{b}}^{\infty} (bz+a) \phi(z)dz+\int_{-\infty}^{-\frac{a}{b}} (bz+a) \phi(z)dz=\int_{-\infty}^{\infty} (bz+a) \phi(z)dz=a,$$
we have 
$$\int_{-\frac{a}{b}}^{\infty} (bz+a) \phi(z)dz=a-\int_{-\infty}^{-\frac{a}{b}} (bz+a) \phi(z)dz,$$
and thus
$$\bbE|Z|=a - 2\int_{-\infty}^{-\frac{a}{b}} (bz+a)\phi(z) dz=a-2b\int_{-\infty}^{-\frac{a}{b}}z\phi(z)dz-2a\Phi\left(-\frac{a}{b}\right).$$
Furthermore, 
$$\int_{-\infty}^{-\frac{a}{b}}z\phi(z)dz = \frac{1}{\sqrt{2\pi}}\int_{-\infty}^{-\frac{a}{b}}z e^{-\frac{z^2}{2}}dz = -\frac{1}{\sqrt{2\pi}}\left(e^{-\frac{z^2}{2}}|_{z=-\frac{a}{b}}-e^{-\frac{z^2}{2}}|_{z=-\infty}\right) = -\frac{1}{\sqrt{2\pi}}e^{-\frac{a^2}{2b^2}}. $$

Put these results together, we have 
$$\bbE|Z|=a+2b\frac{1}{\sqrt{2\pi}}e^{-\frac{a^2}{2b^2}}-2a\Phi\left(-\frac{a}{b}\right)=b\sqrt{\frac{2}{\pi}} e^{-\frac{a^2}{2b^2}} + a\left(2\Phi\left(\frac{a}{b}\right)-1\right).$$

\end{proof}

\begin{theorem}[Theorem \ref{thm:int_ES} in the main paper]\label{app:thm:int_ES}
    Given the distribution derived from and IGMM $F_{\bpsi}(\cdot)$, we have
    $$S_e(F_{\bpsi}(x),y)
    =\sum_{m=1}^K \pi_m(x) A_m(x,y)-\frac{1}{2}\sum_{m=1}^K \sum_{l=1}^K \pi_m(x)\pi_l(x) B_{ml}(x),$$ 
    where 
    $$A_m(x,y) = \sigma_m(x) \sqrt{\frac{2}{\pi}} e^{{-\frac{(\mu_m(x)-y)^2}{2\sigma_m^2(x)}}} + (\mu_m(x)-y) \left(2\Phi\left(\frac{\mu_m(x)-y}{\sigma_m(x)}\right)-1\right), $$
    $$B_{ml}(x) =  \sqrt{\sigma_m^2(x)+\sigma_l^2(x)} \sqrt{\frac{2}{\pi}} e^{-\frac{\left(\mu_m(x)-\mu_l(x)\right)^2}{2\left(\sigma_m^2(x)+\sigma_l^2(x)\right)}} + (\mu_m(x)-\mu_l(x)) \left(2\Phi\left(\frac{\mu_m(x)-\mu_l(x)}{\sqrt{\sigma_m^2(x)+\sigma_l^2(x)}}\right)-1\right). $$
\end{theorem}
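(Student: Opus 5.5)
The plan is to compute the two expectations in the definition \eqref{eq:Se} of $S_e$ separately, using the mixture structure of $F_{\bpsi}(x)$ to reduce each to expectations of absolute values of Gaussian variables. Then Lemma~\ref{lemma:FoldedGaussian} closes each one in a single step. Throughout, $x$ is fixed, so we write $\pi_m,\mu_m,\sigma_m$ for $\pi_m(x),\mu_m(x),\sigma_m(x)$. In dimension one the Euclidean norm is just the absolute value.

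\textbf{First term.} Take $z\sim F_{\bpsi}(x)$ and represent it through a latent component label $c\in\{1,\ldots,K\}$ with $P(c=m)=\pi_m$ and $z\mid c=m\sim\cN(\mu_m,\sigma_m^2)$. By the law of total expectation (equivalently, linearity of the integral against the mixture density \eqref{eq:IGMM}),
\[
\bbE|z-y| = \sum_{m=1}^K \pi_m\, \bbE\big[\,|z-y| \,\big|\, c=m\big].
\]
Conditionally on $c=m$, we have $z-y\sim\cN(\mu_m-y,\sigma_m^2)$. Applying Lemma~\ref{lemma:FoldedGaussian} with $a=\mu_m-y$ and $b=\sigma_m$ gives exactly $A_m(x,y)$.

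\textbf{Second term.} Take $z,z'$ i.i.d. from $F_{\bpsi}(x)$, with independent labels $c,c'$. Conditioning on the pair $(c,c')=(m,l)$, which has probability $\pi_m\pi_l$, we get
\[
\bbE|z-z'| = \sum_{m=1}^K\sum_{l=1}^K \pi_m\pi_l\, \bbE\big[\,|z-z'| \,\big|\, c=m,c'=l\big].
\]
Given the labels, $z$ and $z'$ are independent Gaussians. Hence $z-z'\sim\cN(\mu_m-\mu_l,\sigma_m^2+\sigma_l^2)$, because the variances add under independence. Lemma~\ref{lemma:FoldedGaussian} with $a=\mu_m-\mu_l$ and $b=\sqrt{\sigma_m^2+\sigma_l^2}$ then yields $B_{ml}(x)$. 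This holds for $m=l$ as well, where $B_{mm}=2\sigma_m/\sqrt{\pi}$.

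\textbf{Assembly.} Substituting both expressions into $S_e=\bbE|z-y|-\tfrac12\bbE|z-z'|$ gives the stated formula; the index in the first sum should read $\pi_m(x)$. No step is a real obstacle. The only point to justify is the interchange of the finite mixture sum with the expectation, and this is immediate because each folded-Gaussian expectation is finite ($\sigma_m>0$). The $O(K^2)$ cost then follows by counting the $K$ terms $A_m$ and the $K^2$ terms $B_{ml}$.
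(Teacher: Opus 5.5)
Your proposal is correct and matches the paper's proof. Both arguments expand $\bbE|z-y|$ and $\bbE|z-z'|$ over the mixture components (you make the conditioning explicit with latent labels $c,c'$, while the paper simply invokes the mixture definition), use $z-y\sim\cN(\mu_m-y,\sigma_m^2)$ and $z-z'\sim\cN(\mu_m-\mu_l,\sigma_m^2+\sigma_l^2)$ given the components, and finish with Lemma~\ref{lemma:FoldedGaussian}. Your value $B_{mm}(x)=2\sigma_m(x)/\sqrt{\pi}$ for the diagonal terms is also correct, which is worth noting because the paper's later proof of Proposition~\ref{app:prop:grad_Se} loosely asserts $B_{mm}=0$.
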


\begin{proof}
Based on the definition of energy score,  we have:
$$S_e(F_{\bpsi}(x),y)=\bbE\Vert z-y\Vert-\frac{1}{2}\bbE \Vert z-z'\Vert,$$
where $y$ is a fixed real number, $z$ and $z'$ are i.i.d. samples from the IGMM. 
According to the definition of $z$ and $z'$, it is straightforward to see that 
\begin{eqnarray*}
    \bbE||z-y|| &=& \sum_{m=1}^K \pi_m(x)\bbE_{z\sim \cN(\mu_m(x),\sigma_m^2(x))}|z-y|=\sum_{m=1}^K \pi_m(x) A_m(x,y),\nonumber\\
    \bbE||z-z'|| &=& \sum_{m=1}^K\sum_{l=1}^K \pi_m(x)\pi_l(x)\bbE_{z\sim \cN(\mu_m(x),\sigma_m^2(x)),z'\sim \cN(\mu_l(x),\sigma_l^2(x))}|z-z'| \\
    &=& \sum_{m=1}^K\sum_{l=1}^K \pi_m(x)\pi_l(x) B_{ml}(x),
\end{eqnarray*}
where
$$A_m(x,y)=\bbE_{z\sim \cN(\mu_m(x),\sigma^2_m(x))}|z-y|,\quad B_{kl}(x)=\bbE_{z\sim \cN(\mu_m(x),\sigma_m^2(x)),z'\sim \cN(\mu_l(x),\sigma_l^2(x))}|z-z'|.$$
Apparently, $\Vert z-y\Vert=|z-y|$ and $\Vert z-z'\Vert=|z-z'|$ are both mixtures of folded Gaussian distributions in our case. 
Thus, the two expectations in $ S_e(F_{\bpsi}(x),y)$ boil down to obtaining the expectation of folded Gaussian distributions.
Because 
$$z-y\sim\cN(\mu_m(x)-y,\sigma^2_m(x))\quad\mbox{and}\quad z-z'\sim\cN(\mu_m(x)-\mu_l(x),\sigma^2_m(x)+\sigma^2_l(x)),$$
according to Lemma~\ref{lemma:FoldedGaussian}, we get the concrete form of $A_m(x,y)$ and $B_{kl}(x)$ as shown in the theorem.
\end{proof}

The analytic form of $S_e(F_{\bpsi}(x),y)$ can be obtained with the computational complexity of $O(K^2)$. Given the training samples $\cD=\{(x_i,y_i)\}_{i=1}^N$, we have
$$L_{e,\cD}(F_{\bpsi})
=\frac{1}{N}\sum_{i=1}^N S_e(F_{\bpsi}(x_i),y_i)
=\frac{1}{N}\sum_{i=1}^N\left[\sum_{m=1}^K \pi_m(x_i) A_m(x_i,y_i)-\frac{1}{2}\sum_{m=1}^K \sum_{l=1}^K \pi_m(x_i)\pi_l(x_i) B_{ml}(x_i)\right].$$

\begin{proposition}[Proposition~\ref{prop:grad_Se} in the main paper]\label{app:prop:grad_Se}
	\begin{eqnarray}\label{app:eq:Se_gradient}
		\frac{\partial S_e}{\partial\pi_k(x)} &=& A_k(x,y)-\sum_{l=1}^K \pi_l(x)B_{kl}(x), \nonumber \\
		\frac{\partial S_e}{\partial\mu_k(x)} &=& \pi_k(x) \left(\left(2\Phi(w_k)-1\right)-\sum_{l=1}^K \pi_l(x)\left(2\Phi(w_{kl})-1\right) \right),  \nonumber \\
		 \frac{\partial S_e}{\partial\sigma_k(x)} &=& 2\pi_k(x) \left( \phi(w_k) - \sum_{l=1}^K \pi_l(x) \frac{\sigma_k(x)}{\sqrt{\sigma_k^2(x)+\sigma_l^2(x)}}\phi(w_{kl})\right), \nonumber
	\end{eqnarray}
	where $w_k=\frac{\mu_k(x)-y}{\sigma_k(x)}, w_{kl}=\frac{\mu_k(x)-\mu_l(x)}{\sqrt{\sigma_k^2(x)+\sigma_l^2(x)}}$.
\end{proposition}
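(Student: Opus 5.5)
We differentiate the closed form of Theorem~\ref{app:thm:int_ES} term by term, treating $\pi_k,\mu_k,\sigma_k$ as free coordinates. The $\pi_k$-derivative is purely algebraic. The $\mu_k$- and $\sigma_k$-derivatives need only the derivatives of the folded-Gaussian mean from Lemma~\ref{lemma:FoldedGaussian}.

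First, the $\pi_k$-derivative. The term $\sum_m \pi_m A_m$ contributes $A_k$. The double sum $\frac12\sum_{m,l}\pi_m\pi_l B_{ml}$ contributes $\frac12\bigl(\sum_l \pi_l B_{kl}+\sum_m \pi_m B_{mk}\bigr)=\sum_l\pi_l B_{kl}$, using the symmetry $B_{ml}=B_{lm}$. This symmetry holds because $|z-z'|$ is symmetric, or directly because $g(a,b)=b\sqrt{2/\pi}e^{-a^2/2b^2}+a(2\Phi(a/b)-1)$ is even in $a$. This gives the first formula.

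Second, a calculus lemma for $g(a,b)=\bbE|Z|$ with $Z\sim\cN(a,b^2)$. I would show
\[
\partial_a g = 2\Phi(a/b)-1, \qquad \partial_b g = 2\phi(a/b)\cdot\bigl(\text{some constant}\bigr).
\]
To do this, write $g=2b\,\phi(a/b)+a(2\Phi(a/b)-1)$, using $\sqrt{2/\pi}e^{-t^2/2}=2\phi(t)$. The terms produced by $\phi'(t)=-t\phi(t)$ cancel against those from differentiating $\Phi$. Concretely,
\[
\partial_a g = 2b\phi(t)(-t)/b + (2\Phi(t)-1) + 2a\phi(t)/b = 2\Phi(t)-1 .
\]
Similarly, with $t=a/b$,
\[
\partial_b g = 2\phi(t) + 2b\phi(t)\,t\cdot\frac{a}{b^2} - 2a\phi(t)\frac{a}{b^2} = 2\phi(t).
\]

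Third, apply the chain rule. We have $A_k=g(\mu_k-y,\sigma_k)$. For the pair terms, the relevant $B$'s are $B_{kl}=g(\mu_k-\mu_l,s_{kl})$ with $s_{kl}=\sqrt{\sigma_k^2+\sigma_l^2}$.

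For $\mu_k$, the first term gives $\pi_k(2\Phi(w_k)-1)$. The double sum involves $B_{kl}$ and $B_{lk}$ for $l\neq k$; these are equal and have matching $\mu_k$-derivatives by oddness of $2\Phi-1$. Each pair gives $2\pi_k\pi_l(2\Phi(w_{kl})-1)$, which the factor $\frac12$ turns into $\pi_k\pi_l(2\Phi(w_{kl})-1)$. The diagonal $B_{kk}$ does not depend on $\mu_k$, since its mean difference is $0$, and $2\Phi(w_{kk})-1=0$ anyway. So the sum may run over all $l$, which yields the second formula.

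For $\sigma_k$, we have $\partial s_{kl}/\partial\sigma_k=\sigma_k/s_{kl}$. The first term gives $2\pi_k\phi(w_k)$. The off-diagonal pairs give, after the $\frac12$, $\sum_{l\neq k}\pi_k\pi_l\cdot 2\phi(w_{kl})\sigma_k/s_{kl}$.

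The main obstacle is the diagonal term $l=k$ in the $\sigma_k$-derivative. Here $B_{kk}=g(0,\sqrt2\sigma_k)=2\sqrt2\sigma_k\phi(0)$, and it appears once with coefficient $\frac12\pi_k^2$. Its derivative is $\pi_k^2\sqrt2\,\phi(0)=\pi_k^2\cdot 2\phi(w_{kk})\,\sigma_k/s_{kk}$, since $\sigma_k/s_{kk}=1/\sqrt2$. So the diagonal fits the uniform summand $2\pi_k\pi_l\phi(w_{kl})\sigma_k/s_{kl}$ exactly, and the third formula follows. The same diagonal check must be done for the $\pi_k$-derivative, where the $\pi_k^2$ term gives $\pi_kB_{kk}$, again consistent with the uniform summand. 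I would present these diagonal verifications explicitly, since the uniform-sum presentation hides them.
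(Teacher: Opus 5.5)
Your proposal is correct and follows essentially the paper's route: split $S_e$ into the $A$-sum and the $B$-double sum, use the symmetry $B_{ml}=B_{lm}$ for the $\pi_k$-derivative, and apply the chain rule with $\partial_a f=2\Phi(a/b)-1$ and $\partial_b f=2\phi(a/b)$ (your own computation gives exactly this, so drop the ``some constant'' placeholder). Your explicit diagonal check via $B_{kk}=2\sqrt{2}\,\sigma_k\phi(0)$ with coefficient $\tfrac12\pi_k^2$ is more careful than the paper, which incorrectly asserts $B_{mm}=0$ and says only that the $\sigma_k$ case follows by ``similar steps.''
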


\begin{proof}
    We decompose $S_e$ into two terms: $$S_e(F_{\bpsi},y)=S_e^{(1)}-\frac{1}{2}S_e^{(2)},$$
    where 
    $$S_e^{(1)} = \sum_{m=1}^K \pi_m(x)A_m(x,y),\quad S_e^{(2)}=\sum_{m=1}^K\sum_{l=1}^K \pi_m(x)\pi_l(x)B_{ml}(x).$$
    For the first term $S_e^{(1)}$, since $A_k(x,y)$ does not depend on $\pi_k(x)$, we have 
    $$\frac{\partial S_e^{(1)}}{\partial \pi_k(x)}=A_k(x,y).$$
    For the second term $S_e^{(2)}$,
    \begin{eqnarray*}
        \frac{\partial (\pi_m(x)\pi_l(x))}{\partial \pi_k(x)} =
    \begin{cases}
        2\pi_k(x), & \text{if } m=k=l, \\
        \pi_l(x),  & \text{if } m=k,l\neq k, \\
        \pi_m(x),  & \text{if } m\neq k,l= k, \\
        0,         & \text{if } m\neq k,l\neq k.
    \end{cases}
    \end{eqnarray*}
    Using the symmetry property $B_{mk}(x)=B_{km}(x)$, we have 
    $$\frac{\partial S_e^{(2)}}{\partial \pi_k(x)} = \sum_{m=1}^K\pi_m(x)B_{mk}(x)+\sum_{l=1}^K\pi_l(x)B_{kl}(x)=2\sum_{m=1}^K\pi_m(x)B_{mk}(x).$$
   Combining these terms, we get:
    $$\frac{\partial S_e}{\partial\pi_k(x)} = A_k(x,y)-\sum_{l=1}^K \pi_l(x)B_{kl}(x).$$

Since $A_m(x,y)$ and $B_{ml}(x)$ are both expectations of the folded Gaussian distribution, we start by considering the function $f(a,b)$  from Lemma~\ref{lemma:FoldedGaussian}:
$$
f(a,b) = \sqrt{\frac{2}{\pi}}b e^{-(a/b)^2/2} + a \left(2\Phi\left(\frac{a}{b}\right) - 1\right), \quad b > 0,\ a \in \mathbb{R}.
$$
The partial derivatives of $f(a,b)$ with respect to $a$ and $b$ are given by:
\begin{eqnarray*}
    \frac{\partial f}{\partial a} = 2\Phi\left(\frac{a}{b}\right)-1,\quad \frac{\partial f}{\partial b} = 2\phi\left(\frac{a}{b}\right).
\end{eqnarray*}
For $A_m(x,y)$ in $S_e^{(1)}$, let $a=\mu_m(x)-y, b=\sigma_m(x)$ and $w_m=a/b$, the partial derivatives with respect to $\mu_k(x)$ and $\sigma_k(x)$ are then:
\begin{eqnarray*}
    \frac{\partial A_m(x,y)}{\partial \mu_k(x)} = 
    \begin{cases}
    2\Phi(w_k)-1, & \text{if } m=k, \\
    0 & \text{if } m\neq k,
    \end{cases}
\end{eqnarray*}
and
\begin{eqnarray*}
    \frac{\partial A_m(x,y)}{\partial \sigma_k(x)} = 
    \begin{cases}
    2\phi(w_k), & \text{if } m=k, \\
    0 & \text{if } m\neq k.
    \end{cases}
\end{eqnarray*}
Therefore:
$$\frac{\partial S_e^{(1)}}{\partial \mu_k(x)} = \pi_k(x)(2\Phi(w_k)-1), \quad \frac{\partial S_e^{(1)}}{\partial \mu_k(x)} = 2\pi_k(x)\phi(w_k). $$

For $B_{ml}(x)$ in $S_e^{(2)}$, let $a=\mu_m(x)-\mu_l(x), b=\sqrt{\sigma_m^2(x)+\sigma_l^2(x)}$, and $w_{ml}=a/b$.  Note the following cases:
when $m=l$,  $B_{ml}(x)=0$;  when $m\neq l$, 
$\frac{\partial a}{\partial \mu_m(x)}=-\frac{\partial a}{\partial \mu_l(x)}=1$. Additionally, by symmetry, $B_{ml}(x)=B_{lm}(x)$. Using these relationships, the partial derivative of $B_{ml}(x,y)$ with respect to $\mu_k(x)$ are:
 
\begin{eqnarray*}
    \frac{\partial B_{ml}(x,y)}{\partial \mu_k(x)} = 
    \begin{cases} 
    0, & \text{if } m=k=l, \\
    2\Phi(w_{kl})-1, & \text{if } m=k,l\neq k, \\
    1-2\Phi(w_{mk}), & \text{if } m\neq k,l= k, \\
    0 & \text{if } m\neq k\neq l.
    \end{cases}
\end{eqnarray*}

Using the fact that $\Phi(w_{kl})=1-\Phi(w_{lk})$, we have 
$2\Phi(w_{kl})-1=1-2\Phi(w_{lk})$, and for $k=l$, $2\Phi(w_{kk})-1=0$. Thus, the partial derivatives of $S_e^{(2)}$ with respect to $\mu_k(x)$ is:

\begin{eqnarray*}
    \frac{\partial S_e^{(2)}}{\partial \mu_k(x)} &=& \pi_k(x)\sum_{m=1}^K \pi_m(x)(1-2\Phi(w_{mk}))+\pi_k(x)\sum_{l=1}^K \pi_l(x)(2\Phi(w_{kl})-1) \\
    &=&2\pi_k(x)\sum_{m=1}^K \pi_m(x)(2\Phi(w_{mk})-1).
\end{eqnarray*}
Combining the results for $S_e^{(1)}$ and $S_e^{(2)}$, we have:
$$\frac{\partial S_e}{\partial\mu_k(x)} = \pi_k(x) \left(\left(2\Phi(w_k)-1\right)-\sum_{l=1}^K \pi_l(x)\left(2\Phi(w_{kl})-1\right) \right).$$

The partial derivative of $S_e$ with respect to $\sigma_k(x)$ can be derived by following similar steps as above using the chain rule $\frac{\partial b}{\partial \sigma_m(x)}=\frac{\sigma_m(x)}{\sqrt{\sigma_m^2(x)+\sigma_l^2(x)}}$.

\end{proof}

\begin{lemma}[Lemmas~\ref{lemma:Taylor_expansion_IGMM_Sl_grad} and \ref{lemma:Taylor_expansion_Se_grad} in the main paper] \label{lemma:Taylor_expansion}
The Taylor expansions of the partial derivative functions of $S_l$ and $S_e$ as $\sigma_k(x)\to \infty$  are given as follows:

\begin{eqnarray*}
\frac{\partial S_l}{\partial\pi_k(x)} &=& -\frac{1}{\sqrt{2\pi} T_1} \cdot \frac{1}{\sigma_k(x)} + \frac{\pi_k(x)}{2\pi T_1^2} \cdot \frac{1}{\sigma_k^2(x)}+O\left(\frac{1}{\sigma_k^3(x)}\right), \\
\frac{\partial S_l}{\partial\mu_k(x)} &=& 
 \frac{\pi_k(x)(\mu_k(x)-y)}{\sqrt{2\pi}T_1}  \cdot \frac{1}{\sigma_k^3(x)} + O\left(\frac{1}{\sigma_k^4(x)}\right), \\
 \frac{\partial S_l}{\partial\sigma_k(x)} &=& \frac{\pi_k(x)}{\sqrt{2\pi} T_1} \cdot \frac{1}{\sigma_k^2(x)} +O\left(\frac{1}{\sigma_k^3(x)}\right), \\
 \frac{\partial S_e}{\partial\pi_k(x)} &=& \frac{(\sqrt{2}-2)\pi_k(x)}{\sqrt{\pi}} \cdot \sigma_k(x)+ \frac{T_2}{\sqrt{2\pi}} \cdot\frac{1}{\sigma_k(x)} +O\left(\frac{1}{\sigma_k^3(x)}\right), \\
    \frac{\partial S_e}{\partial\mu_k(x)} &=& \sqrt{\frac{2}{\pi}} T_3 \pi_k(x) \cdot \frac{1}{\sigma_k(x)} + O\left(\frac{1}{\sigma_k^2(x)}\right), \\
    \frac{\partial S_e}{\partial\sigma_k(x)} &=&  \frac{\sqrt{2}-1}{\sqrt{\pi}}\pi_k^2(x) -\frac{T_2 \pi_k(x)}{\sqrt{2\pi}} \cdot \frac{1}{\sigma_k^2(x)} + O\left(\frac{1}{\sigma_k^4(x)}\right). \\
\end{eqnarray*}
Here $T_1=\sum_{l\neq k} \pi_l(x)\phi(y;\mu_l(x),\sigma_l^2(x))$, $T_2=(\mu_k(x)-y)^2-\sum_{l\neq k} \pi_l(x) (\sigma_l^2(x)+(\mu_k(x)-\mu_l(x))^2)$, and $T_3=\mu_k(x)-y-\sum_{l\neq k}^K \pi_l(x) (\mu_k(x)-\mu_l(x))$. The results holds for $T_1\neq 0$ for the partial derivative functions of $S_l$; otherwise, the partial derivative functions reduce to the single Gaussian case.
\end{lemma}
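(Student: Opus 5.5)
Fix $x$ and $y$, and write $\sigma=\sigma_k(x)$, $\pi_l=\pi_l(x)$, $\mu_l=\mu_l(x)$. Every expansion comes from taking the closed forms in Proposition~\ref{prop:IGMM_Sl_grad} and Proposition~\ref{prop:grad_Se} and Taylor-expanding in $1/\sigma$. All other parameters are held fixed, so the only varying quantities are $\phi(y;\mu_k,\sigma^2)$, $w_k=(\mu_k-y)/\sigma$, and, for $l\neq k$, $w_{kl}=(\mu_k-\mu_l)/\sqrt{\sigma^2+\sigma_l^2}$. We also use $\sigma_k/\sqrt{\sigma_k^2+\sigma_l^2}=1-\sigma_l^2/(2\sigma^2)+O(\sigma^{-4})$. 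For $w\to0$ the basic expansions are
\[
\phi(w)=\tfrac{1}{\sqrt{2\pi}}\bigl(1-\tfrac{w^2}{2}+O(w^4)\bigr),\qquad 2\Phi(w)-1=\sqrt{\tfrac{2}{\pi}}\bigl(w-\tfrac{w^3}{6}+O(w^5)\bigr),
\]
and the consequence
\[
f(a,b)=\sqrt{\tfrac{2}{\pi}}\bigl(b+\tfrac{a^2}{2b}+O(a^4/b^3)\bigr)
\]
for the folded-Gaussian mean of Lemma~\ref{lemma:FoldedGaussian}.

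\emph{The $S_l$ part.} Write $\phi(y;\mu_k,\sigma^2)=\frac{1}{\sqrt{2\pi}\sigma}\bigl(1+O(\sigma^{-2})\bigr)$. The denominator of $r_k$ is then $T_1+\pi_k\phi_k$, and for $T_1\neq0$ it is bounded away from zero. So $r_k=\phi_k/T_1-\pi_k\phi_k^2/T_1^2+O(\phi_k^3)$. Substituting this gives the $\pi_k$-derivative: $-1/(\sqrt{2\pi}T_1\sigma)+\pi_k/(2\pi T_1^2\sigma^2)$. Multiplying $\pi_k r_k$ by $(\mu_k-y)/\sigma^2$ and by $1/\sigma-(\mu_k-y)^2/\sigma^3$ gives the other two leading terms. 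If $T_1=0$, then $r_k=1/\pi_k$ exactly, and the formulas collapse to \eqref{eq:Gaussian_Sl_grad}.

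\emph{The $S_e$ part.} This is where I expect the main obstacle: bookkeeping. The $l=k$ terms must be kept separate, because $w_{kk}=0$ and $B_{kk}=\sqrt{2\sigma^2}\sqrt{2/\pi}=2\sigma/\sqrt{\pi}$ exactly. The $l\neq k$ terms have to be expanded to second order.

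For $\partial S_e/\partial\pi_k$, start from
\[
A_k=\sqrt{2/\pi}\bigl(\sigma+(\mu_k-y)^2/(2\sigma)\bigr)+O(\sigma^{-3}).
\]
For $l\neq k$, with $b=\sqrt{\sigma^2+\sigma_l^2}$,
\[
B_{kl}=\sqrt{2/\pi}\bigl(b+(\mu_k-\mu_l)^2/(2b)\bigr),\qquad b=\sigma+\sigma_l^2/(2\sigma)+O(\sigma^{-3}).
\]
Collecting the order-$\sigma$ terms gives $\sqrt{2/\pi}\,\sigma\bigl(1-\sum_{l\neq k}\pi_l\bigr)-\pi_k\cdot2\sigma/\sqrt{\pi}=(\sqrt2-2)\pi_k\sigma/\sqrt{\pi}$. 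Collecting the order-$1/\sigma$ terms gives $T_2/(\sqrt{2\pi}\sigma)$.

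For $\partial S_e/\partial\mu_k$, the $l=k$ summand vanishes. Linearizing $2\Phi(\cdot)-1$ gives
\[
\pi_k\sqrt{2/\pi}\Bigl(\tfrac{\mu_k-y}{\sigma}-\sum_{l\neq k}\pi_l\tfrac{\mu_k-\mu_l}{\sigma}\Bigr)+O(\sigma^{-2}),
\]
which is $\sqrt{2/\pi}\,T_3\pi_k/\sigma$.

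For $\partial S_e/\partial\sigma_k$, the $l=k$ term contributes $\pi_k\cdot\frac{1}{\sqrt2}\cdot\frac{1}{\sqrt{2\pi}}$. The remaining terms are handled with the $\phi$ expansion and the ratio expansion. The constant term then becomes $2\pi_k\frac{1}{\sqrt{2\pi}}\bigl(1-\sum_{l\neq k}\pi_l-\pi_k/\sqrt2\bigr)=\frac{\sqrt2-1}{\sqrt\pi}\pi_k^2$. The $\sigma^{-2}$ coefficient gathers $-w_k^2/2$ together with $+\pi_l\bigl(\sigma_l^2+(\mu_k-\mu_l)^2\bigr)/(2\sigma^2)$, and this gives $-T_2\pi_k/(\sqrt{2\pi}\sigma^2)$. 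The error is $O(\sigma^{-4})$ because only even powers appear.

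Finally, I would double-check the constants by setting $K=1$, where $T_2$ and $T_3$ reduce to single-Gaussian CRPS derivatives.
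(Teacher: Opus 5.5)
Your proposal is correct and takes the same route as the paper. You expand the closed-form gradients of Propositions~\ref{prop:IGMM_Sl_grad} and~\ref{prop:grad_Se} in powers of $1/\sigma_k(x)$, handle $r_k$ by a geometric-series expansion of its denominator when $T_1\neq 0$, keep the $l=k$ terms ($B_{kk}=2\sigma_k/\sqrt{\pi}$, $w_{kk}=0$) separate from the $l\neq k$ terms, and use $\sum_l\pi_l=1$ to collapse the leading coefficients. Your bookkeeping for $\partial S_e/\partial\mu_k$ and $\partial S_e/\partial\sigma_k$, which the paper only calls ``similar'', checks out, including the constant $\frac{\sqrt2-1}{\sqrt\pi}\pi_k^2$ and the $-T_2\pi_k/(\sqrt{2\pi}\sigma_k^2)$ correction.
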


\begin{proof}
To derive these expansions, we require the Taylor expansions of some basic functions when $z\to 0$:
\begin{eqnarray*}
    e^{z} &=& 1+z+\frac{z^2}{2}+\frac{z^3}{6}+O(z^4), \\
    \sqrt{1+z}&=&1+\frac{z}{2}-\frac{z^2}{8}+\frac{z^3}{16}+O(z^4), \\
    \frac{1}{\sqrt{1+z}} &=& 1-\frac{1}{2}z+\frac{3}{8}z^2+O(z^3),\\
    \frac{1}{1+z}&=&1-z+z^2-z^3+O(z^4),\\
    \Phi(z)&=&\frac{1}{2}+\frac{1}{\sqrt{2\pi}}\left(z-\frac{z^3}{6}\right)+O(z^4).
\end{eqnarray*}

Here we focus on the Taylor expansions of the partial derivative functions of $S_l$ and $S_e$ as $\sigma_k(x)\to\infty$, and the parameters $\{\pi_l(x)\}_{l=1}^K,\{\mu_l(x)\}_{l=1}^K, \{\sigma_l(x)\}_{l\neq k}$ and $y$ are assumed to be fixed.

Since $\phi(y;\mu_l(x),\sigma_l^2(x))>0$, it follows that $T_1=0$ if and only if $\pi_l(x)=0$ for $l\neq k$. This indicates that $\pi_k(x)=1$ and corresponds to the single Gaussian case. Therefore, we focus on the non-trivial case where $T_1 \neq 0$. For the partial derivatives of $S_l$, all terms depend on $r_k(x)$, so it suffices to analyze $r_k(x)$, which is given by:
\begin{eqnarray}\label{eq:rk_decompose}
    r_k(x)=\frac{\phi(y;\mu_k(x);\sigma_k^2(x))}{\pi_k(x)\phi(y;\mu_k(x);\sigma_k^2(x)) + T_1}=\frac{e^{-\frac{(\mu_k(x)-y)^2}{2\sigma_k^2(x)}}/(\sqrt{2\pi}\sigma_k(x))}{T_1\left(\pi_k(x)e^{-\frac{(\mu_k(x)-y)^2}{2\sigma_k^2(x)}}/(\sqrt{2\pi}T_1\sigma_k(x))+1\right)}.
\end{eqnarray}
As $\sigma_k(x)\to\infty$, we have $e^{-\frac{(\mu_k(x)-y)^2}{2\sigma_k^2(x)}}/(\sqrt{2\pi}\sigma_k(x))\to 0$. Using the Taylor expansion of $e^z$ for the numerator and $\frac{1}{1+z}$ for the denominator in \eqref{eq:rk_decompose}, desired result for  $\frac{\partial S_l}{\partial \pi_k(x)}=-r_k(x)$ follows in the lemma.

For the partial derivatives of $S_e$, we compute the Taylor expansion of $\frac{\partial S_e}{\partial \pi_k(x)}$ as an illustration. This requires analyzing the terms in $A_k(x,y)$ and $B_{kl}(x)$. Using the Taylor expansions of  $e^z$ and $\Phi(z)$, we obtain:
$$A_k(x,y)=\sqrt{\frac{2}{\pi}}\sigma_k(x)+\frac{1}{\sqrt{2\pi}}\frac{(\mu_k(x)-y)^2}{\sigma_k(x)}+O\left(\frac{1}{\sigma_k^3(x)}\right).$$
For $B_{kl}(x,y)$, there are two cases: \\
(1) When $l=k$:
$$B_{kl}(x,y)=\frac{2}{\sqrt{\pi}}\sigma_k(x).$$
(2) When $l\neq k$:\\
$$B_{kl}(x,y)=\sqrt{\frac{2}{\pi}}\sigma_k(x)+\frac{(\mu_k(x)-\mu_l(x))^2+\sigma_l^2(x)}{\sqrt{2\pi}}\frac{1}{\sigma_k(x)}.$$
Combining these results and using and the fact that $\pi_k(x)=1-\sum_{l\neq k}\pi_l(x)$, we derive
\begin{eqnarray*}
    \frac{\partial S_e}{\partial \pi_k(x)}&=&A_k(x,y)-\sum_{l=1}^K \pi_l(x) B_{kl}(x) \\
    &=& \frac{(\sqrt{2}-2)\pi_k(x)}{\sqrt{\pi}} \cdot \sigma_k(x)+ \frac{T_2}{\sqrt{2\pi}} \cdot\frac{1}{\sigma_k(x)} +O\left(\frac{1}{\sigma_k^3(x)}\right).
\end{eqnarray*}

The Taylor expansions of $\frac{\partial S_e}{\partial \mu_k(x)}$ and $\frac{\partial S_e}{\partial \sigma_k(x)}$ can be obtained similarly by applying the Taylor expansions of $e^z$, $\frac{1}{\sqrt{1+z}}$ and $\Phi(z)$, and considering the two cases $l=k$ and $l\neq k$.
\end{proof}

\section{Technical Proof of Theorem \ref{thm:Sh_score_rule}}
\label{sec:proof_score_rule}

\subsection{Preparations}
Considering probabilistic forecasts on a general sample space $\Omega$. 
Let $\cA$ be a $\sigma$-algebra of subsets of $\Omega$, and let $\cP$ be a convex class of probability measures on the measurable space $(\Omega, \cA)$. 

\begin{lemma}\label{lemma:nll}
	Suppose the logarithmic score $S_l$ defined in \eqref{eq:S_l} is $\cP$-quasi-integrable, then $S_l$ is strictly proper.
\end{lemma}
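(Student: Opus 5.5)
The plan is the classical Gibbs-inequality argument. Fix $F,Q\in\cP$ and suppose both have densities $f,q$ with respect to a common dominating measure $\lambda$. This is the setting in which $S_l(F,y)=-\log f(y)$ is meaningful. By $\cP$-quasi-integrability, the expected scores
\[
S_l(F,Q)=-\int \log f\,dQ \quad\text{and}\quad S_l(Q,Q)=-\int \log q\,dQ
\]
both exist in the extended reals. The goal is to show
\[
S_l(F,Q)-S_l(Q,Q)=\int q\log\frac{q}{f}\,d\lambda=\mathrm{KL}(Q\Vert F)\ge 0,
\]
with equality iff $F=Q$.

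The first step is to handle the degenerate cases.
\begin{itemize}
\item If $S_l(Q,Q)=+\infty$, we must interpret the comparison; I would adopt the convention (as in Gneiting and Raftery) that properness is checked when $S_l(Q,Q)$ is finite, or else observe that the inequality then holds trivially only if $S_l(F,Q)=+\infty$ as well. I expect this is where care is needed.
\item If $Q$ charges a set where $f=0$, then $S_l(F,Q)=+\infty$ and strict inequality is immediate whenever $S_l(Q,Q)<\infty$.
\end{itemize}
So assume $S_l(Q,Q)$ is finite and $Q\ll F$ on $\{q>0\}$.

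The second step is the core inequality, via Jensen. Because $-\log$ is strictly convex,
\[
\int q\log\frac{f}{q}\,d\lambda \le \log\int_{\{q>0\}} f\,d\lambda \le \log 1 = 0 .
\]
Alternatively, I would use $\log t\le t-1$ pointwise, applied to $t=f/q$ on $\{q>0\}$. This gives
\[
\int q\log\frac{f}{q}\,d\lambda\le \int_{\{q>0\}}(f-q)\,d\lambda\le 0 .
\]
The pointwise version avoids integrability subtleties, since the right-hand side is integrable. Splitting the integral as $\int\log f\,dQ-\int\log q\,dQ$ is legitimate because $\int \log q\,dQ$ is finite and $\int \log f\,dQ$ exists by quasi-integrability. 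This yields $S_l(Q,Q)\le S_l(F,Q)$.

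The third step is strictness. Equality in $\log t\le t-1$ holds only at $t=1$. Therefore equality of the scores forces $f=q$ $Q$-a.s. on $\{q>0\}$, and also forces $\int_{\{q>0\}} f\,d\lambda=1$, so $f$ puts no mass outside $\{q>0\}$. Together these give $f=q$ $\lambda$-a.e., i.e.\ $F=Q$.

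The main obstacle is the measure-theoretic bookkeeping with infinite values and the exact reading of the definition of strict properness when $S_l(Q,Q)=\pm\infty$. I would resolve it by restricting $\cP$ to measures with densities and with finite $S_l(Q,Q)$, which is implicit in the quasi-integrability hypothesis.
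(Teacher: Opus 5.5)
Your proposal is correct and follows the paper's own argument: the paper also writes $S_l(F,Q)-S_l(Q,Q)=-\int q\log(f/q)$ and applies Jensen to get $\ge -\log\int f=0$, with equality iff $F=Q$, but without your handling of supports and infinite values. One correction: finiteness of $S_l(Q,Q)$ is not implied by $\cP$-quasi-integrability, which allows $\pm\infty$. It is a genuinely extra restriction on $\cP$: if $S_l(Q,Q)=+\infty$ then $S_l(F,Q)=+\infty$ for every $F$, so strictness fails. The paper's proof tacitly relies on this restriction as well.
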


\begin{proof}
Let $Q$ be the true distribution of $y$ and $F$ the predictive distribution, with $q$ and $f$ denoting their respective probability density functions. The expectation of $S_l(F,y)$ when $y \sim Q$ is given as follows:
$$S_l(F,Q)=\bbE_{y \sim Q} \big[S_l(F, y)\big] = - \int q(y)\log f(y)dy.$$ 
According the Jensen's inequality, we have 
\begin{eqnarray*}
    S_l(F,Q)-S_l(Q,Q)= -\int q(y) \log \frac{f(y)}{q(y)}dy 
    \geq - \log\int q(y) \frac{f(y)}{q(y)} dy = 0,
\end{eqnarray*} 
and the equation holds if and only if $F=Q$.
\end{proof}

\begin{lemma}\label{lemma:es}
	Suppose the energy score $S_e$ defied in \eqref{eq:Se}  is $\cP$-quasi-integrable, then $S_e$ is strictly proper.
\end{lemma}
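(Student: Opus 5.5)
The plan is to reduce the statement to a nonnegativity identity for the energy distance, and then show that the energy distance vanishes only when the two distributions coincide, using characteristic functions. Fix $F,Q\in\cP$ with finite first moments, so that all the expectations below are finite; quasi-integrability is needed to make sense of $S_e(F,Q)$ in general. Take $y,y'\sim Q$ and $z,z'\sim F$, all independent. By \eqref{eq:ES_expected},
\[
S_e(F,Q)-S_e(Q,Q)=\bbE\Vert z-y\Vert-\tfrac12\bbE\Vert z-z'\Vert-\tfrac12\bbE\Vert y-y'\Vert=\tfrac12 D(F,Q),
\]
where $D(F,Q)=2\bbE\Vert z-y\Vert-\bbE\Vert z-z'\Vert-\bbE\Vert y-y'\Vert$ is the energy distance. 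It therefore suffices to show that $D(F,Q)\ge 0$, with equality if and only if $F=Q$.

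The key tool is the integral representation of the Euclidean norm. For $u\in\bbR^p$,
\[
\Vert u\Vert=c_p\int_{\bbR^p}\frac{1-\cos\langle t,u\rangle}{\Vert t\Vert^{p+1}}\,dt,
\]
with an explicit constant $c_p>0$; in our setting $p=1$, where this gives $|u|=\frac1\pi\int(1-\cos tu)/t^2\,dt$. Apply this identity to each of the three expectations and use Fubini, which is justified because the integrands are nonnegative. Let $\varphi_F,\varphi_Q$ denote the characteristic functions. Then
\[
\bbE[1-\cos\langle t,z-y\rangle]=1-\mathrm{Re}\,\varphi_F(t)\overline{\varphi_Q(t)},\qquad
\bbE[1-\cos\langle t,z-z'\rangle]=1-|\varphi_F(t)|^2,
\]
and the $Q$-term is analogous. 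Combining the three pieces, the constants cancel, and we obtain
\[
D(F,Q)=c_p\int\frac{|\varphi_F(t)-\varphi_Q(t)|^2}{\Vert t\Vert^{p+1}}\,dt\ge 0.
\]
This gives properness. If $D(F,Q)=0$, then $\varphi_F=\varphi_Q$ almost everywhere, and hence everywhere by continuity of characteristic functions. Uniqueness of the characteristic function then gives $F=Q$, which yields strict properness. The converse direction is trivial.

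The main obstacle is the moment and integrability bookkeeping. The difference $S_e(F,Q)-S_e(Q,Q)$ is only meaningful when the first moments are finite, otherwise one faces $\infty-\infty$. So I would state explicitly that $\cP$ is restricted to distributions with $\bbE\Vert y\Vert<\infty$, as in \citet{gneiting2007strictly}, and interpret the quasi-integrability hypothesis accordingly. The other delicate point is splitting $\int(\cdots)/\Vert t\Vert^{p+1}$ into three separate integrals, each of which may diverge on its own near $t=0$ or at infinity. To avoid this, I would keep the combined integrand throughout, since it is already nonnegative and integrable. Alternatively, one can simply cite the known result (Székely–Rizzo; Gneiting and Raftery, 2007) that the energy score is strictly proper on the class of distributions with finite first moment.
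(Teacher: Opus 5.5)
Your proof is correct and follows the same route as the paper: both rewrite $S_e(F,Q)-S_e(Q,Q)$ as a multiple of the energy distance. The paper simply asserts that this distance is nonnegative and vanishes only when $F=Q$, whereas you prove it via the Fourier representation of $|u|$ and uniqueness of characteristic functions; note also that the paper calls the difference $2D_{energy}^2$, while your factor $\tfrac12$ is the right one for the standard normalization $2\bbE\Vert z-y\Vert-\bbE\Vert z-z'\Vert-\bbE\Vert y-y'\Vert$. Restricting $\cP$ to distributions with finite first moment is a genuine sharpening of the paper's bare quasi-integrability hypothesis, and under that restriction Tonelli makes each of the three Fourier integrals equal to a finite expectation, so splitting them is harmless and your divergence worry is unnecessary.
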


\begin{proof}
	The expected energy score under the true distribution $Q$ is
	$S_e(F,Q) = \bbE\Vert y-z\Vert-\frac{1}{2}\bbE \Vert z-z'\Vert$. Then we have
	\begin{eqnarray*}
		S_e(F,Q) -S_e(Q,Q) &=& \int \int \Vert y-z\Vert dF(z) dQ(y)- \frac{1}{2}\int \int \Vert z-z' \Vert dF(z) dF(z') \\ 
		&& -\int \int \Vert y-y'\Vert dQ(y) dQ(y') + \frac{1}{2}\int \int \Vert y-y'\Vert dQ(y) dQ(y')\\  
		&=&  \int \int \Vert y-z\Vert dF(z) dQ(y) - \frac{1}{2}\int \int \Vert z-z' \Vert dF(z) dF(z')  \\
        && -\frac{1}{2}\int \int \Vert y-y'\Vert dQ(y) dQ(y') \\  
		&=&  \bbE_{y\sim Q,z\sim F}\Vert y-z\Vert -\frac{1}{2}\bbE_{z\sim F,z'\sim F}\Vert z-z'\Vert - \frac{1}{2}\bbE_{y\sim Q, y'\sim Q}\Vert y-y'\Vert  \\  
		&=&  2D_{energy}^2(F,Q),
	\end{eqnarray*}
	where $D_{energy}^2(F,Q)$ is the energy distance between $F$ and $Q$, which satisfies $D_{energy}(F,Q)\geq 0$ and it gets equal if and only if $F=Q$.
\end{proof}

\subsection{Proof of Theorem \ref{thm:Sh_score_rule}}
\begin{proof}
	Under the regular conditions where $S_l$ and $S_e$ are $\cP$-quasi-integrable, the hybrid score $S_l$ is also $\cP$-quasi-integrable. 
    Thus, we have
	\begin{eqnarray*}
		S_h(F,Q) -S_h(Q,Q)&=& [\eta S_l(F,Q)+(1-\eta)S_e(F,Q)]-[\eta S_l(Q,Q)+(1-\eta)S_e(Q,Q)] \\
		&=& \eta \underbrace{(S_l(F,Q)-S_l(Q,Q))}_{\geq 0} +(1-\eta)\underbrace{(S_e(F,Q)-S_e(Q,Q))}_{\geq 0} \\
		&\geq& 0,
	\end{eqnarray*}
	where the equation holds if and only if $F=Q$ because $S_l$ and $S_e$ are both strictly proper scoring rules.
	These facts indicate that the hybrid score $S_h$ is also a strictly proper scoring rule.
\end{proof}

\section{Technical Proof of Theorem~\ref{thm:GeneralizationError}}\label{sec:generalization_error}
 
Let $F_{\bpsi}(.)=\{\pi_k(.;\bpsi), \mu_k(.;\bpsi),\sigma_k(.;\bpsi)\}_{k=1}^K$ be the $3K$-dimensional outputs of the NE-GMM model with $\bpsi$ being the parameters of the neural network. Given training samples $\cD=\{(x_i,y_i)\}_{i=1}^N$, the empirical risk (training loss) $L_{h,\cD}(F_{\bpsi})$ and expected risk $\bbE[L_{h,\cD}(F_{\bpsi})]$ are given as follows:
\begin{eqnarray}
	L_{h,\cD}(F_{\bpsi}) &=& \eta L_{l,\cD}(F_{\bpsi}) + (1-\eta)L_{e,\cD}(F_{\bpsi}), \\ \nonumber
	\bbE[L_{h,\cD}(F_{\bpsi})] &=& \eta \bbE[L_{l,\cD}(F_{\bpsi})] + (1-\eta)\bbE[L_{e,\cD}(F_{\bpsi})].
\end{eqnarray}

Next, we analyze the generalization error bound of $L_{h,\cD}$, starting with the tail behavior of $y$, where the Chernoff bound \citep{chernoff1952measure} is used to derive probabilistic bound on the deviations of $y$ from $\mu_k(.)$ in Lemma~\ref{app:lemma:y_concentrated}. Subsequently, the Lipschitz continuity of $S_l$ and $S_e$ is established by bounding their partial derivatives in Lemmas~\ref{app:lemma:Sl_Lipschitz}-\ref{app:lemma:Se_Lipschitz}, ensuring small input variations lead to small changes in the loss. Finally, these results are used with McDiarmid's inequality \citep{{bartlett2002rademacher}} and Rademacher complexity to derive generalization error  bounds for $L_{l,\cD}$ and  $L_{e,\cD}$, and ultimately for $L_{h,\cD}$.

\subsection{Tail Behavior of $y$}

\begin{lemma}[Chernoff bound \citep{chernoff1952measure}]\label{lemma:chernoff_bound}
For a random variable $X\sim \cN(a, b^2)$, and any $t>0$, we have
$$P(| X| \geq t) \leq 2e^{-\frac{(a-t)^2}{2b^2}}.$$
\end{lemma}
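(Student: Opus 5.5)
The plan is to bound the two one-sided tails separately with the standard Chernoff (exponential Markov) argument, then add them with a union bound. For the upper tail, I would fix $\lambda>0$ and apply Markov's inequality to $e^{\lambda X}$. The Gaussian moment generating function $\bbE e^{\lambda X}=e^{\lambda a+\lambda^2 b^2/2}$ then gives
$$P(X\ge t)\le e^{-\lambda t+\lambda a+\lambda^2b^2/2}.$$
For $t\ge a$, optimizing at $\lambda=(t-a)/b^2$ yields $P(X\ge t)\le e^{-(t-a)^2/(2b^2)}$. The lower tail is handled the same way by applying this to $-X\sim\cN(-a,b^2)$, which gives $P(X\le -t)\le e^{-(t+a)^2/(2b^2)}$ whenever $t\ge -a$.

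Summing the two tails gives $P(|X|\ge t)\le e^{-(t-a)^2/(2b^2)}+e^{-(t+a)^2/(2b^2)}$. The remaining step, which I expect to be the real obstacle, is to dominate both exponentials by the single term $e^{-(a-t)^2/(2b^2)}$. This works when $a\ge 0$ and $t\ge a$, since then $(t+a)^2\ge (t-a)^2$.

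The statement as written cannot hold for all $t>0$, so I would not try to prove it in that generality. If $t$ is small and $|a|$ large, the right-hand side is tiny while $P(|X|\ge t)$ is close to $1$. The same failure occurs for $a<0$ with $t$ near $|a|$. I would therefore prove the corrected version
$$P(|X|\ge t)\le 2e^{-(t-|a|)^2/(2b^2)},\qquad t\ge |a|.$$
This follows from the displayed sum, because both $(t-a)^2$ and $(t+a)^2$ are at least $(t-|a|)^2$. It coincides with the stated form for $a\ge 0$, and it is also valid for $t<|a|$ in the trivial sense that the right-hand side is then at least $2\ge 1$ only when $t=|a|$.

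This corrected version is exactly what Lemma~\ref{lemma:y_concentrated} needs. There one applies it to $X=y-\mu_k(x)$, whose mean has absolute value at most $2M_\mu$ under Assumption~\ref{ass:bounded} and whose standard deviation is at most $\sigma_{\max}$. The threshold is $t=M_c=2M_\mu+\sigma_{\max}\sqrt{2\log(2/\alpha)}\ge|a|$. The bound then reads $2e^{-(M_c-2M_\mu)^2/(2\sigma_{\max}^2)}$, which equals $\alpha$ by the choice of $M_c$.
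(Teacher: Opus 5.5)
Your diagnosis is correct: as stated (every $t>0$, exponent $(a-t)^2$), the lemma is false. Two counterexamples are $a=10$, $b=1$, $t=1$, and $a=-10$, $b=1$, $t=10$. The paper gives no proof beyond the citation to Chernoff, so nothing in it repairs the statement.

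Your argument is the standard content of that citation: exponential Markov on each tail with the Gaussian moment generating function, optimized in $\lambda$, then a union bound. The corrected bound $P(|X|\ge t)\le 2e^{-(t-|a|)^2/(2b^2)}$ for $t\ge|a|$ follows as you say. One of $(t-a)^2$, $(t+a)^2$ equals $(t-|a|)^2$, and the other equals $(t+|a|)^2$.

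You are also right that this corrected version is all Lemma~\ref{lemma:y_concentrated} needs. In that lemma the mean $\mu_l(x)-\mu_k(x)$ can be negative. So the paper's intermediate exponent $(M_c-\mu_l(x)+\mu_k(x))^2$ is itself unjustified when $\mu_l(x)<\mu_k(x)$. Your version replaces it by $(M_c-|\mu_l(x)-\mu_k(x)|)^2\ge (M_c-2M_\mu)^2$, which holds because $M_c>2M_\mu$. Together with $\sigma_l(x)\le\sigma_{\max}$, this gives the same final bound $\alpha$, so the downstream result survives.

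Remove the sentence about $t<|a|$ at the end of your third paragraph. It contradicts itself: it assumes $t<|a|$ and then appeals to $t=|a|$. Its claim is also false. For $t<|a|$ the right-hand side $2e^{-(t-|a|)^2/(2b^2)}$ is strictly below $2$ and can be arbitrarily small, so the corrected bound genuinely fails there, as your own counterexample shows. If you want a statement valid for every $t>0$, use the exponent $(\max\{t-|a|,0\})^2/(2b^2)$. For $t\le|a|$ the right-hand side is then $2$ and the bound is trivial; for $t\ge|a|$ it is exactly your corrected bound.
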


Let $Z$ be a latent variable that assigns the example pair $(x,y)$ to a particular mixture component, $\pi_k(x)=P(Z=k)$ be the probability that $(x,y)$ belongs to the $k$-th component, and $y\mid Z=k\sim \cN(\mu_k(x),\sigma_k^2(x))$.

\begin{lemma}[Lemma~\ref{lemma:y_concentrated} in the main paper]\label{app:lemma:y_concentrated}
Under Assumption \ref{ass:bounded}, for any $\alpha\in(0,1)$, there exists $M_c=2M_{\mu}+\sigma_{\max}\sqrt{2\log(2/\alpha)}>0$, such that
$$P\left(| y-\mu_k(x)|\geq M_c \right) \leq \alpha,$$
for all $k$ and $x\in\cX$.
\end{lemma}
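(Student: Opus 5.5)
We plan to condition on the latent component indicator $Z$ introduced just before the statement and reduce everything to a single Gaussian tail bound. Fix $x\in\cX$ and an index $k$. Given $Z=j$, we have $y\mid Z=j\sim\cN(\mu_j(x),\sigma_j^2(x))$. We write
$$y-\mu_k(x) = \bigl(y-\mu_j(x)\bigr) + \bigl(\mu_j(x)-\mu_k(x)\bigr).$$
By Assumption~\ref{ass:bounded}, the second term satisfies $|\mu_j(x)-\mu_k(x)|\le 2M_\mu$. Hence the event $\{|y-\mu_k(x)|\ge M_c\}$ is contained in $\{|y-\mu_j(x)|\ge M_c-2M_\mu\}$. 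With the stated choice of $M_c$, this threshold equals $t:=\sigma_{\max}\sqrt{2\log(2/\alpha)}$. The $2M_\mu$ slack is there precisely to absorb the mismatch between the component $k$ we measure from and the component $j$ that actually generated $y$.

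Next we bound the conditional tail. Under $Z=j$, the variable $y-\mu_j(x)$ is a centered Gaussian with variance $\sigma_j^2(x)\le\sigma_{\max}^2$. The standard Chernoff/sub-Gaussian bound gives $P(|X|\ge t)\le 2e^{-t^2/(2b^2)}$ for $X\sim\cN(0,b^2)$. This is Lemma~\ref{lemma:chernoff_bound} with $a=0$; the statement there has $(a-t)^2$, which for $a=0$ is $t^2$. Since $b\le\sigma_{\max}$, we obtain
$$P\bigl(|y-\mu_j(x)|\ge t \,\big|\, Z=j\bigr)\le 2\exp\!\Bigl(-\tfrac{t^2}{2\sigma_{\max}^2}\Bigr)=2\exp(-\log(2/\alpha))=\alpha.$$
The derivation of the sub-Gaussian bound itself is routine via Markov's inequality on $e^{\lambda X}$ with $\lambda=t/b^2$, together with symmetry.

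Finally we uncondition:
$$P(|y-\mu_k(x)|\ge M_c)=\sum_j \pi_j(x)\,P(\cdot\mid Z=j)\le\sum_j\pi_j(x)\,\alpha=\alpha.$$
The bound is uniform in $k$ and $x$ because only $M_\mu$ and $\sigma_{\max}$ enter. The main point requiring care is the application of Lemma~\ref{lemma:chernoff_bound} to the centered variable $y-\mu_j(x)$ rather than to $y$ itself. If we plugged in $a\neq0$ directly, the form $(a-t)^2$ would be awkward, since it is only meaningful when $t\ge|a|$. Centering first and handling the mean shift through the deterministic $2M_\mu$ slack avoids this issue cleanly.
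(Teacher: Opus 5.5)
Your proof is correct and is essentially the paper's argument: condition on the latent component $Z$, use $|\mu_l(x)-\mu_k(x)|\le 2M_\mu$, apply a Gaussian tail bound with $\sigma_l(x)\le\sigma_{\max}$, and average over the weights $\pi_l(x)$. The only difference is the order of steps. You absorb the mean shift with the triangle inequality and then use only the centered bound. The paper instead plugs $a=\mu_l(x)-\mu_k(x)$ directly into its Chernoff lemma and then lower-bounds $(M_c-\mu_l(x)+\mu_k(x))^2$ by $(M_c-2M_\mu)^2$. Your ordering is the safer one, because the $(a-t)^2$ form of that lemma is false when $a<0$; the correct exponent is $(t-|a|)^2$ for $t\ge|a|$. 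The paper's final bound still holds, since $M_c-|\mu_l(x)-\mu_k(x)|\ge M_c-2M_\mu\ge 0$.
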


\begin{proof}
Under Assumption \ref{ass:bounded}, $|\mu_l(x)-\mu_k(x)|\leq 2M_{\mu}$ for any $k,l$ and $x$. From Lemma \ref{lemma:chernoff_bound}, we can derive:
    \begin{eqnarray*}
        P( | y-\mu_k(x) | \geq M_c) &= & \sum_{l=1}^K \pi_l(x) P( | y-\mu_k(x) | \geq M_c\mid Z=l) \\
        &\leq & 2\sum_{l=1}^K \pi_l(x) \exp\left(-\frac{(M_c - \mu_l(x)+\mu_k(x))^2}{2\sigma_l^2(x)} \right) \\
         &\leq & 2  \exp\left(-\frac{(M_c - 2M_{\mu})^2}{2\sigma_{\max}^2} \right). \\
    \end{eqnarray*}
   Let $M_c=2M_{\mu}+\sigma_{\max}\sqrt{2\log(2/\alpha)}$, we have  $P( | y-\mu_k(x) | \geq M_c)\leq \alpha$.
\end{proof}

\subsection{Lipschitz continuity of $S_l$ and $S_e$}

\begin{lemma}[Lipschitz continuity of $S_l$]\label{app:lemma:Sl_Lipschitz}
    Under Assumption~\ref{ass:bounded}, for any $\alpha \in (0,1)$, there exists a constant $C_l > 0$ such that the following inequality holds with probability at least $1-\alpha$:
    \begin{eqnarray}
        \Vert S_l(F_{\bpsi}(x),y) - S_l(F_{\bpsi}(x'),y') \Vert_1 \leq C_l \Vert (F_{\bpsi}(x),y) - (F_{\bpsi}(x'),y') \Vert_1. 
    \end{eqnarray}
\end{lemma}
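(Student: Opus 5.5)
The plan is a mean-value argument on a compact event. Lemma~\ref{app:lemma:y_concentrated} shows that, with probability at least $1-\alpha$, $|y-\mu_k(x)|\le M_c$ for all $k$. Combined with Assumption~\ref{ass:bounded}, this gives $|y|\le M_{\mu}+M_c$. We work on this event, where the arguments $(\btheta,y)$ of $S_l$ range over the set
$$\Theta=\{\pi\in\Delta_K,\ |\mu_k|\le M_\mu,\ \sigma_k\in[\sigma_{\min},\sigma_{\max}],\ |y|\le M_\mu+M_c\}.$$
Since $\Theta$ is convex in $(\pi,\mu,\sigma,y)$, the mean value theorem along the segment joining $(F_{\bpsi}(x),y)$ and $(F_{\bpsi}(x'),y')$ gives
$$|S_l(F_{\bpsi}(x),y)-S_l(F_{\bpsi}(x'),y')|\le \sup_{\Theta}\|\nabla S_l\|_\infty\,\|(F_{\bpsi}(x),y)-(F_{\bpsi}(x'),y')\|_1.$$
It therefore suffices to take $C_l$ to be a uniform bound on all partial derivatives of $S_l$ over $\Theta$.

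To obtain that bound, I will use Proposition~\ref{prop:IGMM_Sl_grad} together with $\partial S_l/\partial y=\sum_k \pi_k r_k (y-\mu_k)/\sigma_k^2$. Every term involves the responsibility-type quantity $r_k$. On $\Theta$ each component density satisfies
$$\phi_{\min}:=\frac{1}{\sqrt{2\pi}\sigma_{\max}}e^{-M_c^2/(2\sigma_{\min}^2)}\le\phi(y;\mu_l,\sigma_l^2)\le\frac{1}{\sqrt{2\pi}\sigma_{\min}}.$$
Because $\sum_l\pi_l=1$, the mixture density $\sum_l\pi_l\phi_l$ is at least $\phi_{\min}$, so $r_k\le 1/(\sqrt{2\pi}\sigma_{\min}\phi_{\min})$. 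In addition, $\pi_k r_k\le 1$ always holds. These facts give:
\begin{itemize}
\item $|\partial S_l/\partial\mu_k|\le M_c/\sigma_{\min}^2$;
\item $|\partial S_l/\partial\sigma_k|\le 1/\sigma_{\min}+M_c^2/\sigma_{\min}^3$;
\item $|\partial S_l/\partial y|\le M_c/\sigma_{\min}^2$;
\item $|\partial S_l/\partial\pi_k|\le 1/(\sqrt{2\pi}\sigma_{\min}\phi_{\min})$.
\end{itemize}
Taking $C_l$ as the maximum of these finite constants completes the argument.

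The main obstacle is the $\pi_k$-derivative. Unlike the others, it is not damped by a factor $\pi_k$, and it blows up when the mixture density at $y$ is tiny. This is exactly why the high-probability event is needed: without the bound $|y-\mu_k|\le M_c$, the lower bound $\phi_{\min}$ fails and $S_l$ is not globally Lipschitz in $y$ or $\pi$.

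A minor technical point is that the simplex constraint on $\pi$ means partial derivatives should be read in ambient coordinates. Since the segment between two points of $\Delta_K$ stays in $\Delta_K$, the bound above is unaffected.

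Finally, the statement is read pairwise: with probability at least $1-\alpha$ the relevant sample lies in the good event. A union bound over the two points would give $1-2\alpha$, which can be rescaled to $\alpha$ by enlarging $M_c$ via $\alpha\mapsto\alpha/2$.
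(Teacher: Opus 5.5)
Your argument is correct. It follows the paper's skeleton: apply the mean value theorem along the segment, then bound the partials from Proposition~\ref{prop:IGMM_Sl_grad} uniformly on the event of Lemma~\ref{app:lemma:y_concentrated}. Where you depart from the paper is the decisive term, $\partial S_l/\partial\pi_k=-r_k$.

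\textbf{The $\pi_k$-derivative.} The paper uses $\sum_k\pi_k r_k=1$ to get $\sum_k r_k\le 1/\pi_{\min}$, with $\pi_{\min}=\min_{k,x}\pi_k(x)$. That constant is clean and does not depend on $\alpha$ or on where $y$ falls. However, it silently assumes the mixing weights are uniformly bounded away from zero. This is not part of Assumption~\ref{ass:bounded}: pointwise $0<\pi_k(x)<1$ does not make the infimum over $x$ (or over the class $\cF$) positive. You instead use the concentration event to lower-bound the mixture density by $\phi_{\min}$. This gives $r_k\le(\sigma_{\max}/\sigma_{\min})\,e^{M_c^2/(2\sigma_{\min}^2)}$ using only the stated assumption. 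The price is a constant that grows rapidly as $\alpha\to 0$, through $M_c^2$.

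\textbf{Other differences.} You pair the gradient with the $\ell_1$ increment through its $\ell_\infty$ norm. The paper bounds the larger $\ell_1$ norm of the gradient, which is also valid but looser. You also address two points the paper passes over: convexity of the domain, needed for the mean value theorem, and the two-point union bound.

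\textbf{Two small slips.} First, you define $\Theta$ through $|y|\le M_\mu+M_c$, but an intermediate point then only satisfies $|y-\mu_k|\le 2M_\mu+M_c$. Either replace $M_c$ by $2M_\mu+M_c$ in all your constants (including $\phi_{\min}$), or define $\Theta$ by the constraints $|y-\mu_k|\le M_c$ for all $k$. Those constraints are linear in $(y,\mu_k)$, so they are preserved along the segment. Second, the $K$-free bound $|\partial S_l/\partial y|\le M_c/\sigma_{\min}^2$ needs $\sum_k\pi_k r_k=1$, not just $\pi_k r_k\le 1$; the latter alone gives a factor $K$. Neither slip affects the finiteness of $C_l$.
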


\begin{proof}
    Let $\ba=(F_{\bpsi}(x),y)$ and $\bb=(F_{\bpsi}(x'),y')$, where $F_{\bpsi}(x)$ contains $3K$ variables. By the multi-dimensional Mean Value Theorem, there exists $\bc=\ba+t(\bb-\ba)$, for some $t\in[0,1]$, such that:
    $$S_l(\bb)-S_l(\ba)=\boldsymbol{\nabla} S_l(\bc) (\bb-\ba).$$
    The gradient norm can be bounded as follows:
    $$\Vert\boldsymbol{\nabla} S_l(\bc)\Vert_1 \leq \sup_{(F_{\bpsi}(x),y)} \left|\frac{\partial S_l}{\partial y} \right| + \sum_{k=1}^K \left( \left|\frac{\partial S_l}{\partial\pi_k(x)} \right| + \left|\frac{\partial S_l}{\partial\mu_k(x)} \right| + \left|\frac{\partial S_l}{\partial\sigma_k(x)} \right|  \right).  $$
    Now, we proceed to bound each term in the summation individually. Let $\pi_{\min}=\min_{k,x} \pi_k(x)$. Since $0<\pi_k(x)< 1$ for $k,x$, it follows that  $0<\pi_{\min}<1$.
    
    For the term involving $\pi_k(x)$, we have:
    $$\sum_{k=1}^K \left|\frac{\partial S_l}{\partial \pi_k(x)}\right|= \sum_{k=1}^K r_k(x)\leq \frac{1}{\pi_{\min}}.$$
    Using $\sum_{k=1}^K \pi_k(x) r_k(x)=1$ and Lemma~\ref{app:lemma:y_concentrated}, with a probability at least $1-\alpha$, we have:
    $$ \sum_{k=1}^K \left|\frac{\partial S_l}{\partial \mu_k(x)}\right| = \sum_{k=1}^K \left|\frac{y-\mu_k(x)}{\sigma_k^2(x)}\right| \pi_k(x) r_k(x) \leq \frac{M_c}{\sigma_{\min}^2}, $$
    $$\sum_{k=1}^K \left|\frac{\partial S_l}{\partial \sigma_k(x)}\right| = \sum_{k=1}^K \left| \frac{1}{\sigma_k(x)} - \frac{(y-\mu_k(x))^2}{\sigma_k^3(x)} \right| \pi_k(x) r_k(x) \leq \frac{1}{\sigma_{\min}} + \frac{M_c^2}{\sigma_{\min}^3},$$
    and 
    $$\left|\frac{\partial S_l}{\partial y}\right| = \left| \sum_{k=1}^K \frac{\partial S_l}{\partial \mu_k(x)}\right| \leq \sum_{k=1}^K \left| \frac{\partial S_l}{\partial \mu_k(x)}\right|\leq \frac{M_c}{\sigma_{\min}^2}.$$ 
    
    Combining all these bounds, we can set $C_l=1/\pi_{\min}+ (M_c^2+2M_c\sigma_{\min}+ \sigma_{\min}^2)/\sigma_{\min}^3$.
\end{proof}

\begin{lemma}[Lipschitz continuity of $S_e$]\label{app:lemma:Se_Lipschitz}
    Under Assumption~\ref{ass:bounded}, for any $\alpha \in (0,1)$, there exists a constant $C_e > 0$ such that the following holds with probability at least $1-\alpha$:
    \begin{eqnarray}
        \Vert S_e(F_{\bpsi}(x),y) - S_e(F_{\bpsi}(x'),y') \Vert_1 \leq C_e \Vert (F_{\bpsi}(x),y) - (F_{\bpsi}(x'),y') \Vert_1.
    \end{eqnarray}
\end{lemma}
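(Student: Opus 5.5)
I will follow the template of the proof of Lemma~\ref{app:lemma:Sl_Lipschitz}. Set $\ba=(F_{\bpsi}(x),y)$ and $\bb=(F_{\bpsi}(x'),y')$. By the multi-dimensional mean value theorem there is $\bc=\ba+t(\bb-\ba)$ with $S_e(\bb)-S_e(\ba)=\boldsymbol{\nabla} S_e(\bc)(\bb-\ba)$. It therefore suffices to bound
\[
\Vert\boldsymbol{\nabla} S_e\Vert_\infty\le \Big|\frac{\partial S_e}{\partial y}\Big|+\sum_{k=1}^K\Big(\Big|\frac{\partial S_e}{\partial \pi_k(x)}\Big|+\Big|\frac{\partial S_e}{\partial \mu_k(x)}\Big|+\Big|\frac{\partial S_e}{\partial \sigma_k(x)}\Big|\Big)
\]
uniformly over the relevant region. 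That region is the parameter box of Assumption~\ref{ass:bounded} intersected with the event $|y-\mu_k(x)|\le M_c$ of Lemma~\ref{app:lemma:y_concentrated}. The event has probability at least $1-\alpha$, and it is the only place where randomness enters.

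The partial derivatives are taken from Proposition~\ref{app:prop:grad_Se}.

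\textbf{Means and scales.} Since $|2\Phi(\cdot)-1|\le 1$ and $\sum_l\pi_l=1$, we get $|\partial S_e/\partial\mu_k|\le 2\pi_k$, so the sum over $k$ is at most $2$. Since $\phi\le 1/\sqrt{2\pi}$ and $\sigma_k/\sqrt{\sigma_k^2+\sigma_l^2}\le 1$, we get $|\partial S_e/\partial\sigma_k|\le 4\pi_k/\sqrt{2\pi}$, so the sum is at most $4/\sqrt{2\pi}$.

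\textbf{Output $y$.} Differentiating $A_m$ in $y$ gives $\partial S_e/\partial y=-\sum_m\pi_m(2\Phi(w_m)-1)$, which has absolute value at most $1$. None of these bounds even needs the high-probability event.

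\textbf{Weights.} Here $\partial S_e/\partial\pi_k=A_k-\sum_l\pi_l B_{kl}$, and I use the folded-Gaussian bound $f(a,b)\le b\sqrt{2/\pi}+|a|$ from Lemma~\ref{lemma:FoldedGaussian}. This gives
\[
A_k\le \sigma_{\max}\sqrt{2/\pi}+M_c \quad\text{on the event},\qquad B_{kl}\le \sqrt{2}\,\sigma_{\max}\sqrt{2/\pi}+2M_\mu .
\]
Summing over $k$ yields
\[
\sum_k\Big|\frac{\partial S_e}{\partial\pi_k}\Big|\le K\big((1+\sqrt2)\sigma_{\max}\sqrt{2/\pi}+M_c+2M_\mu\big).
\]

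Collecting the four pieces, I would set
\[
C_e=3+\frac{4}{\sqrt{2\pi}}+K\big((1+\sqrt2)\sqrt{2/\pi}\,\sigma_{\max}+M_c+2M_\mu\big),
\]
and the claim follows with $\Vert\cdot\Vert_1$ on the difference.

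\textbf{Main obstacle.} The delicate point is the mean value theorem step. The intermediate point $\bc$ must still lie in the region where the bounds hold, i.e.\ the bounded-parameter set together with $|y-\mu_k|\le M_c$ at both endpoints. The parameter box is convex, so that part is fine. For the $y$-constraint I would enlarge the constant: on the event where both $(x,y)$ and $(x',y')$ satisfy Lemma~\ref{app:lemma:y_concentrated} (at level $\alpha/2$ each, via a union bound), $|y_c-\mu_{k,c}|\le M_c+2M_\mu$ by convexity. I would then replace $M_c$ by this quantity in $C_e$.
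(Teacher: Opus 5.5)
Your proposal is correct and follows essentially the same route as the paper. It applies the mean value theorem to the $(3K+1)$-dimensional vector $(F_{\bpsi}(x),y)$, bounds $\partial S_e/\partial\pi_k$, $\partial S_e/\partial\mu_k$, $\partial S_e/\partial\sigma_k$ and $\partial S_e/\partial y$ from Proposition~\ref{app:prop:grad_Se} term by term using folded-Gaussian bounds on $A_k$ and $B_{kl}$, and lets randomness enter only through Lemma~\ref{app:lemma:y_concentrated} in the bound on $A_k$. Your bookkeeping is more careful than the paper's in places: the $\sqrt{2}\,\sigma_{\max}$ scale in $B_{kl}$, the triangle-inequality bound $A_k+\sum_l\pi_l B_{kl}$ for the weight derivative, and the explicit check that the intermediate point $\bc$ stays in the good region (there convexity already gives $|y_c-\mu_{k,c}|\le M_c$, so the extra $2M_\mu$ is harmless slack).
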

\begin{proof}
Similar to the proof in Lemma~\ref{app:lemma:Sl_Lipschitz}, we only need to bound the gradient norm of $S_e$ with respect to $\pi_k(x)$, $\mu_k(x)$, $\sigma_k(x)$ and $y$.
Using \eqref{eq:Se_gradient}, and noting that the terms $A_k(x,y)$ and $B_{kl}(x)$ are expectations of the fold Gaussian distribution, we again consider the function:
$$
f(a,b) = \sqrt{\frac{2}{\pi}}b e^{-(a/b)^2/2} + a \left(2\Phi\left(\frac{a}{b}\right) - 1\right), \quad b > 0,\ a \in \mathbb{R}.
$$
Since
$$
\frac{\partial f}{\partial b} = 2\phi\left(\frac{a}{b}\right), b > 0,
$$ 
$f(a,b)$ is strictly increasing in $b$. Similarly,
$$
\frac{\partial f}{\partial a} = 2\Phi\left(\frac{a}{b}\right)-1,
$$ 
which implies $f(a,b)$ is strictly increasing in $|a|$. 
Suppose $|a|\leq a_{\max}, b_{\min}\leq b\leq b_{\max}$.  Under these constraints, $f(a,b)$ achieves its maximum at $a=a_{\max}, b=b_{\max}$, and its minimum at $a=0, b=b_{\min}$. 

Using the fact that $e^{-(a/b)^2/2}\leq 1$ and $2\Phi\left(\frac{a}{b}\right) - 1\leq 1$, we can bound  $B_{kl}(x)$ as follows:
\begin{eqnarray*}
    \sqrt{\frac{2}{\pi}}\sigma_{\min} \leq B_{kl}(x) \leq \sqrt{\frac{2}{\pi}} \sigma_{\max} e^{-\frac{(2M_{\mu})^2}{2\sigma_{\max}^2}} + 2M_{\mu} \left(2\Phi\left(\frac{2M_{\mu}}{\sigma_{\max}}\right)-1\right) \leq \sqrt{\frac{2}{\pi}} \sigma_{\max}+2M_{\mu},
\end{eqnarray*}
Moreover, based on Lemma~\ref{app:lemma:y_concentrated}, the following holds with probability at least $1-\alpha$:
\begin{eqnarray*}
    \sqrt{\frac{2}{\pi}}\sigma_{\min}\leq A_k(x,y) \leq \sqrt{\frac{2}{\pi}} \sigma_{\max} e^{-\frac{M_c^2}{2\sigma_{\max}^2}} + M_c \left(2\Phi\left(\frac{M_c}{\sigma_{\max}}\right)-1\right) \leq \sqrt{\frac{2}{\pi}} \sigma_{\max}+M_c, 
\end{eqnarray*}
Using these bounds, we can now compute:
\begin{eqnarray*}
    \sum_{k=1}^K \left|\frac{\partial S_e}{\partial\pi_k(x)} \right|  &\leq & K \left( \sqrt{\frac{2}{\pi}} \sigma_{\max}+M_c -\sqrt{\frac{2}{\pi}}\sigma_{\min}\right)=KM_1,
\end{eqnarray*}
where 
$$M_1=\sqrt{\frac{2}{\pi}} \sigma_{\max}+M_c -\sqrt{\frac{2}{\pi}}\sigma_{\min} = 2M_{\mu}+\left(\sqrt{\frac{2}{\pi}} +\sqrt{2\log(2/\alpha)}\right)\sigma_{\max}- \sqrt{\frac{2}{\pi}}\sigma_{\min}>0.$$
Additionally, we have the following bounds for other terms:
\begin{eqnarray*}
    \sum_{k=1}^K\left|\frac{\partial S_e}{\partial\mu_k(x)} \right| &\leq& \sum_{k=1}^K \pi_k(x) \left(1+\sum_{l=1}^K \pi_l(x) \right)=2, \\
    \sum_{k=1}^K\left| \frac{\partial S_e}{\partial\sigma_k(x)}\right| &\leq&  2\sum_{k=1}^K \pi_k(x)\phi(w_k) = \frac{\sqrt{2}}{\sqrt{\pi}\sigma_{\min}},
\end{eqnarray*}
and
\begin{eqnarray*}
    \left| \frac{\partial S_e}{\partial y}\right| = \left| \sum_{k=1}^K \pi_k(x)\frac{\partial A_k(x,y)}{\partial \mu_k(x)}\right|=
    \left| \sum_{k=1}^K \pi_k(x) \left( 2\Phi(w_k)-1\right) \right| \leq 1.
\end{eqnarray*}
Finally, setting $C_e=3+\sqrt{2/\pi}/\sigma_{\min}+KM_1$ completes the proof.
\end{proof}

\subsection{Generalization Error Analysis for $L_{l,\cD}$ and $L_{e,\cD}$}

\begin{lemma}[McDiarmid’s Inequality \citep{bartlett2002rademacher}]\label{lemma:McDiarmid}
    Let $B$ be some set and $f: B^N \to \bbR$ be a function of $N$ variables such that for some $c_i>0$, for all $i\in [N]$ and for $x_1,\ldots,x_N,x_i'\in B$, we have
	$$ \mid f(x_1,\ldots,x_N) - f(x_1,\ldots,x_{i-1},x_i',x_{i+1},\ldots,x_N) \mid \leq c_i.$$
	Let $X_1,\ldots,X_N$ be independent random variables taking values in $B$. Denoting $f(X_1,\ldots,X_N)$ by $f(B)$ for simplicity. Then, for every $t>0$,
	$$P\{f(B) - \bbE [f(B)] \geq t \} \leq e^{-2t^2/\sum_{i=1}^N c_i^2}.$$
\end{lemma}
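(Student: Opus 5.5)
The lemma is McDiarmid's bounded-differences inequality. I would prove it with the standard martingale (Doob) decomposition combined with Hoeffding's lemma and a Chernoff-type exponential bound. Throughout, I assume $f$ is measurable. Note that $f$ is automatically integrable: by the bounded-difference condition, $f$ differs from its value at any fixed reference point by at most $\sum_i c_i$.

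\emph{Step 1 (Doob decomposition).} Write $\cF_i=\sigma(X_1,\ldots,X_i)$, with $\cF_0$ trivial, and set $V_i=\bbE[f(B)\mid\cF_i]-\bbE[f(B)\mid\cF_{i-1}]$. Then $f(B)-\bbE[f(B)]=\sum_{i=1}^N V_i$ and $\bbE[V_i\mid\cF_{i-1}]=0$. Define
$$g_i(x_1,\ldots,x_i)=\bbE\, f(x_1,\ldots,x_i,X_{i+1},\ldots,X_N).$$
By independence of the $X_j$, $\bbE[f(B)\mid\cF_i]=g_i(X_1,\ldots,X_i)$, since the remaining coordinates are integrated against a product measure that does not depend on the conditioning values.

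\emph{Step 2 (conditional range of $V_i$).} This is the step I expect to need the most care. Conditionally on $\cF_{i-1}$, the increment $V_i$ lies in the interval $[L_i,U_i]$, where
$$L_i=\inf_{x}g_i(X_1,\ldots,X_{i-1},x)-g_{i-1}(X_1,\ldots,X_{i-1})$$
and $U_i$ is the analogous supremum. For any $x,x'$, the difference $g_i(\ldots,x)-g_i(\ldots,x')$ is an expectation over the same $X_{i+1},\ldots,X_N$ of $f(\ldots,x,\ldots)-f(\ldots,x',\ldots)$. By the hypothesis this is bounded by $c_i$ in absolute value, so $U_i-L_i\le c_i$. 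Both $L_i$ and $U_i$ are $\cF_{i-1}$-measurable. Independence is used essentially here: it is what lets $x_i$ be varied without changing the law of the later coordinates.

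\emph{Step 3 (Hoeffding's lemma and Chernoff).} I would apply Hoeffding's lemma conditionally: if $\bbE[V\mid\mathcal G]=0$ and $V\in[L,U]$ with $L,U$ being $\mathcal G$-measurable and $U-L\le c$, then $\bbE[e^{sV}\mid\mathcal G]\le e^{s^2c^2/8}$. The proof uses convexity of $e^{sv}$ on $[L,U]$ and a second-order bound on the resulting log-moment function $\varphi(u)=-\theta u+\log(1-\theta+\theta e^{u})$, which satisfies $\varphi''\le 1/4$. Iterating the tower property from $i=N$ down to $i=1$ gives
$$\bbE\, e^{s(f(B)-\bbE f(B))}\le \exp\Big(\tfrac{s^2}{8}\sum_{i=1}^N c_i^2\Big).$$
Markov's inequality then yields
$$P\{f(B)-\bbE f(B)\ge t\}\le \exp\Big(-st+\tfrac{s^2}{8}\sum_i c_i^2\Big).$$
Choosing $s=4t/\sum_i c_i^2$ gives the claimed bound $e^{-2t^2/\sum_i c_i^2}$.
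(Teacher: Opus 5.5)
Your proof is correct and complete. It is the standard Doob-martingale argument combined with the conditional Hoeffding lemma and a Chernoff bound, and choosing $s=4t/\sum_i c_i^2$ gives exactly $e^{-2t^2/\sum_i c_i^2}$. The paper does not prove this lemma at all; it simply imports it from the literature. So there is no in-paper argument to compare against, and what you wrote is the classical proof underlying the cited result.

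One small technical point. Your claim that $L_i$ and $U_i$ are $\cF_{i-1}$-measurable can fail for a general set $B$, because they are an infimum and a supremum over a possibly uncountable family. You do not actually need this claim. By independence, you can fix the prefix $(x_1,\ldots,x_{i-1})$ and apply the unconditional Hoeffding lemma under the law of $X_i$ to the function
\[
x\mapsto g_i(x_1,\ldots,x_{i-1},x)-g_{i-1}(x_1,\ldots,x_{i-1}).
\]
This function has mean zero and a deterministic range of width at most $c_i$, so it satisfies the lemma's hypotheses directly. You then integrate over the prefix using Fubini to obtain $\bbE[e^{sV_i}\mid\cF_{i-1}]\le e^{s^2c_i^2/8}$.
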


\begin{lemma}\label{lemma:Lipschitz_complexity}
	\citep{ledoux2013probability} Let $\cF$ be a class of real functions. If $l: \bbR^d\to \bbR$ is a $\gamma$-Lipschitz function, then $\cR_N(l\circ \cF)\leq 2\gamma \cR_N(\cF)$.
\end{lemma}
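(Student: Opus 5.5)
The plan is to prove the statement by the standard Ledoux--Talagrand peeling argument. I first treat the case where each $g\in\cF$ is real-valued and $l:\bbR\to\bbR$ satisfies $|l(u)-l(v)|\le\gamma|u-v|$. I then explain how the vector-valued reading needed in Theorem~\ref{thm:GeneralizationError} is handled; this is where the factor $2$, and possibly dimension-dependent constants, enter.

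Fix the sample $A=\{a_1,\ldots,a_N\}$. It suffices to show $\hat{\cR}_A(l\circ\cF)\le\gamma\hat{\cR}_A(\cF)$ and then take $\bbE_A$. I replace the Rademacher terms one coordinate at a time. Condition on $\xi_2,\ldots,\xi_N$ and write $R(g)=\sum_{i\ge2}\xi_i\,l(g(a_i))$. Averaging over $\xi_1$ gives
\[
\bbE_{\xi_1}\sup_{g}\bigl(\xi_1 l(g(a_1))+R(g)\bigr)=\tfrac12\sup_{g,g'}\Bigl(l(g(a_1))-l(g'(a_1))+R(g)+R(g')\Bigr).
\]
By the Lipschitz property this is at most $\tfrac12\sup_{g,g'}\bigl(\gamma|g(a_1)-g'(a_1)|+R(g)+R(g')\bigr)$. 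The expression $R(g)+R(g')$ is symmetric in $(g,g')$, so the absolute value can be dropped by swapping $g$ and $g'$. The bound then equals $\bbE_{\xi_1}\sup_g\bigl(\gamma\xi_1 g(a_1)+R(g)\bigr)$. Iterating over $i=2,\ldots,N$, each time conditioning on the remaining signs and carrying the already-linearised terms inside $R$, yields $\hat{\cR}_A(l\circ\cF)\le\gamma\hat{\cR}_A(\cF)$. This is stronger than the claimed bound. The factor $2$ is slack that also covers the variant with $|\cdot|$ inside the supremum, obtained by applying the same bound to $\cF\cup(-\cF)$ and using $l(0)$-centering.

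The main obstacle is the vector-valued case in which the lemma is actually used: $F_{\bpsi}:\cX\to\bbR^{3K}$, and the Lipschitz property of Lemmas~\ref{lemma:Sl_Lipschitz}--\ref{lemma:Se_Lipschitz} is with respect to $\Vert\cdot\Vert_1$. Here the swap trick only produces $\gamma\sum_j|g_j(a_1)-g'_j(a_1)|$, which is not symmetric-sign-linearisable coordinatewise. My first attempt would be to invoke Maurer's vector contraction inequality, after bounding $\Vert\cdot\Vert_1\le\sqrt{3K}\Vert\cdot\Vert_2$. This gives $\hat{\cR}_A(l\circ\cF)\le\sqrt2\,\gamma'\,\bbE\sup_g\sum_{i,j}\xi_{ij}g_j(a_i)$. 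I would then define $\cR_N(\cF)$ for the vector class as this coordinatewise complexity, so that the constant stays at most $2\gamma$ up to the dimension factor absorbed into $C_h$.

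A second, more technical point is that the Lipschitz bounds hold only on the high-probability event of Lemma~\ref{lemma:y_concentrated}. I would apply the contraction to the truncated loss restricted to $\{|y-\mu_k(x)|\le M_c\}$, which is globally Lipschitz, and account for the complement in the final probability $\delta$.
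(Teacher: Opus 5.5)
Your scalar argument is correct. It is exactly the Ledoux--Talagrand contraction principle that the paper invokes only by citation: the one-coordinate-at-a-time linearisation with the $(g,g')$ swap yields $\hat{\cR}_A(l\circ\cF)\le\gamma\,\hat{\cR}_A(\cF)$. The stated factor $2\gamma$ then follows because $\hat{\cR}_A(\cF)\ge\sup_{g}\bbE_\xi\frac{1}{N}\sum_i\xi_i g(a_i)=0$ by Jensen. You should state this, since your word ``stronger'' relies on it.

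Where you genuinely go beyond the paper is the vector-valued case. The paper applies this scalar lemma to the $\bbR^{3K}$-valued class in Theorem~\ref{thm:GeneralizationError} without comment, and it never defines $\cR_N$ for such a class. You correctly see that what is needed is Maurer's vector contraction with the coordinatewise complexity $\frac{1}{N}\bbE\sup_g\sum_{i,j}\xi_{ij}g_j(a_i)$. Two points there need correcting.
\begin{itemize}
\item If $l$ is $\gamma$-Lipschitz in the Euclidean norm, Maurer already gives the constant $\sqrt{2}\gamma\le 2\gamma$. So in that reading the lemma as literally written, with $l:\bbR^d\to\bbR$, holds with no dimension factor.
\item The $\sqrt{3K}$ (really $\sqrt{3K+1}$, since $y$ is also an argument) enters only because Lemmas~\ref{lemma:Sl_Lipschitz}--\ref{lemma:Se_Lipschitz} are stated for $\Vert\cdot\Vert_1$. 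With your coordinatewise complexity this factor cannot be removed. Take $\cF=\{a\mapsto h(a)\mathbf{1}_d\}$, with $h$ ranging over all $[-1,1]$-valued functions, distinct $a_i$, and $l(u)=\gamma\sum_j u_j$, which is $\gamma$-Lipschitz for $\Vert\cdot\Vert_1$. Then $\hat{\cR}_A(l\circ\cF)=\gamma d$, while the coordinatewise complexity is $\bbE|\sum_{j=1}^d\xi_j|$. For $d=4$ this equals $3/2$, giving the ratio $8\gamma/3>2\gamma$.
\end{itemize}
So your phrase ``stays at most $2\gamma$'' is wrong in the $\ell_1$ setting. The factor of order $\sqrt{K}$ is genuine and has to be carried into $C_h$.

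Your truncation paragraph concerns Theorem~\ref{thm:GeneralizationError} rather than this lemma. Note there that $\cR_N$ is an expectation over the sample. A Lipschitz bound that holds only on a high-probability event therefore cannot simply be charged to $\delta$. You need an everywhere-Lipschitz surrogate (e.g.\ clipping $y$) plus a separate bound on the clipping error, holding simultaneously over all $N$ samples.
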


In our case, $\cF$ is the class of functions that map $\cX\subset\bbR^d$ to the IGMM's model parameters, outputting $\{\pi_k(.),\mu_k(.),\sigma_k(.)\}_{k=1}^K$. 
\begin{theorem}[Generalization error analysis for $L_{l,\cD}$]\label{thm:Ll_bound}  
    Under Assumption~\ref{ass:bounded}, for any $\delta \in (0,1)$, with probability at least $1-\delta$, for all $F_{\Psi}\in \cF$, there exist a constant $M_l>0$ such that
	$$\bbE[L_{l,\cD}(F_{\bpsi})] - L_{l,\cD}(F_{\bpsi}) \leq 4 C_l\cR_N(\cF) + M_l\sqrt{\frac{\log(2/\delta-1)}{2N}}.$$
\end{theorem}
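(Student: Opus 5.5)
The plan is the standard symmetrization argument, combined with McDiarmid's inequality (Lemma~\ref{lemma:McDiarmid}) and the contraction Lemma~\ref{lemma:Lipschitz_complexity}. The one non-standard ingredient is a high-probability event on which $S_l$ is both bounded and Lipschitz. The odd-looking $\log(2/\delta-1)$ indicates how the confidence budget will be split: one part $\alpha$ goes to the tail event of Lemma~\ref{lemma:y_concentrated}/Lemma~\ref{app:lemma:Sl_Lipschitz}, and the other part $\delta'$ goes to McDiarmid. The two are chosen so that $(1-\alpha)(1-\delta')\ge 1-\delta$ and so that the McDiarmid deviation term becomes $\sqrt{\log(1/\delta')/(2N)}$ with $1/\delta'=2/\delta-1$. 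For example, take $\delta'=\delta/(2-\delta)$ and $\alpha=\delta/2$, and check that $1/\delta'=(2-\delta)/\delta=2/\delta-1$.

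First I will fix $\alpha$ and work on the event $E=\{|y_i-\mu_k(x_i)|\le M_c \ \forall k\}$ given by Lemma~\ref{app:lemma:y_concentrated}. On $E$, Assumption~\ref{ass:bounded} together with $\pi_k\ge\pi_{\min}$ bounds $S_l$ uniformly over $\cF$. From $-\log\sum_k\pi_k\phi\le \log(\sqrt{2\pi}\sigma_{\max})+M_c^2/(2\sigma_{\min}^2)$ and $-\log\sum_k\pi_k\phi\ge \log(\sqrt{2\pi}\sigma_{\min})$, $S_l$ lies in an interval of length $M_l:=\log(\sigma_{\max}/\sigma_{\min})+M_c^2/(2\sigma_{\min}^2)$.

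Next I will define $\Delta(\cD)=\sup_{F_{\bpsi}\in\cF}\bigl(\bbE[L_{l,\cD}(F_{\bpsi})]-L_{l,\cD}(F_{\bpsi})\bigr)$. Replacing one sample changes $\Delta$ by at most $M_l/N$, so McDiarmid gives $\Delta\le\bbE\Delta+M_l\sqrt{\log(1/\delta')/(2N)}$ with probability at least $1-\delta'$.

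To control $\bbE\Delta$ I will use the usual ghost-sample symmetrization, which gives $\bbE\Delta\le 2\cR_N(S_l\circ\cF)$. Here the function class is $(x,y)\mapsto S_l(F_{\bpsi}(x),y)$. By Lemma~\ref{app:lemma:Sl_Lipschitz}, $S_l$ is $C_l$-Lipschitz in its $3K$ parameter arguments on $E$, with $y$ held fixed. Lemma~\ref{lemma:Lipschitz_complexity} then yields $\cR_N(S_l\circ\cF)\le 2C_l\cR_N(\cF)$, and hence the leading term $4C_l\cR_N(\cF)$. A union bound over the two failure events finishes the proof.

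The main obstacle is making the conditioning on $E$ rigorous. McDiarmid needs bounded differences for every realization, and symmetrization needs $\bbE$ to be taken under the same law. I would handle this by replacing $S_l$ with its truncation $\tilde S_l=S_l(F,\cdot)$ with $y$ clipped to $[\mu_k\pm M_c]$, or equivalently by working with the conditional distribution given $E$. The truncated loss is globally bounded and $C_l$-Lipschitz, and it agrees with $S_l$ on $E$. The population-level discrepancy between the two is absorbed into the constants, or treated as the paper does, by interpreting the expectation on the high-probability event. A second subtlety is that Lemma~\ref{lemma:Lipschitz_complexity} is stated for scalar classes while $\cF$ is vector-valued. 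I would read $\cR_N(\cF)$ as the complexity of the coordinatewise $\ell_1$-aggregated class, which is consistent with the $\ell_1$-Lipschitz statement in Lemma~\ref{app:lemma:Sl_Lipschitz}.
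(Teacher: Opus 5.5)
Your plan is essentially the paper's proof. It uses the same bound on $S_l$ on the tail event with $\alpha=\delta/2$, the same constant $M_l=\log(\sigma_{\max}/\sigma_{\min})+M_c^2/(2\sigma_{\min}^2)$, McDiarmid at level $\delta/(2-\delta)$, ghost-sample symmetrization to $2\cR_N(S_l\circ\cF)$, and contraction to reach $4C_l\cR_N(\cF)$. Your intermediate step $\cR_N(S_l\circ\cF)\le 2C_l\cR_N(\cF)$ is the consistent one; the paper writes $4C_l$ there, which is a slip, since combined with symmetrization it would give $8C_l$.

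One step needs correcting. Your closing union bound does not give confidence $1-\delta$ with your parameter choices, because $\delta/2+\delta/(2-\delta)>\delta$. The paper instead multiplies $(1-\delta/2)\bigl(1-\delta/(2-\delta)\bigr)=1-\delta$. That product is only justified through a factorization such as $P(E)\,P(\cdot\mid E)$, which your conditioning-on-$E$ variant would supply. A plain union bound with $\delta/2+\delta/2$ would give $\log(2/\delta)$ in place of $\log(2/\delta-1)$.

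The rigor issues you flag are real, and the paper does not resolve them either; it simply asserts the bounded-difference and Lipschitz bounds hold with probability $1-\delta/2$. These issues are:
\begin{itemize}
\item McDiarmid needs bounded differences for every realization, but here they hold only on an event.
\item The symmetrization must be taken under the same law as the population risk.
\item The lemma on $y$ gives probability $1-\alpha$ per sample, not jointly for all $N$ samples. Jointly you only get about $1-N\alpha$.
\end{itemize}
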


\begin{proof}
	Let $\cD'$ denote the $i$-th sample of $\cD$ replaced by $(x_i', y_i')$. For the IGMM, the probability density function satisfies
    \begin{eqnarray*}
        \log \left(\sum_{k=1}^K \pi_k(x)\phi(y;\mu_k(x),\sigma_k^2(x))\right) \leq \log \left(\sum_{k=1}^K \pi_k(x) \frac{1}{\sqrt{2\pi}\sigma_{\min}}\right) = -\frac{1}{2} \log(2\pi \sigma_{\min}^2).
    \end{eqnarray*}
    At the same time, using the convexity of the concavity property of the logarithmic function,  we have:
    \begin{eqnarray*}
       \log \left(\sum_{k=1}^K \pi_k(x)\phi(y;\mu_k(x),\sigma_k^2(x))\right) &\geq& \sum_{k=1}^K \pi_k(x) \log \phi(y;\mu_k(x),\sigma_k^2(x)) \\
        &=& -\frac{1}{2} \sum_{k=1}^K \pi_k(x) \left( \log(2\pi\sigma_k^2(x)) + \frac{(y-\mu_k(x))^2}{\sigma_k^2(x)} \right) \\
        &\geq & -\frac{1}{2} \sum_{k=1}^K \pi_k(x) \left( \log(2\pi\sigma_{\max}^2) + \frac{M_c^2}{\sigma_{\min}^2} \right) \\
        &=& -\frac{1}{2}\log(2\pi\sigma_{\max}^2) - \frac{M_c^2}{2\sigma_{\min}^2}. 
    \end{eqnarray*}
   The second-to-last inequality holds with probability at least $1-\delta/2$ by setting $\alpha=\delta/2$ in  Lemma \ref{app:lemma:y_concentrated} and $M_c=2M_{\mu}+\sigma_{\max}\sqrt{2\log(4/\delta)}$ .

    Let $\cD'$ denote the $i$-th sample of $\cD$ replaced by $(x_i', y_i')$. Define 
    $$G(\cD)=\sup_{F_{\bpsi}\in\cF} \left\{\bbE[L_{l,\cD}(F_{\bpsi})] - L_{l,\cD}(F_{\bpsi})\right\}.$$
    We have
    \begin{eqnarray*}
		G(\cD)-G(\cD') & \leq  & \sup_{F_{\bpsi}\in\cF} \left(L_{l,\cD’}(F_{\bpsi}) - L_{l,\cD}(F_{\bpsi})\right) \\
		&=& \sup_{F_{\bpsi}\in\cF} \frac{1}{N} \left(S_l(F_{\bpsi}(x_i'),y_i') - S_l(F_{\bpsi}(x_i),y_i)\right) \\
        &\leq& \frac{1}{N} \left( -\frac{1}{2} \log(2\pi \sigma_{\min}^2) + \frac{1}{2}\log(2\pi\sigma_{\max}^2) + \frac{M_c^2}{2\sigma_{\min}^2}  \right) \\
        &=& \frac{M_l}{N},
	\end{eqnarray*}
     where $M_l=\log(\sigma_{\max}/\sigma_{\min}) + M_c^2/(2\sigma_{\min}^2)$, and the last two inequalities hold with probability at least $1-\delta/2$. Similarly, we also have $G(\cD')-G(\cD)\leq M_l/N$ with probability at least $1-\delta/2$. 
     
     Replacing $c_i$ in Lemma~\ref{lemma:McDiarmid} with $M_l/N$ and $e^{-2t^2/(\sum_{i=1}^N c_i)}$ with 
     $\delta/(2-\delta)$, we have that for any $\delta \in (0,1)$, with probability at least $1-\delta=(1-\delta/2)(1-\delta/(2-\delta))$, 
	\begin{eqnarray*}
		G(\cD)-\bbE[G(\cD)] \leq M_l\sqrt{\frac{\log(2/\delta-1)}{2N}}.
	\end{eqnarray*}

    Since $\bbE[L_{l,\cD}(F_{\bpsi})] - L_{l,\cD}(F_{\bpsi})\leq G(\cD)$, we now need to bound $\bbE[G(\cD)]$. Let $\cD''=\{(x_i'', y_i'')\}_{i=1}^N$ be another set of ghost samples. Suppose we have two new sets, $\cS$ and $\cS'$ in which the $i$-th data points in set $\cD$ and $\cD''$ are swapped with the probability of 0.5.  Because all the samples are independent and identically distributed, $L_{l,\cD^{''}}(F_{\bpsi}) - L_{l,\cD}(F_{\bpsi})$ and $L_{l,\cS^{'}}(F_{\bpsi}) - L_{l,\cS}(F_{\bpsi})$  have the same distribution:
	\begin{eqnarray*}
		L_{l,\cD^{''}}(F_{\bpsi}) - L_{l,\cD}(F_{\bpsi}) &=& \frac{1}{N} \sum_{i=1}^N \left(S_l(F_{\bpsi}(x_i''),y_i'') - S_l(F_{\bpsi}(x_i),y_i)\right), \\
		L_{l,\cS^{'}}(F_{\bpsi}) - L_{l,\cS}(F_{\bpsi}) &=& \frac{1}{N} \sum_{i=1}^N \xi_i \left(S_l(F_{\bpsi}(x_i''),y_i'') - S_l(F_{\bpsi}(x_i),y_i)\right),
	\end{eqnarray*}
	where $\xi_i$ is the Rademacher variables. Based on the above results, 
	\begin{eqnarray*}
		\bbE_{\cD}[G(\cD)] &=& \bbE_{\cD}\left[\sup_{F_{\bpsi}} \bbE[L_{l,\cD}(F_{\bpsi})] - L_{l,\cD}(F_{\bpsi}) \right] \\ \nonumber
		&=& \bbE_{\cD}\left[\sup_{F_{\bpsi}} \bbE_{\cD^{''}} \left[ L_{l,\cD^{''}}(F_{\bpsi}) - L_{l,\cD}(F_{\bpsi})\right] \right] \\
		&\leq& \bbE_{\cD,\cD^{''}} \left[\sup_{F_{\bpsi}} L_{l,\cD^{''}}(F_{\bpsi}) - L_{l,\cD}(F_{\bpsi}) \right] \\
		&=& \bbE_{\cD,\cD^{''},\xi} \left[\sup_{F_{\bpsi}} \frac{1}{N} \sum_{i=1}^N \xi_i \left(S_l(F_{\bpsi}(x_i''),y_i'') - S_l(F_{\bpsi}(x_i),y_i)\right) \right] \\
		&\leq& \bbE_{\cD^{''},\xi} \left[\sup_{F_{\bpsi}} \frac{1}{N} \sum_{i=1}^N \xi_i \left(S_l(F_{\bpsi}(x_i''),y_i'')\right) \right] + \bbE_{\cD,\xi} \left[\sup_{F_{\bpsi}} \frac{1}{N} \sum_{i=1}^N -\xi_i \left(S_l(F_{\bpsi}(x_i),y_i)\right) \right] \\
		&=& \bbE_{\cD^{''}} \left[\hat{\cR}_{\cD^{''}} (S_l\circ \cF)\right] + \bbE_{\cD} \left[\hat{\cR}_{\cD} (S_l\circ \cF)\right] \\
		&=& 2\cR_N (S_l\circ \cF).
	\end{eqnarray*}
Using Lemma~\ref{app:lemma:Sl_Lipschitz} and Lemma~\ref{lemma:Lipschitz_complexity}, $\cR_N (S_l\circ \cF) \leq 4C_l \cR_N (\cF)$ with probability at least $1-\delta/2$, where $C_l=1/\pi_{\min}+ (M_c^2+2M_c\sigma_{\min}+ \sigma_{\min}^2)/\sigma_{\min}^3$. Combining the above results completes the proof.
\end{proof}

\begin{theorem}[Generalization error analysis for $L_{e,\cD}$]\label{thm:Le_bound}
	Under Assumption~\ref{ass:bounded}, for any $\delta \in (0,1)$, with probability at least $1-\delta$, for all $F_{\bpsi}\in \cF$, we have
	$$\bbE[L_{e,\cD}(F_{\bpsi})] - L_{e,\cD}(F_{\bpsi}) \leq 4C_e\cR_N(\cF) + M_e\sqrt{\frac{\log(2/\delta-1)}{2N}}.$$
\end{theorem}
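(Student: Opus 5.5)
The plan mirrors the proof of Theorem~\ref{thm:Ll_bound}: McDiarmid's inequality for the uniform deviation, then symmetrization, then Lipschitz contraction. Define
$$G_e(\cD)=\sup_{F_{\bpsi}\in\cF}\left\{\bbE[L_{e,\cD}(F_{\bpsi})]-L_{e,\cD}(F_{\bpsi})\right\}.$$
Clearly $\bbE[L_{e,\cD}(F_{\bpsi})]-L_{e,\cD}(F_{\bpsi})\le G_e(\cD)$ for every $F_{\bpsi}$, so it suffices to bound $G_e(\cD)-\bbE[G_e(\cD)]$ and $\bbE[G_e(\cD)]$ separately. The failure probability is split as in Theorem~\ref{thm:Ll_bound}: $\delta/2$ is spent on the tail event of Lemma~\ref{app:lemma:y_concentrated} with $\alpha=\delta/2$, so that $M_c=2M_\mu+\sigma_{\max}\sqrt{2\log(4/\delta)}$. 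The remaining $\delta/(2-\delta)$ goes to McDiarmid, which yields the $\log(2/\delta-1)$ term.

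\textbf{Bounded differences.} Replacing one sample $(x_i,y_i)$ by $(x_i',y_i')$ changes $G_e$ by at most
$$\frac1N\sup_{F_{\bpsi}}\left|S_e(F_{\bpsi}(x_i'),y_i')-S_e(F_{\bpsi}(x_i),y_i)\right|.$$
So I need two-sided bounds on $S_e$ on the good event. From Theorem~\ref{thm:int_ES} we have $S_e=\sum_m\pi_mA_m-\frac12\sum_{m,l}\pi_m\pi_lB_{ml}$. The proof of Lemma~\ref{app:lemma:Se_Lipschitz} already gives, via monotonicity of the folded-Gaussian mean $f(a,b)$,
$$\sqrt{2/\pi}\,\sigma_{\min}\le A_k\le\sqrt{2/\pi}\,\sigma_{\max}+M_c,\qquad 0\le B_{ml}\le\sqrt{2/\pi}\,\sigma_{\max}+2M_\mu.$$
The lower bound $B_{mm}\ge0$ covers the diagonal terms. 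These give
$$S_e\le\sqrt{2/\pi}\,\sigma_{\max}+M_c,\qquad S_e\ge\sqrt{2/\pi}\,\sigma_{\min}-\tfrac12\left(\sqrt{2/\pi}\,\sigma_{\max}+2M_\mu\right).$$
Alternatively, $S_e\ge\frac12\bbE|z-z'|\ge0$ by the triangle inequality $\bbE|z-y|\ge\frac12\bbE|z-z'|$, which is cleaner. The range then gives a constant of the form
$$M_e=\sqrt{2/\pi}\,\sigma_{\max}+M_c,$$
possibly plus $\frac12(\sqrt{2/\pi}\,\sigma_{\max}+2M_\mu)$ if the cruder lower bound is used. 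McDiarmid (Lemma~\ref{lemma:McDiarmid}) with $c_i=M_e/N$ then gives
$$G_e(\cD)-\bbE[G_e(\cD)]\le M_e\sqrt{\frac{\log(2/\delta-1)}{2N}}.$$

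\textbf{Symmetrization and contraction.} Introduce a ghost sample $\cD''$ and Rademacher signs $\xi_i$. Exactly as in the chain of inequalities for $L_{l,\cD}$, this yields $\bbE[G_e(\cD)]\le2\cR_N(S_e\circ\cF)$. On the good event, Lemma~\ref{app:lemma:Se_Lipschitz} makes $S_e$ $C_e$-Lipschitz in $(F_{\bpsi}(x),y)$, with
$$C_e=3+\sqrt{2/\pi}/\sigma_{\min}+KM_1.$$
Lemma~\ref{lemma:Lipschitz_complexity} then gives $\cR_N(S_e\circ\cF)\le2C_e\cR_N(\cF)$, so $\bbE[G_e(\cD)]\le4C_e\cR_N(\cF)$. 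Combining the two pieces and multiplying the probabilities $(1-\delta/2)(1-\delta/(2-\delta))=1-\delta$ gives the stated bound.

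\textbf{Main obstacle.} The hardest step is making the high-probability Lipschitz and boundedness statements rigorous inside an expectation (McDiarmid and symmetrization) rather than on a single sample. Lemma~\ref{app:lemma:Se_Lipschitz} holds only with probability $1-\alpha$ per point, while the bounded-difference and contraction steps need the bounds uniformly.

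I would first handle this as the paper does for $L_{l,\cD}$: work on the event where all $|y_i-\mu_k(x_i)|\le M_c$, absorbing the union over samples into the choice of $\alpha$ if needed. A more honest alternative is to replace $S_e$ by its truncation to $|y-\mu_k|\le M_c$. This is a genuinely bounded and Lipschitz loss that agrees with $S_e$ on the good event. Note that $S_e$ is in fact globally $1$-Lipschitz in $y$, since $|\partial S_e/\partial y|\le1$. Its growth in $y$ is only linear, so the truncation is harmless, and this route avoids the awkward conditioning altogether.
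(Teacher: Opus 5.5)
Your proposal follows the paper's proof step for step: two-sided bounds on $S_e$ from the folded-Gaussian bounds on $A_k,B_{ml}$, McDiarmid with $c_i=M_e/N$ and the $\delta/2$ versus $\delta/(2-\delta)$ split, then ghost-sample symmetrization and contraction. You actually write out $\bbE[G_e(\cD)]\le 2\cR_N(S_e\circ\cF)\le 4C_e\cR_N(\cF)$, which the paper never carries out for $L_{e,\cD}$; its last display just inserts $4\cR_N(\cF)$, without $C_e$, into the McDiarmid bound.

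Your lower bound $S_e\ge 0$ is cleaner than the paper's and gives a smaller $M_e$. However, the intermediate claim $S_e\ge\frac12\bbE|z-z'|$ is false (take $F=\cN(0,1)$, $y=0$); the triangle inequality only gives $\bbE|z-y|\ge\frac12\bbE|z-z'|$, i.e.\ $S_e\ge 0$ directly. The uniformity problem you flag is a genuine weakness of the paper's own argument that you inherit, not something the paper resolves. It consists of McDiarmid and contraction being applied with bounds that hold only with probability $1-\alpha$ per point, and of probabilities of dependent events being multiplied. Your fixes would recover the statement only with $M_c$ of order $\sqrt{\log(N/\delta)}$ (union bound) or up to an extra tail term (truncation).
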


\begin{proof}
	Let $\cD'$ denote the $i$-th sample of $\cD$ replaced by $(x_i', y_i')$. Define $$G(\cD)=\sup_{F_{\bpsi}\in\cF} \left\{\bbE[L_{e,\cD}(F_{\bpsi})] - L_{e,\cD}(F_{\bpsi})\right\}.$$

    Using results from the proof of Lemma~\ref{app:lemma:Se_Lipschitz}, we have the following bounds:
    \begin{eqnarray*}
        \sqrt{\frac{2}{\pi}}\sigma_{\min}\leq A_k(x,y)  &\leq & \sqrt{\frac{2}{\pi}} \sigma_{\max}+M_c,  \\
        \sqrt{\frac{2}{\pi}}\sigma_{\min} \leq B_{kl}(x) &\leq & \sqrt{\frac{2}{\pi}} \sigma_{\max}+2M_{\mu}.
    \end{eqnarray*}
    Setting $\alpha=\delta/2$, the bounds for $A_k(x,y)$ holds with probability at least $1-\alpha$. Therefore, we can derive the following bounds for $S_e(F_{\bpsi}(x),y)$:
    \begin{eqnarray*}
        \sqrt{\frac{2}{\pi}}\sigma_{\min}-\frac{1}{\sqrt{2\pi}} \sigma_{\max}-M_{\mu}\leq S_e(F_{\bpsi}(x),y) \leq \sqrt{\frac{2}{\pi}} \sigma_{\max}+M_c-\frac{1}{\sqrt{2\pi}} \sigma_{\min}.
    \end{eqnarray*}
   Thus, with probability at least $1-\delta/2$, the following inequality holds:
    \begin{eqnarray*}
		G(\cD)-G(\cD') & \leq  & \sup_{F_{\bpsi}\in\cF} \left(L_{e,\cD’}(F_{\bpsi}) - L_{e,\cD}(F_{\bpsi})\right) \\
		&=& \sup_{F_{\bpsi}\in\cF} \frac{1}{N} \left(S_e(F_{\bpsi}(x_i'),y_i') - S_e(F_{\bpsi}(x_i),y_i)\right) \\
        &\leq& \frac{1}{N} \left(\sqrt{\frac{2}{\pi}} \sigma_{\max}+M_c-\frac{1}{\sqrt{2\pi}} \sigma_{\min}-\sqrt{\frac{2}{\pi}}\sigma_{\min}+\frac{1}{\sqrt{2\pi}} \sigma_{\max}+M_{\mu}\right) \\
        &\leq& \frac{M_e}{N},
	\end{eqnarray*}
    where $M_e=3(\sigma_{\max} - \sigma_{\min})/\sqrt{2\pi}+M_c+M_{\mu}$. 
    Similarly, we have $G(\cD')-G(\cD)\leq M_e/N$ with probability at least $1-\delta/2$. Replacing $c_i$ in Lemma~\ref{lemma:McDiarmid} with $M_e/N$ and $e^{-2t^2/(\sum_{i=1}^N c_i)}$ with 
     $\delta/(2-\delta)$, we conclude that for any $\delta \in (0,1)$, with probability at least $1-\delta=(1-\delta/2)(1-\delta/(2-\delta))$, 
	\begin{eqnarray*}
		G(\cD)-\bbE_{\cD}[G(\cD)] \leq 4\cR_N(\cF)+M_e\sqrt{\frac{\log(2/\delta-1)}{2N}}.
	\end{eqnarray*}

\end{proof}

\subsection{Proof of Theorem \ref{thm:GeneralizationError}: Generalization error analysis for $L_{h,\cD}$ }
\begin{proof}
    According to Theorem~\ref{thm:Ll_bound} and Theorem~\ref{thm:Le_bound}, for any $\delta\in(0,1)$, with probability at least $1-\delta$, we have:
	\begin{eqnarray*}
		\bbE[L_{h,\cD}(F_{\bpsi})] - L_{h,\cD}(F_{\bpsi}) &=& \eta (\bbE[L_{l,\cD}(F_{\bpsi})] - L_{l,\cD}(F_{\bpsi})) + (1-\eta)(\bbE[L_{e,\cD}(F_{\bpsi})] - L_{e,\cD}(F_{\bpsi})) \\
		&\leq& \eta \left(4C_l\cR_N(\cF)+ M_l\sqrt{\frac{\log(2/\delta-1)}{2N}}\right) \\
        && +(1-\eta)\left(4C_e\cR_N(\cF)+ M_e\sqrt{\frac{\log(2/\delta-1)}{2N}}\right) \\
		&\leq& (4\eta C_l + 4(1-\eta)C_e) \cR_N(\cF) + (\eta M_l + (1-\eta)M_e) \sqrt{\frac{\log(2/\delta-1)}{2N}}.
	\end{eqnarray*}
    Let $C_h=4\eta C_l + 4(1-\eta)C_e$ and $M_h=\eta M_l + (1-\eta)M_e$. This completes the proof.
\end{proof}

\section{Hyperparameter Settings in Experiments}\label{sec:hyperparameter_setting}

The hyperparameter settings for these methods, applied to the two toy regression examples and two real-world tasks, are summarized in Table  \ref{tab:hyperparameters}. These include  the regularization coefficient $\beta$ in $\beta$-NLL, the ensemble size $T$ in Ensemble-NN, the number of predicted outputs $M$ and the regularization strength $\eta_{reg} $ of the Sinkhorn divergence in SampleNet, the weighting parameter $\eta$ in NE-GMM, and the number of mixture components $K$ in MDN and NE-GMM. For Example 1, where the true data follows an IGMM model with $K=1$, we set $K=1$ for both MDN and NE-GMM. Similarly, for Example 2, where the true data follows an IGMM model with $K=2$, $K=2$ is used for both methods.  Since MDN is a special case of NE-GMM when $\eta=1$, we exclude 
$\eta=1$ when tuning the parameter $\eta$ for NE-GMM.

\begin{table*}[h]
	\centering
    {\setlength{\tabcolsep}{2pt}
	\caption{Hyperparameter settings of different methods across different experiments.}
    \label{tab:hyperparameters}
	\begin{tabular}{c|c|c} 
		\hline
		Methods & Example 1 and 2 & UCI and Financial dataset \\ \hline 
		$\beta$-NLL & $\beta\in \{0, 0.25, 0.5, 0.75, 1\}$ & $\beta\in \{0, 0.25, 0.5, 0.75, 1\}$   \\ \hline 
		Ensemble-NN & $T\in \{3, 5, 8, 10\}$ & $T\in \{3, 5, 8\}$  \\ \hline 
		\multirow{2}{*}{SampleNet} & $M\in\{100, 200, 500, 800\}$, & $M\in\{50, 100, 200, 300\}$,  \\ 
		& $\eta_{reg}\in\{0, 0.1, 0.5, 1, 5\}$ & $\eta_{reg}\in\{0, 0.5, 1, 5\}$  \\ \hline 
		MDN & $K=1$ (Example 1), $K=2$ (Example 2) & $K\in\{5,8,10\}$ \\ \hline 
		\multirow{2}{*}{NE-GMM} & $K=1$ (Example 1), $K=2$ (Example 2),  & $K\in\{5,8,10\}$, \\ 
		& $\eta\in\{0,0.2,0.5,0.8\}$ & $\eta\in\{0,0.2,0.5,0.8\}$  \\ \hline
	\end{tabular}}
\end{table*}

\section{Description and Additional Results of the Real Datasets}
In addition to evaluating different methods in terms of RMSE and NLL, we also assess the Prediction Interval Coverage Probability (PICP) and Mean Prediction Interval Width (MPIW), which are defined as follows:
For the $i$-th sample, let the prediction interval be $[L_i,U_i]$, where $L_i$ and $U_i$ are the lower and upper bounds, respectively, and let the true target value be $y_i$. The PICP is defined as:
\begin{eqnarray*}
    \text{PICP}=\frac{1}{N}\sum_{i=1}^N\mathbb{I}(L_i\leq y_i \leq U_i),
\end{eqnarray*}
where $N$ is the total number of samples, and $\mathbb{I}$ is an indicator function. PICP measures the reliability of the model's uncertainty estimates. The MPIW is defined as:
\begin{eqnarray*}
    \text{MPIW}=\frac{1}{N}\sum_{i=1}^N (U_i - L_i).
\end{eqnarray*}
MPIW quantifies the precision of the model's uncertainty estimates. A well-behaved method should achieve a PICP close to the desired confidence level while maintaining a narrower MPIW. 

All neural networks used in these real dataset experiments were trained on a single  Nvidia V100  GPU. 

\subsection{The UCI Dataset} \label{sec:uci_description}

Table \ref{tab:uci_dataset_stats} summarizes the number of data points $N_{data}$, input dimensions $d_{in}$, output dimensions $d_{out}$ for various datasets used in the UCI regression tasks. Tables \ref{tab:picp_uci}-\ref{tab:mpiw_uci} report the average PICP and MPIW in 20 train-test splits, respectively. Table \ref{tab:cost_uci} summarizes the average training time for UCI regression datasets. 
 
\begin{table}[h!]
	\centering
    {\setlength{\tabcolsep}{10pt}
	\caption{Description of the UCI datasets.}
    \label{tab:uci_dataset_stats}
	\begin{tabular}{l|ccc}
		\toprule
		Dataset & $N_{data}$ & $d_{in}$ &  $d_{out}$ \\
		\midrule
		Boston            & 506      & 13  & 1 \\
		Concrete          & 1030     & 8   & 1 \\
		Energy            & 768      & 8   & 2 \\
		Kin8nm            & 8192     & 8   & 1 \\
		Naval             & 11933    & 16  & 1 \\
		Protein           & 45730    & 9   & 1 \\
		Superconductivity & 21263    & 81  & 1 \\
		WineRed        & 1599     & 11  & 1 \\
		WineWhite     & 4898     & 11  & 1 \\
		Yacht             & 308      & 6   & 1 \\
		\bottomrule
	\end{tabular}}
\end{table}

\begin{table*}[h]
	\centering
    {\setlength{\tabcolsep}{1pt}
	\caption{PICP for 95\% prediction intervals of the UCI regression datasets (``SC" stands for Superconductivity).}
    \label{tab:picp_uci}
	\begin{tabular}{l|cccccc}
    \hline
    Dataset & $\beta$-NLL & Ensemble-NN & NGBoost & SampleNet & MDN & NE-GMM \\
    \hline
    Boston & 0.681$\pm$0.035 & 0.825$\pm$0.034 & \textbf{0.973$\pm$0.016} & 0.251$\pm$0.027 & 0.895$\pm$0.034 & \textbf{0.921$\pm$0.033} \\
    Concrete & 0.806$\pm$0.033 & 0.880$\pm$0.017 & 0.986$\pm$0.006 & 0.271$\pm$0.016 & 0.926$\pm$0.028 & \textbf{0.939$\pm$0.021} \\
    Energy & 0.883$\pm$0.012 & 0.935$\pm$0.014 & 0.988$\pm$0.004 & 0.906$\pm$0.021 & 0.949$\pm$0.016 & \textbf{0.951$\pm$0.004} \\
    Kin8nm & 0.916$\pm$0.007 & \textbf{0.927$\pm$0.006} & 0.977$\pm$0.003 & 0.313$\pm$0.005 & 0.894$\pm$0.013 & 0.905$\pm$0.004 \\
    Naval & 0.967$\pm$0.011 & 0.996$\pm$0.003 & 0.903$\pm$0.004 & 0.977$\pm$0.010 & 0.954$\pm$0.025 & \textbf{0.951$\pm$0.019} \\
    Protein & \textbf{0.946$\pm$0.010} & 0.963$\pm$0.007 & 0.972$\pm$0.001 & 0.605$\pm$0.006 & 0.944$\pm$0.004 & 0.944$\pm$0.002 \\
    SC & 0.912$\pm$0.012 & 0.917$\pm$0.006 & 0.983$\pm$0.003 & 0.350$\pm$0.015 & 0.937$\pm$0.005 & \textbf{0.947$\pm$0.010} \\
    WineRed & 0.789$\pm$0.020 & 0.845$\pm$0.018 & \textbf{0.958$\pm$0.006} & 0.603$\pm$0.051 & 0.922$\pm$0.014 & 0.936$\pm$0.014 \\
    WineWhite & 0.879$\pm$0.011 & 0.915$\pm$0.015 & 0.966$\pm$0.009 & 0.637$\pm$0.033 & 0.934$\pm$0.011 & \textbf{0.942$\pm$0.009} \\
    Yacht & 0.649$\pm$0.063 & 0.916$\pm$0.026 & 0.997$\pm$0.003 & 0.181$\pm$0.039 & 0.853$\pm$0.073 & \textbf{0.939$\pm$0.020} \\
    \hline
	\end{tabular}}
\end{table*}

\begin{table*}[h]
	\centering
    {\setlength{\tabcolsep}{1pt}
	\caption{MPIW for 95\% prediction intervals of the UCI regression datasets.}
    \label{tab:mpiw_uci}
	\begin{tabular}{l|cccccc}
    \hline
    Dataset & $\beta$-NLL & Ensemble-NN & NGBoost & SampleNet & MDN & NE-GMM \\
    \hline
    Boston & 0.579$\pm$0.094 & 0.803$\pm$0.121 & 1.494$\pm$0.060 & 0.705$\pm$0.089 & 1.061$\pm$0.091 & 1.045$\pm$0.189 \\
    Concrete & 0.737$\pm$0.100 & 0.890$\pm$0.078 & 2.086$\pm$0.112 & 0.818$\pm$0.073 & 1.119$\pm$0.054 & 1.123$\pm$0.059 \\
    Energy & 0.407$\pm$0.151 & 0.544$\pm$0.054 & 0.932$\pm$0.009 & 0.578$\pm$0.070 & 0.804$\pm$0.052 & 0.745$\pm$0.063 \\
    Kin8nm & 1.073$\pm$0.054 & 1.164$\pm$0.024 & 3.301$\pm$0.046 & 1.020$\pm$0.017 & 1.001$\pm$0.027 & 0.979$\pm$0.023 \\
    Naval & 0.154$\pm$0.054 & 0.465$\pm$0.124 & 3.010$\pm$0.022 & 0.445$\pm$0.156 & 0.308$\pm$0.096 & 0.328$\pm$0.074 \\
    Protein & 2.809$\pm$0.365 & 3.176$\pm$0.230 & 3.589$\pm$0.029 & 2.074$\pm$0.023 & 6.440$\pm$4.231 & 2.339$\pm$0.045 \\
    SC & 1.152$\pm$0.084 & 1.212$\pm$0.077 & 2.199$\pm$0.035 & 1.061$\pm$0.082 & 0.946$\pm$0.037 & 1.333$\pm$0.976 \\
    WineRed & 2.141$\pm$0.122 & 2.372$\pm$0.127 & 3.312$\pm$0.080 & 2.077$\pm$0.237 & 2.832$\pm$0.083 & 2.894$\pm$0.121 \\
    WineWhite & 2.437$\pm$0.015 & 2.589$\pm$0.026 & 3.615$\pm$0.077 & 2.350$\pm$0.153 & 2.613$\pm$0.061 & 2.790$\pm$0.058 \\
    Yacht & 0.062$\pm$0.020 & 0.145$\pm$0.032 & 0.739$\pm$0.031 & 0.103$\pm$0.038 & 0.532$\pm$0.483 & 0.397$\pm$0.078 \\
        \hline
	\end{tabular}}
\end{table*}

\begin{table*}[h]
	\centering
	\caption{Average training time (in seconds) of the UCI regression datasets.}
	\begin{tabular}{l|cccccc}
		\hline
		Dataset & $\beta$-NLL & Ensemble-NN & NGBoost & SampleNet & MDN & NE-GMM \\
		\hline
		Boston            & 179              & 740                 & 1.479            & 263               & 216          & 227             \\ 
		Concrete          & 227              & 395                 & 1.333            & 420               & 224          & 292             \\ 
		Energy            & 207              & 347                 & 2.777            & 360               & 227          & 238             \\ 
		Kin8nm            & 711              & 1174                & 1.507            & 2068              & 958          & 1171            \\ 
		Naval             & 937              & 1708                & 2.963            & 3086              & 1056         & 1485            \\ 
		Protein           & 3286             & 5906                & 1.828            & 12043             & 4344         & 6622            \\ 
		SC & 1600             & 2889                & 2.920            & 5555              & 1981         & 3098            \\ 
		Winered           & 272              & 409                 & 1.362            & 504               & 257          & 338             \\ 
		Winewhite         & 493              & 833                 & 1.394            & 1488              & 710          & 721             \\ 
		Yacht             & 269              & 482                 & 1.431            & 370               & 266          & 288             \\ \hline
	\end{tabular}
	\label{tab:cost_uci}
\end{table*}

\subsection{The Financial Time Series Forecasting Dataset}\label{sec:financial_description}

Figure \ref{fig:financial} presents the trend plots of three stock datasets. To further quantify the volatility of these three stocks, we use the following metrics:
\begin{itemize}
	\item Daily Return Standard Deviation (Daily Return Std. Dev.): Captures the standard deviation of the stock's daily returns, quantifying day-to-day variability. This is a key metric for assessing short-term volatility.
	\item Annualized Volatility: Scales the daily return standard deviation to an annual level, assuming 252 trading days in a year. It provides a normalized measure of a stock’s overall risk over the course of a year, making it useful for cross-stock comparisons.
	\item Max Drawdown: Represents the maximum observed loss from a peak to a trough in the stock’s price before a new peak is achieved. This is a critical risk metric for evaluating the potential downside of an investment.
\end{itemize}
These metrics of the three stocks are shown in Table \ref{tab:stock_volatility}. The stock price of GOOG remained relatively stable with minimal fluctuations, RCL's stock price was halved during the COVID-19 pandemic, and GME's stock price exhibited significant volatility. Tables \ref{tab:picp_finance}-\ref{tab:mpiw_finance} report the average PICP and MPIW in ten replicates, respectively. Table \ref{tab:cost_finance} summarizes the average training time for the financial datasets. 

\begin{figure}[!htbp]
	\centering
	\begin{subfigure}[h]{0.6\textwidth}
		\centering
		\includegraphics[width=\textwidth]{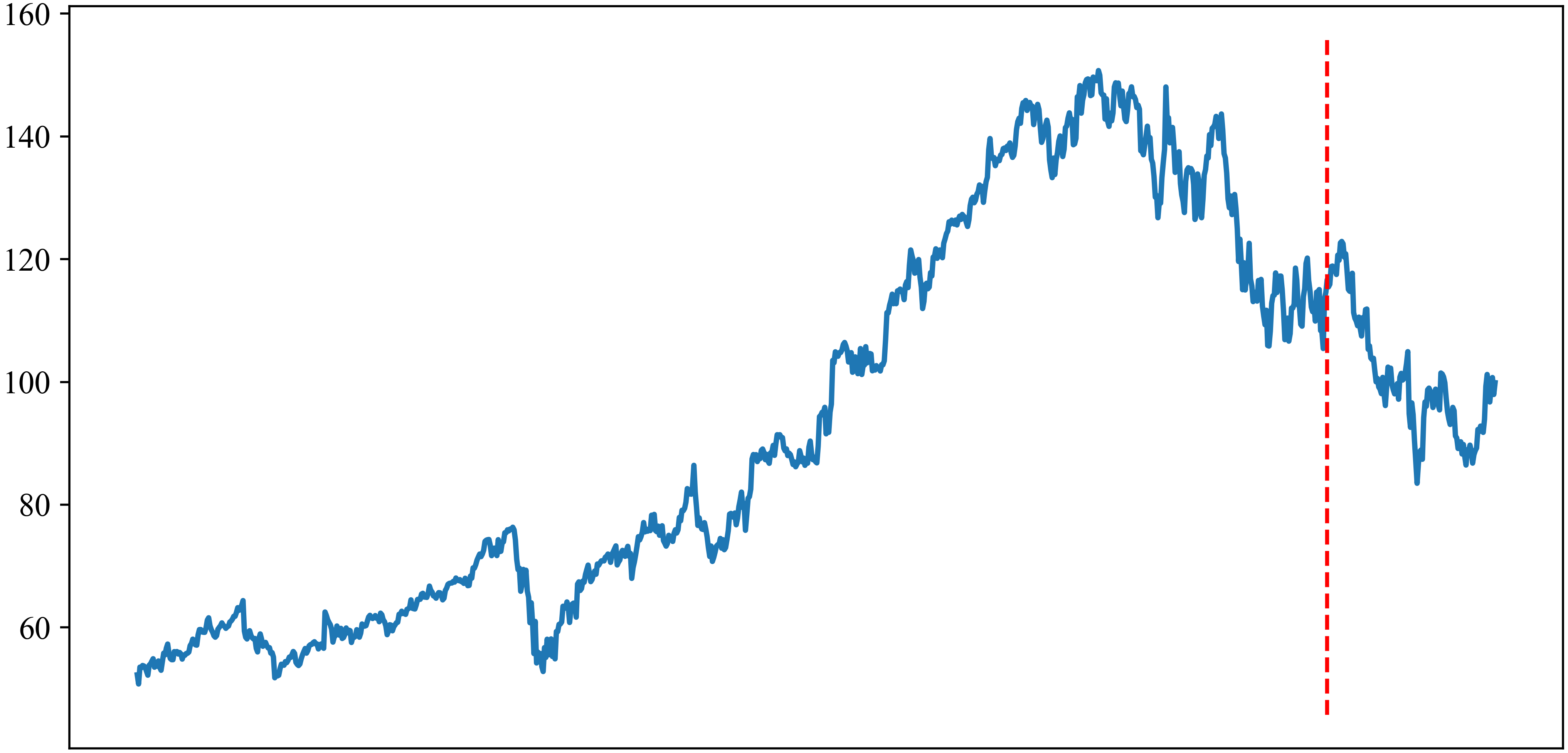}
		\caption{GOOG}
		\label{fig:goog}
	\end{subfigure}
	\vspace{0.5cm} 
	\begin{subfigure}[h]{0.6\textwidth}
		\centering
		\includegraphics[width=\textwidth]{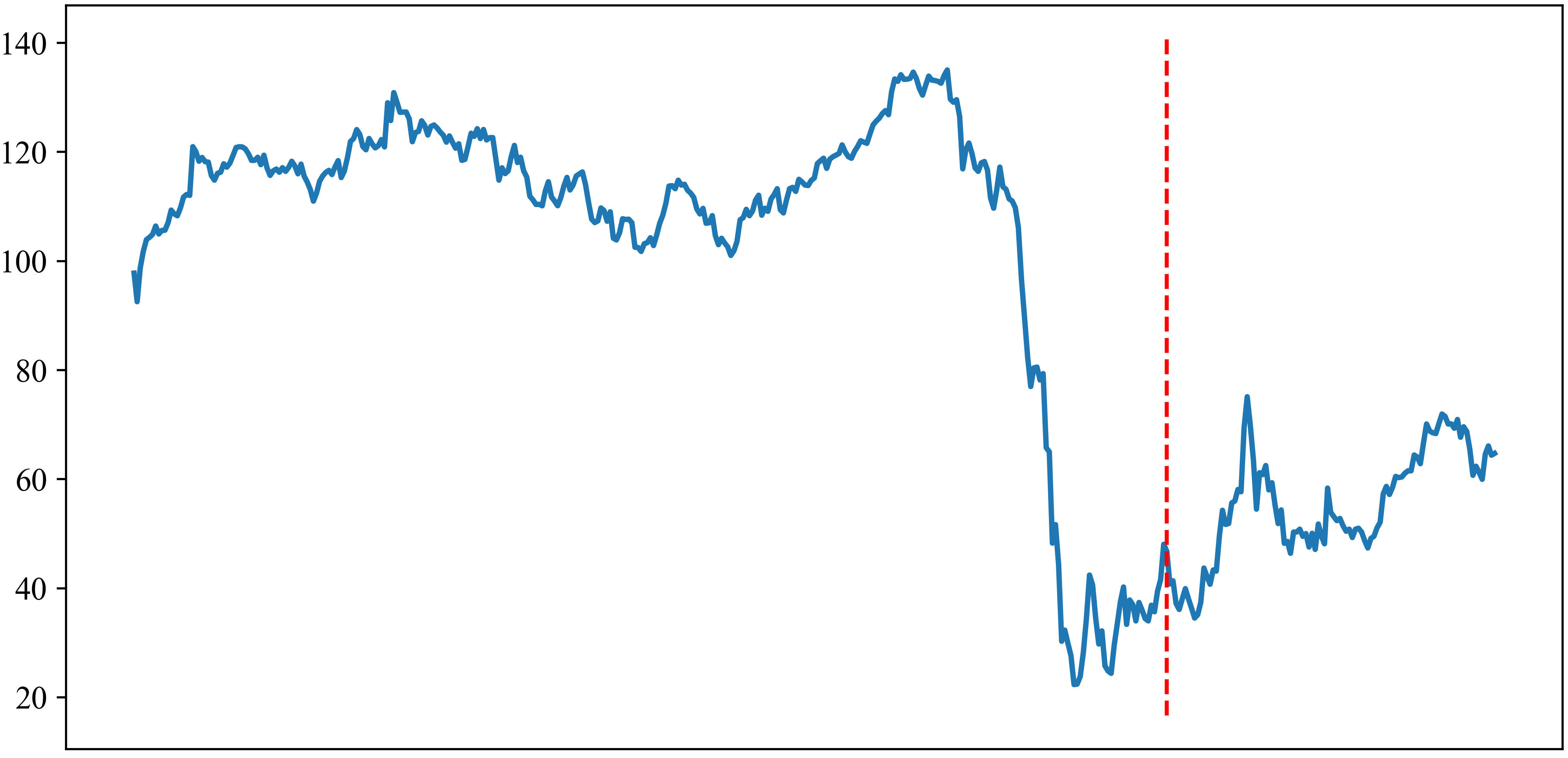}
		\caption{RCL}
		\label{fig:rcl}
	\end{subfigure}
	\vspace{0.5cm}
	\begin{subfigure}[h]{0.6\textwidth}
		\centering
		\includegraphics[width=\textwidth]{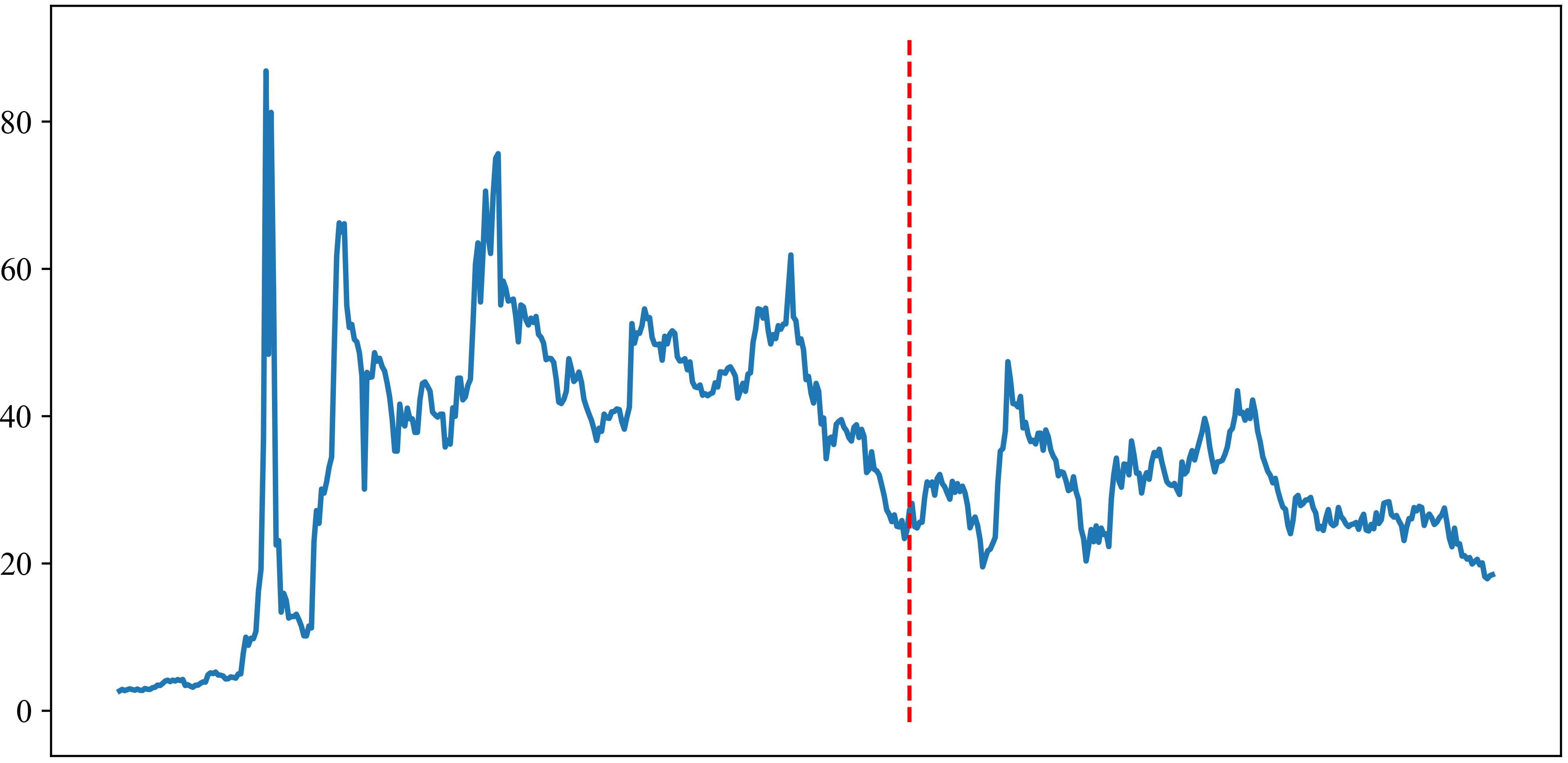}
		\caption{GME}
		\label{fig:gem}
	\end{subfigure}
	\caption{Trend plots of three stock datasets, where the red dashed lines in the figures indicate the time points dividing the training set and the test set.}
	\label{fig:financial}
\end{figure}

\begin{table}[!htbp]
	\centering
	\caption{Volatility and risk indicators for three stocks.}
	\begin{tabular}{lccc}
		\toprule
		Stock & Daily Return Std. Dev. & Annualized Volatility & Max Drawdown \\
		\midrule
		GOOG &  0.0203 & 0.3216 & 0.4460 \\
		RCL  & 0.0522 & 0.8284 & 0.8347 \\
		GME  & 0.1248 & 1.9807 & 0.8832 \\
		\bottomrule
	\end{tabular}
	\label{tab:stock_volatility}
\end{table}

\newpage

\begin{table*}[!htbp]
	\centering
     {\setlength{\tabcolsep}{1pt}
	\caption{PICP for 95\% prediction intervals of the financial datasets.}
    \label{tab:picp_finance}
	\begin{tabular}{l|cccccc}
    \hline
    Dataset & $\beta$-NLL & Ensemble-NN & NGBoost & SampleNet & MDN & NE-GMM \\
    \hline
    GOOG & 0.926$\pm$0.018 & 0.941$\pm$0.027 & \textbf{0.959$\pm$0.016} & 0.471$\pm$0.061 & \textbf{0.942$\pm$0.017} & \textbf{0.946$\pm$0.011} \\
    RCL & 0.711$\pm$0.171 & 0.537$\pm$0.078 & \textbf{0.936$\pm$0.071} & 0.678$\pm$0.161 & 0.783$\pm$0.268 & 0.895$\pm$0.299 \\
    GME & 0.604$\pm$0.046 & 0.919$\pm$0.025 & 0.985$\pm$0.004 & 0.470$\pm$0.077 & 0.935$\pm$0.051 & \textbf{0.953$\pm$0.055} \\     
    \hline
	\end{tabular}}
\end{table*}

\begin{table*}[!htbp]
	\centering
    {\setlength{\tabcolsep}{2pt}
	\caption{MPIW for 95\% prediction intervals of the financial datasets.}
    \label{tab:mpiw_finance}
	\begin{tabular}{l|cccccc}
    \hline
    Dataset & $\beta$-NLL & Ensemble-NN & NGBoost & SampleNet & MDN & NE-GMM \\
    \hline
    GOOG & 0.273$\pm$0.005 & 0.315$\pm$0.037 & 0.717$\pm$0.011 & 0.329$\pm$0.057 & 0.324$\pm$0.030 & 0.296$\pm$0.014 \\
    RCL & 0.540$\pm$0.056 & 0.743$\pm$0.147 & 1.477$\pm$0.032 & 0.676$\pm$0.291 & 0.896$\pm$0.245 & 0.905$\pm$0.391 \\
    GME & 0.468$\pm$0.042 & 2.818$\pm$1.592 & 1.045$\pm$0.059 & 0.815$\pm$0.146 & 1.035$\pm$0.448 & 1.587$\pm$0.665 \\
    \hline
	\end{tabular}}
\end{table*}

\begin{table*}[!htbp]
	\centering
	\caption{Average training time (in seconds) of the financial datasets.}
	\begin{tabular}{l|cccccc}
		\hline
		Dataset & $\beta$-NLL  & Ensemble-NN & NGBoost & SampleNet & MDN & NE-GMM \\
		\hline
		GOOG   & 99     & 321   & 5.724    & 241& 124  & 127     \\ 
		RCL   & 117  & 213  & 3.722    & 144& 67   & 94      \\ 
		GME  & 97   & 988  & 2.243    & 134& 86   & 88      \\ \hline
	\end{tabular}
	\label{tab:cost_finance}
\end{table*}

\vskip 0.2in
\bibliography{ref}

\end{document}